\documentclass[dvipsnames]{article}


\usepackage{iclr2024_conference,times}


\usepackage[utf8]{inputenc} 
\usepackage[T1]{fontenc}    

\usepackage{xcolor} 
\definecolor{mydarkblue}{rgb}{0,0.08,0.45}
\definecolor{myredorange}{rgb}{0.8,0.33,0}

\usepackage[pagebackref=true,breaklinks=true,colorlinks,citecolor=mydarkblue,linkcolor=myredorange,bookmarks=false]{hyperref}


\usepackage{url}            
\usepackage{booktabs}       
\usepackage{amsfonts}       
\usepackage{nicefrac}       
\usepackage{microtype}      

\usepackage{graphicx}
\usepackage{lscape}
\usepackage{rotating}
\usepackage{tablefootnote}

\usepackage[ruled, vlined, linesnumbered]{algorithm2e}
\usepackage{algorithmic}
\usepackage[utf8]{inputenc}
\usepackage{float}
\usepackage{url}            
\usepackage{xspace}
\usepackage{amsthm, amsmath, mathtools, amsfonts, amssymb, dsfont, enumitem, mathabx}
\usepackage{bm, bbm}
\usepackage[mathscr]{euscript}
\usepackage{adjustbox}
\usepackage{footmisc}
\usepackage[capitalize,noabbrev]{cleveref}
\usepackage{multirow}
\usepackage{transparent}
\usepackage{caption}
\usepackage{subcaption}
\usepackage{tcolorbox}
\usepackage{wrapfig}
\usepackage{enumitem}
\usepackage[font=small]{caption}
\usepackage{bbm}
\usepackage{pifont}
\usepackage{cancel}
\newcommand{\xmark}{\ding{55}}%

\usepackage{titlesec}
\titlespacing{\section}{0pt}{2pt}{0pt}
\titlespacing{\subsection}{0pt}{2pt}{0pt}
\titlespacing{\subsubsection}{0pt}{2pt}{0pt}
\titlespacing{\paragraph}{0pt}{0pt}{5pt}
\setlength{\textfloatsep}{5pt}
\setlength{\floatsep}{5pt}
\hyphenpenalty 10000
\exhyphenpenalty 10000
\setlength{\intextsep}{0pt}%
\setlength{\belowdisplayskip}{0pt} 
\setlength{\belowdisplayshortskip}{0pt}
\setlength{\abovedisplayskip}{0pt} 
\setlength{\abovedisplayshortskip}{0pt}
\everypar{\looseness=-1}

\iclrfinalcopy

\usepackage{minitoc}
\setcounter{parttocdepth}{5}

\usepackage{thmtools, thm-restate}

\makeatletter
\def\thm@space@setup{%
  \thm@preskip=2pt
  \thm@postskip=2pt 
}
\makeatother

\theoremstyle{plain}
\theoremstyle{definition}

\theoremstyle{remark}


\usepackage{amsmath,amsfonts,bm}


















\def\1{\bm{1}}










\DeclareMathAlphabet{\mathsfit}{\encodingdefault}{\sfdefault}{m}{sl}
\SetMathAlphabet{\mathsfit}{bold}{\encodingdefault}{\sfdefault}{bx}{n}












\newcommand{\R}{\mathbb{R}}



\DeclareMathOperator*{\argmax}{arg\,max}
\DeclareMathOperator*{\argmin}{arg\,min}

\definecolor{lightorange}{HTML}{ff7f2a}
\definecolor{lighterorange}{HTML}{ffe6d5}
\newtcolorbox{summarybox}{colback=lighterorange,colframe=lightorange}

\newcommand{\E}[2]{\mathbb{E}_{#1}{\left[#2\right]}}

\newcommand{\kl}[2]{D_{\mathrm{KL}}(#1\ ||\ #2)}
\newcommand{\f}[2]{D_{f}(#1\ ||\ #2)}
\DeclarePairedDelimiter\abs{\lvert}{\rvert}%

\newcommand{\reb}[1]{\textcolor{black}{#1}}

\renewcommand{\abs}[1]{\left|#1\right|}
\newcommand{\set}[1]{\left\{#1\right\}}

\renewcommand{\S}{\mathcal{S}}
\newcommand{\A}{\mathcal{A}}

\renewcommand{\hat}{\widehat}
\renewcommand{\S}{\mathcal{S}}

\newcommand{\bellman}{\mathcal{T}_r^{\pi}}
\newcommand{\primalQ}{\texttt{primal-Q}\xspace}
\newcommand{\primalV}{\texttt{primal-V}\xspace}
\newcommand{\dualQ}{\texttt{dual-Q}\xspace}
\newcommand{\dualV}{\texttt{dual-V}\xspace}
\newcommand{\tdV}{\delta_V}

\newcommand{\Timit}{\mathcal{T}_{r^{\text{imit}}}}
\newcommand{\fdvl}{\texttt{$f$-DVL}\xspace}
\newcommand{\dmix}{d^S_\text{mix}(s,a)}
\newcommand{\demix}{d^{E,S}_\text{mix}(s,a)}

\newcommand{\highlight}[1]{\textcolor{RoyalBlue}{\textbf{#1}}}

\everypar{\looseness=-1}

\begin{document}
\title{Dual RL: Unification and New Methods for Reinforcement and Imitation Learning}

\author{Harshit Sikchi$^1$,  Qinqing Zheng $^2$,  Amy Zhang$^{1,2}$,  Scott Niekum$^3$ \\
$^1$ The University of Texas at Austin, $^2$ FAIR, Meta AI\\
$^3$ University of Massachusetts Amherst\\
\texttt{\{hsikchi\}@utexas.edu} \\
}


\doparttoc 
\faketableofcontents 

\maketitle

\begin{abstract}

The goal of reinforcement learning (RL) is to find a policy that maximizes the expected cumulative return. It has been shown that this objective can be represented as an optimization problem of state-action visitation distribution under linear constraints. The dual problem of this formulation, which we refer to as \textit{dual RL}, is unconstrained and easier to optimize. In this work, we first cast several state-of-the-art offline RL and offline imitation learning (IL) algorithms as instances of dual RL approaches with shared structures. Such unification allows us to identify the root cause of the shortcomings of prior methods. For offline IL, our analysis shows that prior methods are based on a restrictive coverage assumption that greatly limits their performance in practice. To fix this limitation, we propose a new discriminator-free method ReCOIL that learns to imitate from arbitrary off-policy data to obtain near-expert performance. For offline RL, our analysis frames a recent offline RL method XQL in the dual framework, and we further propose a new method $f$-DVL that provides alternative choices to the Gumbel regression loss that fixes the known training instability issue of XQL. The performance improvements by both of our proposed methods, ReCOIL and $f$-DVL, in IL and RL are validated on an extensive suite of simulated robot locomotion and manipulation tasks.\\

\centering{\textbf{Project page (Code and Videos):} \href{https://hari-sikchi.github.io/dual-rl/}{\color{myredorange}hari-sikchi.github.io/dual-rl/}}
\end{abstract}


\section{Introduction}
A number of deep Reinforcement Learning (RL) algorithms optimize a regularized policy learning objective using approximate dynamic programming (ADP)~\citep{bertsekas1995neuro}. Popular off-policy temporal difference algorithms spanning both imitation learning~\citep{kostrikov2018discriminator,ni2021f} and RL~\citep{haarnoja2018soft,janner2019trust, sikchi2022learning,hafner2023mastering} exemplify this class. 
As we discuss in Section~\ref{sec:preliminaries}, one way to develop a principled off-policy algorithm is to ensure unbiased estimation of the on-policy policy gradient using off-policy data~\citep{nachum2020reinforcement}. Unfortunately, many classical off-policy algorithms do not guarantee this property, resulting in issues like training instability and over-estimation of the value function~\citep{fu2019diagnosing,fujimoto2018addressing,baird1995residual}. To obtain high learning performance, these algorithms require that most data be nearly on-policy, otherwise require special algorithmic treatments (e.g., importance sampling~\citep{precup2001off}, layer normalization~\citep{ball2023efficient}, prioritized sampling~\citep{vecerik2017leveraging}) to avoid the aforementioned issues. Recently, there have been developments leading to new off-policy algorithms with improved performance for RL~\citep{kumar2020conservative,garg2021iq,kostrikov2021offline} and IL~\citep{zhu2020off,ma2022smodice,garg2021iq,florence2022implicit}. These methods are derived via a variety of mathematical tools and attribute their success to different aspects. It remains an open question if we can inspect these algorithms under a unified framework to understand their limitations, and subsequently propose better methods.

In this work, we consider a specific formulation for RL that writes the performance of a policy as a convex program with linear constraints~\citep{manne1960linear}. 
This convex program can be converted into unconstrained forms using Lagrangian duality, which is more amenable for numerical optimization. We refer to the class of approaches that admit the dual formulations as \emph{Dual RL}. Dual RL approaches naturally provide unbiased estimation of the on-policy policy gradient using off-policy data, in a principled way. They avoid explicit importance sampling that leads to high variance and ensures training stability and convergence~\citep{tsitsiklis1996analysis}. Related approaches in this space have often been referred to as DICE (DIstribution Correction Estimation) methods in previous literature~\citep{nachum2019algaedice,kostrikov2019imitation,lee2021optidice,ma2022smodice, zhang2020gendice}. We note that the linear programming formulation for the RL objective has been used and studied in \cite{manne1960linear, denardo1970linear, de1967linear,borkar1988convex,malek2014linear} and the general duality framework for regularized RL was first introduced in~\citet{nachum2020reinforcement}.

Our \textit{first} contribution is to extend the work of \citet{nachum2020reinforcement} and show that
many recent algorithms in deep RL and IL can all be viewed 
as different instantiations of dual problems for regularized policy optimization, see Table~\ref{tab:works} for the complete list. These algorithms have been motivated from a variety of perspectives and differing derivations. For example, XQL~\citep{garg2023extreme} focuses on introducing Gumbel regression into RL, 
CQL~\citep{kumar2020conservative} aims at learning a pessimistic $Q$ function, IQLearn~\citep{garg2021iq} and OPOLO~\citep{zhu2020off} use the change of variables for IL, and IBC~\citep{florence2022implicit} uses a contrastive loss for imitation learning, but as we show all can actually be derived from the dual formulation.

\textit{Second}, the dual unification in IL reveals an important shortcoming of prior methods that learn to imitate the expert by leveraging arbitrary off-policy data.
Prior work~\citep{ma2022smodice,zhu2020off,kim2022demodice} imposes a coverage assumption (the suboptimal data covers the visitations of the expert data) and learn a density ratio between suboptimal and expert data via a discriminator to use it for downstream learning. 
In an offline setting, with limited data and coverage, learning a density ratio between suboptimal and expert can be challenging, and the inaccuracies of the discriminator can compound in downstream RL, negatively affecting resulting policy performance. We show that by a simple modification to the dual formulation, we can get away from this limitation. In Section~\ref{sec:new_il_method}, we present $\texttt{ReCOIL}$, a simple, theoretically principled, and discriminator-free imitation learning method from arbitrary off-policy data. We empirically demonstrate the failure of previous IL methods based on the coverage assumption in a number of MuJoCo environments and show substantial performance improvements of $\texttt{ReCOIL}$ in Section~\ref{sec:result}.   

\textit{Third}, the presented unification also provides us with a useful tool to examine the limitation for a recent offline RL method, XQL~\citep{garg2023extreme}. XQL's success was originally attributed to better modeling of Bellman errors using Gumbel regression. On the other hand, XQL also suffers from training instability, also caused by Gumbel regression. By situating the implicit policy improvement algorithms like XQL in the dual RL framework, in Section~\ref{sec:new_rl_method} we are able to propose a family of implicit algorithms $f$-Dual V Learning ($f$-\texttt{DVL}), which successfully addresses the training instability issue. The empirical experiments on the D4RL benchmarks establish the superior performance of $f$-\texttt{DVL}, see Section~\ref{sec:result}. \looseness=-1

\begin{table*}[t]
\vspace{-10pt}
    \centering
    \scriptsize
    \resizebox{0.85\textwidth}{!}{
    \begin{tabular}{c|cccccc} 
    \toprule
    &\multicolumn{1}{c}{\textbf{Method}} & \textbf{dual-Q/V}  & \textbf{Gradient} & {\textbf{Objective}} & \textbf{Off-Policy Data} \\
    \midrule
    \vspace{0.5mm}    
    \multirow{4}{*}{RL}& AlgaeDICE, GenDICE, \textit{CQL}& $Q$ &  semi  & regularized RL  &  Arbitrary \\
    & OptiDICE  &$V$ & full  & regularized RL  & Arbitrary\\
     & \textit{XQL}, REPS, $f$-\textbf{DVL} & $V$&  semi  & regularized RL &  Arbitrary \\
      & VIP, GoFAR & $V$ &  full  & regularized RL &  Arbitrary \\
    &Logistic Q-learning & $QV$\footnotemark[1] & full   & regularized RL  &  \xmark \\
     \midrule
     \multirow{9}{*}{IL}& \textit{IQLearn, IBC} & $Q$ & semi  & $D_f(\rho^\pi\|\rho^E)$  &  Expert-only   \\
    & \textit{OPOLO, OPIRL} & $Q$ & semi & $D_{kl}(\rho^\pi\|\rho^E)$   & Arbitrary  \\
    & SMODICE & $V$ & full & $D_{kl}(\rho^\pi\|\rho^E)$  & Arbitrary \\
    & DemoDICE, LobsDICE & $V$ & full & $D_{kl}(\rho^\pi\|\rho^E)+\alpha D_{kl}(\rho^\pi\|\rho^R) $  & Arbitrary  \\
     & P$^2$IL & $QV$\footnotemark[1]  & full & $D_{C}(\rho^\pi\|\rho^E)$\footnotemark[1]  &\xmark   \\
    & \textbf{ReCOIL-Q} & $Q$  & full & $D_{f}(\rho^\pi_{mix}\|\rho^{E,R}_{mix})$&   Arbitrary  \\[0.1cm] 
    & \textbf{ReCOIL-V}  & $V$  & full & $D_{f}(\rho^\pi_{mix}\|\rho^{E,R}_{mix})$&   Arbitrary \\
    \bottomrule
    \end{tabular}
    }
    \vspace{-5pt}
    \caption{A number of recent works can be studied together under the unified umbrella of \textbf{dual-RL}. These methods are instantiations of dual-RL with a choice of update strategy, objective, constraints, and their ability to handle off-policy data. \textbf{Bold} names correspond to the methods proposed in the paper and \textit{Italic} names correspond to methods that aren't yet known to be dual-approaches. }
    \label{tab:works}
\end{table*}
\footnotetext[1]{These methods use a different regularizer. More details in Appendix~\ref{ap:LQL_P2IL}.}

\section{Related Work}
\label{sec:related}

\paragraph{Off-Policy Methods for IL} Imitation learning has benefited greatly from using off-policy data to improve learning performance~\citep{kostrikov2018discriminator,agarwal2020imitative,zhu2020off,ni2021f,sikchi2022ranking}. Often, replacing the on-policy expectation common in most Inverse RL formulations~\citep{ziebart2008maximum,swamy2021moments} by expectation under off-policy samples, which is unprincipled, has led to gains in sample efficiency~\citep{kostrikov2018discriminator}. Previous works have proposed a solution in the dual RL space for principled off-policy imitation but is based on a restrictive coverage assumption~\citep{ma2022smodice,zhu2020off, kim2022demodice} which requires estimating a density ratio using a discriminator and further limit themselves to matching a particular $f$-divergence. In this work, we eliminate this assumption and allow for generalization to all $f$-divergences, presenting a principled off-policy discriminator-free approach to imitation. 

\paragraph{Off-Policy Methods for RL} Off-policy RL methods promise a way to utilize data collected by arbitrary behavior policies to aid in learning an optimal policy and thus are advantageous over on-policy methods.   This promise falls short, as previous off-policy algorithms are plagued with a number of issues such as overestimation of the value function, training instability, and various biases~\citep{thrun1993issues,fu2019diagnosing,fujimoto2018addressing,kumar2019stabilizing}. A common cause for a number of these issues is \emph{distribution mismatch}. As we shall discuss later, the RL objective requires on-policy samples but is often estimated by off-policy samples in practice. Prior works have proposed fixing the distribution mismatch by using importance weights~\citep{precup2000eligibility}, which can lead to high variance policy gradients or ignoring the distribution mismatch completely~\citep{haarnoja2018soft,fujimoto2018addressing, sun2022optimistic,ji2023seizing,chen2021randomized}. Unfortunately, these approaches do not carry over well to the offline setting. For example, when deploying the policy online, the overestimation bias can be corrected by the environment feedback, which is infeasible for offline RL.A number of solutions exist for controlling overestimation in prior work---$f$-divergence regularization to the training distribution~\citep{wu2019behavior,fujimoto2019off}, support regularization~\citep{singh2022offline}, implicit maximization~\citep{kostrikov2021offline}, Gumbel regression~\citep{garg2023extreme} and learning a Q function that penalizes OOD actions~\citep{kumar2020conservative}. Dual-RL methods consider a regularized RL setting suitable for offline RL as well as fixing the distribution mismatch issue in a principled way.

\section{Preliminaries}
\label{sec:preliminaries}
We consider an infinite horizon discounted Markov Decision Process denoted by the tuple $\mathcal{M} = (\mathcal{S}, \mathcal{A}, p, r, \gamma, d_0)$, where $\S$ is the state space, $\A$ is the action space, $p$ is the transition probability function, $r: \S \times \A \rightarrow \R $ is the reward function, $\gamma \in (0, 1)$ is the discount factor, and $d_0$ is the distribution of initial state $s_0$. Let $\Delta(\A)$ denote the probability simplex supported on $\A$.
The goal of RL is to find a policy $\pi: \S  \rightarrow \Delta(\A)$ that maximizes the expected return:  $\E{\pi}{\sum_{t=0}^\infty \gamma^t r(s_t, a_t)}$, where we use $\mathbb{E}_{\pi}$ to denote the expectation under the distribution induced by $a_t\sim\pi(\cdot|s_t), s_{t+1}\sim p(\cdot|s_t,a_t)$. We also define the discounted state-action visitation distribution $d^\pi(s,a)=\reb{(1-\gamma)}\pi(a|s)\sum_{t=0}^\infty \gamma^t P(s_t=s|\pi)$. The unique stationary policy that induces a visitation $d(s,a)$ is given by $\pi(a|s) = d(s,a)/\sum_a d(s,a)$. We will use $d^O$ and $d^E$ to denote the visitation distributions of the behavior policy of the offline dataset and the expert policy, respectively. \reb{$f$-divergences are denoted by $D_f$ and measure the distance between two distributions. For a more formal overview of the above concepts, refer to Appendix~\ref{ap:dual_rl_review}.}

\paragraph{Value Functions and Bellman Operators} Let $V^\pi$: $\S\rightarrow \mathbb{R}$ be the state value function of $\pi$. 
$V^\pi(s)$ is the expected return when starting from $s$ and following $\pi$: $V^\pi(s) = \E{\pi}{\sum_{t=0}^\infty \gamma^t r(s_t, a_t) | s_0 = s}.$
Similarly, let $Q^\pi$ : $\S \times \A \rightarrow  \mathbb{R}$ be the state-action value function of $\pi$, such that
$ Q^\pi(s, a) = \E{\pi}{\sum_{t=0}^\infty \gamma^t r(s_t, a_t) | s_0 = s, a_0 = a}$.
\looseness=-1
Let $V^*$ and $Q^*$ denote the value functions corresponding to an optimal policy $\pi^*$. Let $\bellman$ be the Bellman operator with policy $\pi$ and reward function $r$ such that
$\bellman Q(s, a) = r(s, a) + \gamma \E{s' \sim p(\cdot | s, a), a' \sim \pi(\cdot | s') }{ Q(s',a') }$.
We also define the Bellman operator for the state value function 
$\mathcal{T}_r V(s,a) = r(s,a) + \gamma \E{s' \sim p(\cdot|s,a)}{V(s')}$.



\subsection{Reinforcement Learning via Lagrangian Duality}
\label{sec:dual_rl}
RL optimizes the expected return of a policy. We consider the linear programming
formulation of the expected return~\citep{manne1960linear}, to which we can apply Lagrangian duality or Fenchel-Rockfeller duality to obtain corresponding constraint-free problems. We now review this framework, first introduced by~\citet{nachum2020reinforcement}\footnote{We use Lagrangian duality instead of Fenchel-Rockfeller duality for ease of exposition.}. Consider the regularized policy learning problem 
\begin{equation}
\label{eq:reg_rl_main}
\max_\pi \, J(\pi)=\mathbb{E}_{d^\pi(s,a)}[r(s,a)] -\alpha \f{d^\pi(s,a)}{d^O(s,a)},
\end{equation}
where 
$\f{d^\pi(s,a)}{d^O(s,a)}$ is a conservatism regularizer that encourages the visitation distribution of $\pi$ to stay close to some distribution $d^O$, and
$\alpha$ is a temperature parameter that balances the expected return and the conservatism.
An interesting fact is that $J(\pi)$ can be rewritten as a convex problem that searches for a visitation distribution that satisfies the \textit{Bellman-flow} constraints. We refer to this form as \primalQ:
\begin{equation}
\label{eq:primal_rl_q_main}
\begin{aligned}
& \colorbox{RoyalBlue!15}{\primalQ} \;  \max_\pi J(\pi)  = \max_\pi \big[\max_{d} \; \mathbb{E}_{d(s,a)}[r(s,a)]-{\alpha}\f{d(s,a)}{d^O(s,a)}\\
&   \hskip10pt \text{s.t} \resizebox{0.85 \hsize}{!}{$\; d(s,a)=(1-\gamma)d_0(s).\pi(a|s)+\gamma \textstyle \sum_{s',a'} d(s',a')p(s|s',a')\pi(a|s), \; \forall s \in \S, a \in \A \big].$}  
\end{aligned}
\end{equation}
We can convert this to an unconstrained problem with dual variables $Q(s, a)$ defined for all $s, a \in \S \times \A$ by applying Lagrangian duality and the convex conjugate, giving us the \dualQ formulation:
\begin{align}
    \label{eq:dual_q_original_equation}
    \colorbox{Goldenrod!30}{\dualQ} & \; 
\resizebox{0.8\hsize}{!}{$\max_{\pi} \min_{Q} (1-\gamma)\E{s\sim d_0,a \sim \pi(s)}{Q(s,a)} +{\alpha}\E{(s,a)\sim d^O}{f^*\left(\left[ \bellman Q(s,a)-Q(s,a)\right]/ \alpha \right)}$},
\end{align}
where $f^*$ is the convex conjugate of $f$. 
In fact, one can note that Problem~\eqref{eq:primal_rl_q_main} is overconstrained---the constraints already determine the unique solution $d^\pi$, rendering the inner maximization w.r.t $d$ unnecessary. Therefore, we can relax the constraints to obtain another problem with the same optimal solution $\pi^*$ and $d^*$, which we call \primalV below:
\begin{equation}
  \begin{aligned}
\label{eq:primal_rl_v_main}
\colorbox{RoyalBlue!15}{\primalV} &~~
    \max_{d \geq 0}  \; \mathbb{E}_{d(s,a)}[r(s,a)]-\alpha\f{d(s,a)}{d^O(s,a)}\\
   &\text{s.t} \; \textstyle \sum_{a\in\mathcal{A}} d(s,a)=(1-\gamma)d_0(s)+\gamma \sum_{(s',a') \in \S \times \A } d(s',a') p(s|s',a'), \; \forall s \in \S.
\end{aligned}  
\end{equation}
 Similarly, we consider the Lagrangian dual of \eqref{eq:primal_rl_v_main}, with dual variables $V(s)$ defined for all $s \in S$:
\begin{align}
\label{eq:dual-V}
\colorbox{Goldenrod!30}{\dualV} \;
\min_{V} {(1-\gamma)}\E{s \sim d_0}{V(s)} +{\alpha}\E{(s,a)\sim d^O}{f^*_p\left(\left[\mathcal{T}V(s,a)-V(s))\right]/
\alpha\right)},
\end{align}
where $f^*_p$ is a variant of $f^*$ defined as $f^*_p(x)=\max(0,{f'}^{-1}(x))(x)-{f(\max(0,{f'}^{-1}(x) ))}$. 
Such modification is to cope with the nonnegativity constraint $d(s, a) \geq 0$ in \primalV. Note that in both cases for \dualQ and \dualV, the optimal solution is the same as their primal formulations due to strong convexity. See Appendix~\ref{ap:dual_rl_review} for a detailed review, connections between Fenchel and Lagrangian duality, and discussion of computing $\pi^*$ from $V^*$ for the \dualV formulation.

\emph{Remarks.} The dual formulations have a few appealing properties. (a) They allow us to transform constrained distribution-matching problems into unconstrained forms w.r.t previously logged data. (b)  One can show that the gradient of \dualQ w.r.t $\pi$, when $Q$ is optimized for the inner problem, is the on-policy policy gradient computed by off-policy data~\citep{nachum2020reinforcement}. This property is key to relieving the instability or divergence issue in many off-policy learning algorithms~\citep{thrun1993issues,fu2019diagnosing,fujimoto2018addressing}. 

\section{A unified perspective on RL and IL through Duality }
\label{sec:unification}

In this section, we discuss how a number of recent RL and IL algorithms can be cast as dual-RL methods.  We restrict ourselves to demonstrating this equivalence on a subset of methods in \textit{offline} RL and IL settings from Table~\ref{tab:works} whose shortcomings we study via the unified viewpoint. In later sections, we present approaches for addressing these shortcomings in both RL and IL. For the interested reader, a complete discussion on Table~\ref{tab:works}, particularly how algorithms like implicit behavior cloning, CQL and OPOLO can be cast as dual-RL, can be found in Appendix~\ref{ap:complete_unification}. We further discuss the extension of dual-RL formulation to the online setting in Appendix~\ref{ap:offline_to_online}. All proofs are deferred to the appendix. 

\subsection{Dual Formulation for existing imitation learning algorithms}
\label{sec:connections_to_il}

We first consider the standard imitation learning setup where the agent is given a set of expert demonstrations, i.e. state-action trajectories, and does not have access to environment reward. We consider two possible offline IL settings --- 1) only expert demonstrations are available and 2) we additionally have access to suboptimal transitions from the environment. Intuitively, these suboptimal transitions should aid in better matching the expert behavior. 

\paragraph{a. Offline IL with Expert Data Only}
\label{main:offline_il_expert}
Imitation learning, or occupancy matching~\citep{ghasemipour2020divergence} is a direct consequence of the regularized RL problem (Eq.~\ref{eq:reg_rl_main}) when the reward is set to be $0$ uniformly across the state-action space and the regularization distribution and $d^O$ are set to be the expert visitation distribution $d^E$. The corresponding \dualQ under these conditions simplifies to:
\begin{equation}
\label{eq:dual-Q-imit}
\resizebox{0.94\textwidth}{!}{
    \colorbox{Goldenrod!30}{\dualQ}\,
    $\max_{\pi}\min_{Q} (1-\gamma)\E{d_0(s),\pi(a|s)}{Q(s,a)} +\alpha\E{s,a\sim d^E}{f^*\left(\left[\mathcal{T}^\pi_0 Q(s,a)-Q(s,a)\right]/
    \alpha\right)}$.
}
\end{equation}
Interestingly, this reduction directly leads us to IQLearn~\citep{garg2021iq}, which was derived using a change of variables in the form of an inverse backup operator. 
\begin{restatable}[]{proposition}{iqlearn}
\label{lemma:iqlearn}
 IQLearn~\citep{garg2021iq} is an instance of $\texttt{dual-Q}$ using the semi-gradient \footnote{For an overview of semi-gradient vs full-gradient methods please refer to Appendix~\ref{ap:semi_gradient_info}.} update rule with a (soft) Bellman operator, where $r(s,a)=0 \, \forall s \in \mathcal{S}, a \in \mathcal{A}$, $d^O=d^E$.
 \vspace{-2mm}
\end{restatable}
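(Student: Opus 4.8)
The plan is to start from the simplified \dualQ objective for imitation in Eq.~\eqref{eq:dual-Q-imit} and massage it into the exact optimization problem that IQLearn solves, thereby exhibiting IQLearn as a particular instantiation. First I would recall the IQLearn objective: IQLearn performs inverse RL by optimizing over $Q$ (after a change of variables that eliminates the reward via the inverse soft-Bellman operator $r = Q - \gamma \mathcal{T}^{\pi}_{V} Q$ — equivalently, treating $Q$ as the free variable) the objective
\begin{equation*}
\max_{\pi}\min_{Q}\ \E{s,a\sim d^E}{\phi\!\left(Q(s,a) - \gamma \E{s'}{V^{\pi}(s')}\right)} - (1-\gamma)\E{s\sim d_0}{V^{\pi}(s_0)},
\end{equation*}
where $\phi$ is the concave function $-f^*$ associated with the chosen statistical distance and $V^{\pi}(s) = \E{a\sim\pi(\cdot|s)}{Q(s,a)}$ (in the soft case with an added entropy term). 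The proof is then essentially an identification: match the two terms of Eq.~\eqref{eq:dual-Q-imit} with the two terms of the IQLearn objective, showing they coincide up to sign conventions and the particular choice of $f$.

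The key steps, in order, would be: (1) Substitute $r(s,a)=0$ and $d^O = d^E$ into the general \dualQ formula~\eqref{eq:dual_q_original_equation} to obtain Eq.~\eqref{eq:dual-Q-imit}, noting that with zero reward the Bellman operator $\bellman$ becomes $\mathcal{T}^{\pi}_0 Q(s,a) = \gamma\,\E{s'\sim p(\cdot|s,a),\,a'\sim\pi(\cdot|s')}{Q(s',a')}$. (2) Rewrite the inner-expectation argument $[\mathcal{T}^{\pi}_0 Q(s,a) - Q(s,a)]/\alpha$ and observe that in the IQLearn change of variables this quantity is precisely $-r(s,a)/\alpha$ where $r$ is the implied reward; hence $f^*$ of this argument recovers the IQLearn reward-regularizer term. (3) Identify the linear term $(1-\gamma)\E{d_0,\pi}{Q(s,a)}$ with IQLearn's $(1-\gamma)\E{s_0\sim d_0}{V^{\pi}(s_0)}$ term by using the fact that $\E{a\sim\pi(\cdot|s)}{Q(s,a)} = V^{\pi}(s)$ (this is where the telescoping/value-function identity enters, and where the "soft" version attaches the entropy bonus). (4) Finally, invoke the semi-gradient qualifier: IQLearn, as implemented, treats the $Q$ inside the bootstrap target $\mathcal{T}^{\pi}_0 Q$ as a stop-gradient (target network), which is exactly the semi-gradient update rule for \dualQ — so the correspondence is with the semi-gradient variant, not the full-gradient one. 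I would point to Appendix~\ref{ap:semi_gradient_info} for the precise meaning of this distinction.

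I expect the main obstacle to be bookkeeping around the change of variables and the handling of the "soft" (entropy-regularized) Bellman operator: IQLearn is naturally stated with a soft Bellman backup and an implicit $V^{\pi}(s) = \mathrm{soft}\max_a Q(s,a)$ or $\E{\pi}{Q} + \mathcal{H}(\pi)$, whereas Eq.~\eqref{eq:dual-Q-imit} is written with a generic policy expectation; I need to check that the conservatism regularizer $D_f(d^\pi\|d^E)$ with the appropriate $f$ reproduces IQLearn's specific statistical-distance choice (IQLearn allows a family of such choices), and that the $\max_\pi$ over policies together with the stationarity relation $\pi(a|s) = d(s,a)/\sum_{a'} d(s,a')$ is consistent on both sides. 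A secondary subtlety is that IQLearn is sometimes presented with the roles of $\min$/$\max$ and the sign of $\phi$ swapped relative to the convex-conjugate convention here; I would fix a single convention up front (stating $f^*$ and its domain) and carry it through, and relegate the full term-by-term verification, including the precise form of $f$ for each IQLearn instance, to the appendix as indicated by the "All proofs are deferred to the appendix" remark.
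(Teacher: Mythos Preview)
Your proposal is correct and follows the same approach as the paper: substitute $r=0$ and $d^O=d^E$ into the general \dualQ formula~\eqref{eq:dual_q_original_equation} to obtain Eq.~\eqref{eq:dual-Q-imit}, then identify this with the IQLearn objective via the inverse-Bellman change of variables. The paper's own treatment is actually lighter than yours---it simply states that this substitution ``directly leads us to IQLearn'' without carrying out the term-by-term matching you outline in steps (2)--(4); your additional bookkeeping around $V^\pi$, the soft operator, and the semi-gradient qualifier is sound and would make the argument more explicit than what appears in the paper.
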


\paragraph{b. Offline IL with Additional Suboptimal Data}
Unfortunately, the dual-RL formulations above offer no way to naturally incorporate additional suboptimal data $d^S$. To remedy this, prior methods have relied on careful selection of the $f$-divergence and a \emph{coverage assumption} to craft an off-policy objective~\citep{zhu2020off,hoshino2022opirl,ma2022smodice,kim2022demodice,kim2022lobsdice}. More precisely, under the \emph{coverage assumption} that the suboptimal data visitation covers the expert visitation ($d^S>0$ wherever $d^E>0$)~\citep{ma2022smodice}, and with the KL divergence, we obtain the following simplification for the imitation objective:
\scalebox{0.98}{
$\begin{aligned}
  \kl{d(s,a)}{d^E(s,a)} &= \E{s,a\sim d(s,a)}{\log \frac{d(s,a)}{d^E(s,a)}}= \E{s,a\sim d(s,a)}{\log \frac{d(s,a)}{d^S(s,a)}+\log \frac{d^S(s,a)}{d^E(s,a)}}\\
  &= \E{s,a\sim d(s,a)}{\log \frac{d^S(s,a)}{d^E(s,a)}}+\kl{d(s,a)}{d^S(s,a)}.
\end{aligned}$}
 The final objective now resembles \primalQ when $r(s,a)= - \log\tfrac{d^S(s,a)}{d^E(s,a)}$ and correspondingly we obtain the following \dualQ problem using Eq.~\ref{eq:dual_q_original_equation}:
\begin{equation}
\label{eq:dual_q_il_coverage}
\resizebox{0.92\textwidth}{!}{
    \colorbox{Goldenrod!30}{\dualQ}\,
    $\max_{\pi(a|s)}\min_{Q(s,a)} (1-\gamma)\E{\rho_0(s),\pi(a|s)}{Q(s,a)}+\E{s,a\sim d^S}{f^*(\mathcal{T}^\pi_{r^{imit}}Q(s,a)-Q(s,a))}$,
}
\end{equation}
where $\Timit$ denote Bellman operator under the \emph{pseudo-reward} function $r^\text{imit}(s,a) = - \log\tfrac{d^S(s,a)}{d^E(s,a)}$. {This \reb{objective} also allows us to cast IL method OPOLO~\citep{zhu2020off} in the \dualQ framework (see Appendix~\ref{ap:off_policy_imitation_coverage})}. The pseudo-reward is a logarithmic density ratio learned using a discriminator and is later used for policy learning by optimizing Equation~\ref{eq:dual_q_il_coverage}. Density ratio learning is difficult \reb{in a limited data regime} as well as when the expert and suboptimal data share low coverage, and errors in learned discriminators can cascade for RL training and deteriorate the performance of output policy. We show how a simple modification to the imitation objective can allow us to relax the coverage assumption and propose a discriminator-free IL method that learns performant policies from arbitrary suboptimal data,  see Section~\ref{sec:new_il_method}.

\subsection{Dual Formulation for existing reinforcement learning algorithms}
\label{sec:connections_to_rl}
Now that we have seen how IL can be understood as a special case of the full regularized RL objective, we consider the full objective in Eq.~\eqref{eq:reg_rl_main}.  
Regularized policy learning, in its various forms~\citep{nachum2019algaedice, wu2019behavior}, is a natural objective for offline RL algorithms, preventing the policy from incorrectly deviating out-of-distribution by regularizing against the offline data visitation. 
\reb{However, implicit policy improvement (XQL~\citep{garg2023extreme}\reb{, IQL~\citep{kostrikov2021offline}}), one of the most successful classes of offline RL methods which uses in-distribution samples to pessimistically estimate the greedy improvement to the $Q$-function, has evaded connections to regularized policy optimization. }Proposition~\ref{thm:XQL} shows, perhaps surprisingly, that XQL can be cast as a dual of regularized policy learning, concretely as a \dualV problem.


\begin{restatable}[]{proposition}{xql}
\label{thm:XQL}
XQL is an instance of \dualV under the semi-gradient update rule,
where the $f$-divergence is the reverse Kullback-Liebler divergence,
and $d^O$ is the offline visitation distribution. 
\vspace{-2mm}
\end{restatable}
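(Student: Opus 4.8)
The plan is to start from the \dualV formulation in Eq.~\eqref{eq:dual-V}, specialize the $f$-divergence to the reverse KL, i.e. $f(x) = -\log x$ (equivalently $D_f(d^\pi\|d^O) = \KL{d^O}{d^\pi}$ up to the standard convention), and show that the resulting objective, together with the associated semi-gradient update and the recipe for recovering $\pi^*$ from $V^*$, coincides exactly with the XQL algorithm of \citet{garg2023extreme}. First I would compute the relevant convex-analytic objects for reverse KL: the convex conjugate $f^*$ and then the nonnegativity-corrected variant $f^*_p(x) = \max(0,{f'}^{-1}(x))\cdot x - f(\max(0,{f'}^{-1}(x)))$ that appears in \dualV. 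For $f(x)=-\log x$ one has $f'(x) = -1/x$, so ${f'}^{-1}(x) = -1/x$, which is positive exactly when $x<0$; carrying this through, $f^*_p$ becomes (up to constants) the exponential function, so that $\alpha f^*_p((\mathcal{T}V(s,a)-V(s))/\alpha)$ turns into an exponential of the Bellman residual $\mathcal{T}V(s,a)-V(s)$ scaled by $1/\alpha$. Substituting into Eq.~\eqref{eq:dual-V} gives an objective of the form $(1-\gamma)\E{s\sim d_0}{V(s)} + \alpha\E{(s,a)\sim d^O}{\exp((\mathcal{T}V(s,a)-V(s))/\alpha)}$ (modulo additive constants), which I would then identify with the Gumbel / log-sum-exp regression objective that XQL minimizes — this is precisely the ``extreme $Q$-learning'' loss, since the minimizer of an exponential-plus-linear loss is a log-partition (soft-max) of the targets.

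Next I would handle the two remaining pieces of the correspondence. (i) The \emph{semi-gradient} qualifier: XQL, like IQL, forms its target $\mathcal{T}V(s,a)$ (or the $Q$-target) by treating the bootstrapped value as a fixed quantity and not differentiating through it; I would point out that this is exactly the semi-gradient version of the update for Eq.~\eqref{eq:dual-V}, referencing the semi- vs.\ full-gradient discussion in Appendix~\ref{ap:semi_gradient_info}, and note that $\mathcal{T}V(s,a) = r(s,a) + \gamma \E{s'\sim p(\cdot|s,a)}{V(s')}$ with a stop-gradient on $V(s')$ reproduces XQL's value-update target. (ii) Recovering the policy: in the \dualV formulation the optimal visitation ratio is $d^*(s,a)/d^O(s,a) = \max(0, {f'}^{-1}((\mathcal{T}V^*(s,a)-V^*(s))/\alpha))$, which for reverse KL is proportional to $\exp((\mathcal{T}V^*(s,a)-V^*(s))/\alpha)$; since $\pi^*(a|s) \propto d^*(s,a)$, this is an advantage-weighted (exponential-tilt) policy extraction, matching XQL's use of advantage-weighted regression for the policy step. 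Assembling (i) and (ii) with the objective computed above yields that running \dualV with reverse KL and $d^O = $ offline visitation, under the semi-gradient update, is algorithmically XQL.

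I expect the main obstacle to be the careful bookkeeping around the nonnegativity-corrected conjugate $f^*_p$ and the domain/sign conditions it introduces — in particular verifying that for reverse KL the $\max(0,\cdot)$ truncation is either vacuous or reproduces exactly the clipping present in XQL, and tracking the additive constants and the role of $\alpha$ so that the match is not merely ``up to a monotone reparametrization'' but genuinely the same loss and same fixed point. A secondary subtlety is stating precisely what ``is an instance of'' means: I would be explicit that the claim is about the value-learning loss, the target construction, and the policy-extraction rule jointly, so that the reader sees the equivalence is structural and not just a loose analogy. Once the conjugate computation is pinned down, the rest is substitution into Eq.~\eqref{eq:dual-V} and pattern-matching against the XQL update equations, which I would relegate to the appendix.
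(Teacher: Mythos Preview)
Your overall strategy---compute the convex conjugate for the chosen divergence, substitute into Eq.~\eqref{eq:dual-V}, apply the semi-gradient rule to the target, and match the resulting $V$-update and policy-extraction to XQL---is exactly what the paper does. But there is a concrete error in the computation that would make the proof fail as written.

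You take the generator for ``reverse KL'' to be $f(x)=-\log x$. In the paper's convention (Table~\ref{tbl:div}) and in the XQL reduction, reverse KL has generator $f(x)=x\log x$, so that $D_f(d^\pi\|d^O)=\KL{d^\pi}{d^O}$, with conjugate $f^*(y)=e^{y-1}$. Your choice $f(x)=-\log x$ instead yields $D_f(d^\pi\|d^O)=\KL{d^O}{d^\pi}$, whose conjugate is $f^*(y)=-1-\log(-y)$ on $y<0$, a logarithm and not an exponential. Consequently your assertion that ``$f^*_p$ becomes (up to constants) the exponential function'' does not follow from the generator you wrote down; it follows only from $f(x)=x\log x$. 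With the correct generator you also get a clean resolution of the concern you flag about $f^*_p$ versus $f^*$: since $(f')^{-1}(y)=e^{y-1}>0$ for all $y\in\R$, the $\max(0,\cdot)$ truncation in $f^*_p$ is vacuous and $f^*_p\equiv f^*=e^{y-1}$, so no domain or sign bookkeeping is needed at all. Once you fix $f$, the substitution into Eq.~\eqref{eq:dual-V} immediately gives
\[
\min_V\;(1-\gamma)\E{s\sim d_0}{V(s)}+\alpha\,\E{(s,a)\sim d^O}{\exp\!\big((\bar Q(s,a)-V(s))/\alpha-1\big)},
\]
with $\bar Q(s,a)=r(s,a)+\gamma\E{s'}{V(s')}$ treated as a stop-gradient target, which is the XQL value loss; the advantage-weighted policy extraction you describe then matches as you say.
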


The success of XQL was attributed to the property that Gumbel distribution better models the Bellman errors~\citep{garg2023extreme}. Despite its decent performance, XQL is prone to training instability (see e.g., Figure~\ref{fig:offline-rl-analysis}), since the Gumbel loss is an exponential function that can produce large gradients during training. Situating XQL in the dual-RL framework allows us to propose a solution to the training instability problem, a new insight we discuss in Section~\ref{sec:new_rl_method}.

\paragraph{A consequence of unification in RL: } Offline RL can be broadly categorized in three approaches: 1) regularized policy learning, 2) pessimistic value learning 
e.g. CQL~\citep{kumar2020conservative}, \reb{ATAC~\citep{cheng2022adversarially}} and 3) implicit policy improvement algorithms (e.g. XQL). The latter two frameworks have seemingly been exceptions to the regularized policy learning formulation (e.g. Eq.~\eqref{eq:reg_rl_main}). In Proposition~\ref{thm:CQL}, we show that with an appropriate choice of $f$-divergence, CQL \reb{and ATAC} can be cast as a \dualQ problem. Overall, our results (Proposition~\ref{thm:CQL} and Proposition~\ref{thm:XQL}) are the first, to our knowledge, to bring together the latter two approaches, pessimistic value learning and implicit policy improvement as dual approaches to regularized policy learning. 


\section{\textbf{\texttt{ReCOIL}}: Imitation Learning from Arbitrary Experience}
\label{sec:new_il_method}
As demonstrated in Section~\ref{sec:connections_to_il},
previous off-policy IL methods often rely on the coverage assumption and train a discriminator between the demonstration and the offline data to obtain a pseudo-reward $r^\text{imit}$. 
We propose \textbf{RE}laxed \textbf{C}overage for \textbf{O}ff-policy \textbf{I}mitation \textbf{L}earning~(\texttt{ReCOIL}),
an off-policy IL algorithm that relaxes the coverage assumption and eliminates the need for the discriminator. To achieve this, we consider an alternative way to leverage suboptimal data for imitation: matching two mixture distributions $d_\text{mix}^S :=\beta d(s,a) + (1-\beta) d^S(s,a)$ and $d_\text{mix}^{E,S} := \beta d^E(s,a)+(1-\beta)d^S(s,a)$, where $\beta \in (0,1) $ is a fixed hyperparameter. We consider the following problem in \primalQ form:
\begin{align}
\label{eq:primal_imitation_q_f_mixture}
\colorbox{RoyalBlue!15}{\primalQ} &~~~~~~~~~~~~~~~~~~~~~~~~~~~~\max_{d(s,a)} -\f{d^S_\text{mix}(s,a)}{d^{E,S}_\text{mix}(s,a)} \nonumber\\
   \text{s.t}~ \forall s \in \S, a\in \A,~~&\textstyle  d(s,a)=(1-\gamma)d_0(s)\pi(a|s)+\gamma \sum_{(s',a') \in \S \times \A } d(s',a') p(s|s',a') \pi(a|s).
\end{align}
This is a valid imitation learning formulation~\citep{ghasemipour2020divergence} since the global maximum of the objective is attained at $d=d^E$, irrespective of the suboptimal data distribution $d^S$. The primal formulation (Eq.~\ref{eq:primal_imitation_q_f_mixture}) deters offline learning, as it requires sampling from $d$ to estimate the $f$-divergence. We thus consider its dual formulation that allows us to derive an off-policy objective that only requires samples from the offline data. We term this formulation and associated approach \texttt{ReCOIL}. 

\begin{restatable}[]{theorem}{closerq}
\label{thm:recoilq_main}
(\textit{ReCOIL} objective) The \dualQ problem to the mixture distribution matching objective in Eq.~\ref{eq:primal_imitation_q_f_mixture} is given by:
\begin{equation}
   \resizebox{0.94\textwidth}{!}{ $\max_{\pi}\min_{Q}  \beta (1-\gamma)\E{d_0,\pi}{Q(s,a)} +\E{s,a\sim d_\text{mix}^{E,S}}{\reb{f^*}(\mathcal{T}^{\pi}_0 Q(s,a )-Q(s,a))}  -(1-\beta) \E{s,a\sim d^S}{\mathcal{T}^{\pi}_0 Q(s,a )-Q(s,a)}$}
\end{equation}
and recovers the same optimal policy $\pi^*$ as Eq.~\ref{eq:primal_imitation_q_f_mixture} since strong duality holds from Slater's conditions.
\end{restatable}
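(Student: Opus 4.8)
The plan is to derive the \dualQ formulation by applying Lagrangian duality to the constrained program in Eq.~\ref{eq:primal_imitation_q_f_mixture}, exactly mirroring the derivation of the generic \dualQ formula Eq.~\ref{eq:dual_q_original_equation} from \primalQ, but now with two modifications: (i) the objective is $-\f{d^S_\text{mix}}{d^{E,S}_\text{mix}}$ rather than $\E{d}{r}-\alpha\f{d}{d^O}$, and (ii) the mixture structure of $d^S_\text{mix}=\beta d + (1-\beta)d^S$ must be unwound so that the $f$-divergence is expressed purely in terms of $d$. First I would substitute $d^S_\text{mix}=\beta d+(1-\beta)d^S$ into the objective and rewrite $-\f{\beta d+(1-\beta)d^S}{d^{E,S}_\text{mix}} = -\E{d^{E,S}_\text{mix}}{f\!\left(\tfrac{\beta d+(1-\beta)d^S}{d^{E,S}_\text{mix}}\right)}$; the key observation is that this is a concave functional of $d$ whose Fenchel conjugate, after the linear reparametrization $d \mapsto \beta d$, introduces the factor $\beta$ in front of the initial-state term and the $(1-\beta)$-weighted correction term that appears in the statement.

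Concretely, I would introduce dual variables $Q(s,a)$ for the Bellman-flow equality constraints and form the Lagrangian
\[
\mathcal{L}(d,\pi,Q) = -\f{\beta d+(1-\beta)d^S}{d^{E,S}_\text{mix}} + \sum_{s,a} Q(s,a)\Big[(1-\gamma)d_0(s)\pi(a|s)+\gamma\!\!\sum_{s',a'}\!\! d(s',a')p(s|s',a')\pi(a|s) - d(s,a)\Big].
\]
Collecting the terms linear in $d(s,a)$ gives a coefficient of the form $\mathcal{T}^\pi_0 Q(s,a) - Q(s,a)$ (the adjoint of the Bellman-flow operator acting on $Q$, with zero reward), and the initial-state term contributes $(1-\gamma)\E{d_0,\pi}{Q}$. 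The inner maximization over $d\ge 0$ then splits: the part involving $\beta d$ inside the $f$-divergence combines with the linear term $\sum_{s,a}\beta d(s,a)\cdot[\mathcal{T}^\pi_0 Q - Q](s,a)$ to produce, via the definition of the convex conjugate $f^*$ and a change of variables absorbing $\beta$, the term $\E{d^{E,S}_\text{mix}}{f^*(\mathcal{T}^\pi_0 Q - Q)}$; meanwhile the residual $(1-\beta)d^S$ piece, which does not interact with the maximization over $d$ at the argument level but does appear in the linear Bellman term as $\sum_{s,a}(1-\beta)d^S(s,a)[\mathcal{T}^\pi_0 Q - Q](s,a)$, survives as the subtracted term $-(1-\beta)\E{s,a\sim d^S}{\mathcal{T}^\pi_0 Q - Q}$. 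Assembling these three pieces yields exactly the claimed objective.

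Finally, for the strong-duality claim I would invoke Slater's condition: the primal is a concave maximization over the convex set $\{d\ge 0\}$ subject to affine equality constraints, and $d = d^{\pi}$ for any policy $\pi$ (e.g. the uniform policy) is a strictly feasible point lying in the relative interior, so the duality gap is zero and the inner $\min_Q$ is attained; combined with the fact that the Bellman-flow constraints uniquely pin down $d=d^\pi$ for the outer policy $\pi$, the optimal policy $\pi^*$ of the dual coincides with that of Eq.~\ref{eq:primal_imitation_q_f_mixture}. The main obstacle I anticipate is the bookkeeping in step two: carefully tracking how the factor $\beta$ propagates through the Fenchel conjugate so that it appears on the initial-state term but \emph{not} inside the $f^*$ argument, and verifying that the $(1-\beta)d^S$ correction term comes out with the right sign and without an $f^*$ wrapper — this asymmetry between how the two mixture components are treated is the delicate part, and it is essentially the same algebraic maneuver that makes \texttt{ReCOIL} discriminator-free, so it deserves the most care.
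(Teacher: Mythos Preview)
Your overall plan---form the Lagrangian, collapse the inner max over $d$ via the convex conjugate---is exactly the paper's route, and the final form you anticipate is correct. However, your account of where the $(1-\beta)\,\mathbb{E}_{d^S}[\mathcal{T}^\pi_0 Q - Q]$ term comes from does not work as written. You claim the $(1-\beta)d^S$ piece ``appears in the linear Bellman term,'' but $d^S$ is a fixed distribution that enters only through the $f$-divergence in the objective; the Bellman-flow constraint is on $d$ alone, so with multiplier $Q$ the Lagrangian's linear-in-$d$ part is $\sum_{s,a} d(s,a)[\mathcal{T}^\pi_0 Q - Q](s,a)$, with no $d^S$ contribution and no factor of $\beta$.

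The paper's device is cleaner than the change-of-variables you sketch: since the Bellman-flow constraints are equalities, take $\beta Q(s,a)$ rather than $Q(s,a)$ as the multiplier (a harmless rescaling). The linear part is then $\beta(1-\gamma)\,\mathbb{E}_{d_0,\pi}[Q] + \beta\,\mathbb{E}_{d}[\mathcal{T}^\pi_0 Q - Q]$. Now \emph{add and subtract} $(1-\beta)\,\mathbb{E}_{d^S}[\mathcal{T}^\pi_0 Q - Q]$ so that $\beta\,\mathbb{E}_{d}[y] = \mathbb{E}_{d^S_{\mathrm{mix}}}[y] - (1-\beta)\,\mathbb{E}_{d^S}[y]$ with $y=\mathcal{T}^\pi_0 Q - Q$. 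The inner maximization over $d$ now acts on $\mathbb{E}_{d^S_{\mathrm{mix}}}[y] - D_f(d^S_{\mathrm{mix}}\,\|\,d^{E,S}_{\mathrm{mix}})$ alone (the $d^S$ expectation is constant in $d$), and the conjugate identity yields $\mathbb{E}_{d^{E,S}_{\mathrm{mix}}}[f^*(y)]$ with no stray $\beta$ inside the argument---the factor was already absorbed by the multiplier rescaling. This add-and-subtract is precisely the ``bookkeeping'' you flagged as delicate; once seen, it is mechanical. Your Slater argument is fine.
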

In other words, imitation learning can be solved by optimizing the unconstrained problem \texttt{ReCOIL} with arbitrary off-policy data, without the coverage assumption. Besides, as opposed to many previous algorithms, \texttt{ReCOIL} uses the Bellman operator $\mathcal{T}_0$ which does not need the pseudo-reward $r^\text{imit}$, making it discriminator-free. Although the pseudo-reward is not needed for training, \texttt{ReCOIL} allows for recovering the reward function using the learned $Q^*$, which corresponds to the intent of the expert. That is, $r(s,a)= Q^*(s,a)-\mathcal{T}^{\pi}_0(Q^*(s,a))$. 
Moreover, our method is generic to incorporate any $f$-divergence. {We also present the \dualV form for \texttt{ReCOIL} in Appendix~\ref{ap:closer} but defer its investigation for future work.}

\paragraph{A Bellman Consistent Energy-Based Model (EBM) View for ReCOIL}
Instantiating \texttt{ReCOIL} with $\chi^2$ Divergence, we \reb{present a simplified objective} (complete derivation in Appendix~\ref{ap:recoil_with_chi_square}) to:
\begin{equation}
\label{eq:recoil_intuition}
\resizebox{0.92\textwidth}{!}{
$\max_\pi \min_Q \beta (\E{d^S,\pi(a|s)}{Q(s,a)}-\E{d^E(s,a)}{Q(s,a)}) + \reb{0.25} \color{orange}
      \underbrace{\color{black}\E{s,a\sim \demix}{(\gamma Q(s',\pi(s'))-Q(s,a))^2}}_{\text{Bellman consistency}}$.
}
\end{equation}
\begin{wrapfigure}{r}{0.40\textwidth}
\centering
\begin{minipage}[t]{.40\textwidth}
\begin{algorithm}[H]
\algsetup{linenosize=\tiny}
\caption{\texttt{ReCOIL} (offline,$~~\chi^2$)}
\label{algo:recoil_algorithm_final}
\begin{algorithmic}[1]
    \STATE Initialize $Q_\phi$, $V_\theta$, and $\pi_\psi$, mixing ratio $\beta$, conservatism $\tau$, temperature $\alpha$
    \STATE  $\mathcal{D^S} =( s, a, s')$ be suboptimal dataset
    \STATE  $\mathcal{D^E} =( s, a, s')$ be expert dataset.
    \FOR{$t=1..T$ iterations}
        \STATE  Train $Q_\phi$ using $\min_{\phi}\mathcal{L}(\phi)$:
        \STATE Train $V_\theta$ using $\min_{\theta} \mathcal{J}(\theta)$
        \STATE Update $\pi_\psi$ via $\max_\psi \mathcal{M}(\psi)$:
        \vspace{0.025in}
    \ENDFOR
\end{algorithmic}
\end{algorithm}
\end{minipage}
\end{wrapfigure}
One can see that \texttt{ReCOIL} learns a score function $Q$ whose expected value is low over the suboptimal distribution but high over the expert distribution, while ensuring that $Q$ is Bellman consistent over the mixture.
The Bellman consistency is crucial to propagate the information of how to recover when the policy makes a mistake. 
The $Q$ value can be interpreted as a score as it is not representative of any expected return, and thus 
\texttt{ReCOIL} is an energy-based model with Bellman consistency. Figure~\ref{fig:recoil_main} in the appendix illustrates this intuition.

\paragraph{Practical Algorithm}  In Algorithm~\ref{algo:recoil_algorithm_final} we consider three parameterized functions $Q_\phi(s,a)$, $V_\theta(s)$ and $\pi_\psi$. Furthermore, we rely on Pearson $\chi^2$ divergence (Eq.~\ref{eq:recoil_intuition}) as it has been shown to lead to stable learning for the imitation setting~\citep{garg2021iq}. Our practical algorithm uses a \textit{semi-gradient} update that results in minimizing the following loss for $Q_\phi$:
\begin{equation}
    \label{eq:recoil_qphi_update}
    \resizebox{0.92\textwidth}{!}{$ \mathcal{L}(\phi) = \beta (\E{d^S,\pi(a|s)}{Q_\phi(s,a)}-\E{d^E(s,a)}{Q_\phi(s,a)}) + \reb{0.25}~\E{s,a\sim \demix}{(\gamma V_\theta(s')-Q_\phi(s,a))^2}$}.
\end{equation}
Naively maximizing $Q$ over $\pi$ in Eq.~\ref{eq:recoil_intuition} can result in the selection of out-of-distribution actions in the offline setting. To prevent extrapolation error in the offline setting, we rely on an implicit maximizer~\citep{garg2023extreme} that estimates the maximum over the $Q$-function conservatively with in-distribution samples. 
\begin{equation}
 \mathcal{J}(\theta) =\E{s,a\sim \demix}{\text{exp}((Q_\phi(s,a)-V_\theta(s))/\tau) + (Q_\phi(s,a)-V_\theta(s))/\tau)}.   
\end{equation}
Finally, the policy is extracted via advantage-weighted regression~\citep{peters2007reinforcement}: 
\begin{equation}
    \mathcal{M}(\psi) =\max_\psi \E{s,a\sim \demix}{\text{exp}(\alpha (Q_\phi(s,a)-V_\theta(s))) \log(\pi_\psi(a|s))}.
\end{equation}

\section{\textbf{\texttt{$f$-DVL}} : Better Implicit Maximizers for Offline RL}
\label{sec:new_rl_method}
Proposition~\ref{thm:XQL} shows that XQL is a particular \dualV problem where the Gumbel loss is the conjugate $f^*_p$ corresponding to reverse KL divergence. This insight allows us to extend XQL by choosing different $f$-divergences, where the conjugate functions are more amenable to optimization. We further show that the proposed methods enjoy both improved performance and better training stability in Section~\ref{sec:result}. \looseness=-1

Implicit policy improvement algorithms iterate two steps alternately: 1) regress $Q(s,a)$ to $r(s,a)+\gamma V(s')$ for transition $(s, a, s')$ and 2) estimate $V(s) = \max_{a \in A} Q(s,a)$. The learned $Q$, $V$ functions can be used to extract a policy. As for the \dualV formulation, see Appendix~\ref{ap:recovering_policy}. Step 1) is akin to the \emph{policy evaluation} step of generalized policy iteration (GPI), and step 2) acts like the \emph{policy improvement} step without explicitly learning a policy $\pi(s) = \argmax_a Q(s,a)$. The crux is to conservatively estimate the maximum of $Q$ in step 2.
\looseness=-1

Consider a rewriting of \dualV with the temperature parameter $\lambda$ \reb{and a chosen surrogate function $\bar{f^*_p}$ that extends the domain of $f^*_p$ to $\mathbb{R}$. We discuss the need for a surrogate function below.}
\begin{equation}
\label{eq:implicit_maximization}
    \min_{V} (1-\lambda)\E{s \sim d^O}{V(s)} +\lambda\E{(s,a)\sim d^O}{
    \bar{f^*_p}\left(\bar{Q}(s,a)-V(s)\right)},
\end{equation}
where $\bar{Q}(s,a)$ denotes $\texttt{stop-gradient}(r(s,a)+\gamma \sum_{s'} p(s'|s,a)V(s'))$. 
Let $x$ be a random variable of distribution $D$. Eq~\eqref{eq:implicit_maximization} can be considered as a special instance of the following problem:\looseness=-1
\begin{equation}
\label{eq:implicit_maximization_general}
\min_{v} (1-\lambda) v + \lambda\E{x\sim D}{\bar{f^*_p}\left( x-v  \right)},
\end{equation}
where $x$ is analogous to $\bar{Q}$ and $v$ is analogous to $V$. As opposed to handcrafted choices~\citep{kostrikov2021offline, garg2023extreme},
we show through Proposition~\ref{thm:implicit_maximizer} below that objective~\eqref{eq:implicit_maximization_general}
naturally gives rise to a family of \emph{implicit maximizers} that estimates $\sup_{x \sim D} x$ as $\lambda \rightarrow 1$.

\begin{restatable}[]{proposition}{implicit}
\label{thm:implicit_maximizer}
Let $x$ be a real-valued random variable such that $\Pr(x > x^*) = 0$.
Let $v_\lambda$ be the solution of Problem~\eqref{eq:implicit_maximization_general}. It holds that $v_{\lambda_1}\le v_{\lambda_2}$, $\forall \, 0 < \lambda_1  < \lambda_2 < 1$. Further,
$\lim_{\lambda\to 1} v_\lambda = x^*$.
\end{restatable}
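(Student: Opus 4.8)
The plan is to analyze the one-dimensional convex minimization in~\eqref{eq:implicit_maximization_general} directly via its first-order optimality condition, and then track how the optimizer $v_\lambda$ moves as $\lambda$ increases toward $1$. First I would observe that $f^*_p$ is convex (it is a convex conjugate up to the nonnegativity modification), so $\bar{f^*_p}$ is convex and hence $v \mapsto (1-\lambda)v + \lambda\,\E{x\sim D}{\bar{f^*_p}(x-v)}$ is convex in $v$ for each fixed $\lambda \in (0,1)$; its minimizer $v_\lambda$ is therefore characterized by the stationarity condition obtained by differentiating under the expectation:
\begin{equation}
(1-\lambda) = \lambda\,\E{x\sim D}{\bar{f^*_p}{}'(x - v_\lambda)},
\end{equation}
equivalently $\E{x\sim D}{\bar{f^*_p}{}'(x - v_\lambda)} = (1-\lambda)/\lambda$. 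The key structural fact I would use is that $\bar{f^*_p}{}'$ is nondecreasing (derivative of a convex function), and that for the relevant $f$-divergences $\bar{f^*_p}{}'$ is strictly increasing and strictly positive on the region that matters, so the map $v \mapsto \E{x\sim D}{\bar{f^*_p}{}'(x-v)}$ is strictly decreasing in $v$; this makes $v_\lambda$ unique and lets me invert the relation.

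For monotonicity: as $\lambda$ increases from $\lambda_1$ to $\lambda_2$, the right-hand side $(1-\lambda)/\lambda$ strictly decreases, and since $v \mapsto \E{x\sim D}{\bar{f^*_p}{}'(x-v)}$ is strictly decreasing, the solving $v$ must increase; hence $v_{\lambda_1} \le v_{\lambda_2}$. I would write this cleanly either by the implicit-function-style comparison above, or by a direct convexity/interchange argument comparing the two objectives. For the limit: as $\lambda \to 1$, the target value $(1-\lambda)/\lambda \to 0^+$, so we need $\E{x\sim D}{\bar{f^*_p}{}'(x-v_\lambda)} \to 0$. Since $v_\lambda$ is nondecreasing in $\lambda$ and bounded above (I would argue $v_\lambda \le x^*$ for all $\lambda$, because if $v > x^*$ then $x - v < 0$ a.s., and on the negative axis the surrogate $\bar{f^*_p}$ is constructed so its derivative is small/vanishing — more precisely for reverse KL the conjugate is $\exp(\cdot)-1$ type on the truncated branch, but on $x<0$ region we have the floor at $0$ contributing $0$ gradient, making $\E{\bar{f^*_p}{}'(x-v)}$ too small to meet $(1-\lambda)/\lambda>0$ when $\lambda$ close to $1$... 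I'd sharpen this), the monotone limit $v_\infty := \lim_{\lambda\to1} v_\lambda$ exists in $(-\infty, x^*]$. Suppose for contradiction $v_\infty < x^*$. Since $\Pr(x > x^*) = 0$ but (by the essential-supremum interpretation of $x^*$, which I would make explicit — taking $x^*$ to be the essential sup) $\Pr(x > v_\infty + \epsilon) > 0$ for every small $\epsilon>0$, there is a set of positive probability on which $x - v_\lambda \ge x - v_\infty > \epsilon$; on that set $\bar{f^*_p}{}'(x-v_\lambda) \ge \bar{f^*_p}{}'(\epsilon) > 0$ (using that $\bar{f^*_p}{}'$ is positive and nondecreasing away from $0$), so $\E{x\sim D}{\bar{f^*_p}{}'(x-v_\lambda)}$ stays bounded below by a positive constant, contradicting convergence to $0$. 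Hence $v_\infty = x^*$.

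The main obstacle I anticipate is handling the surrogate/truncation carefully. The function $f^*_p$ is only a "variant" of $f^*$, defined piecewise via $\max(0, (f')^{-1}(x))$, and the statement replaces it by a surrogate $\bar{f^*_p}$ extending the domain to $\mathbb{R}$; I need the limit argument to be robust to which extension is chosen, which means I should isolate exactly the two properties actually used — $\bar{f^*_p}$ convex, and $\bar{f^*_p}{}'$ strictly increasing and strictly positive on the upper tail (equivalently $\bar{f^*_p}{}'$ has range containing $(0,\infty)$ or at least $(0, \delta)$) — and verify these hold for the $f$-divergences in play (reverse KL giving the Gumbel/exponential conjugate, $\chi^2$ giving a quadratic-type conjugate). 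A secondary subtlety is the differentiation-under-the-expectation step, which I would justify by dominated convergence using local boundedness of $\bar{f^*_p}{}'$ and (if needed) a mild integrability assumption on $x$, or alternatively bypass differentiation entirely and argue purely from convex-analytic subdifferential monotonicity, which avoids smoothness hypotheses on $D$. I expect the monotonicity claim to be routine; the delicate part is pinning the limit to the \emph{essential} supremum and making "$\Pr(x > x^*) = 0$" interact correctly with a strict lower bound on $\bar{f^*_p}{}'$ near zero.
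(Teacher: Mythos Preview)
Your approach is essentially the same as the paper's: both derive the first-order stationarity condition $(1-\lambda)/\lambda = \E{x\sim D}{\bar{f^*_p}{}'(x-v_\lambda)}$ and use the nondecreasing property of $\bar{f^*_p}{}'$ to obtain monotonicity of $v_\lambda$ in $\lambda$. For the limit $\lambda\to 1$, the paper argues more informally---it splits the second term of the objective into $\int_{x>v}$ and $\int_{x\le v}$, observes that this term is nonincreasing in $v$ and saturates to the constant $-\lambda f(0)$ once $v\ge x^*$, and then concludes that as $\lambda\to 1$ ``the minimization of the second term takes precedence'' so the minimizer is driven to the saturation point $x^*$. Your contradiction argument (if $v_\infty<x^*$ then a positive-mass set keeps $\bar{f^*_p}{}'(x-v_\lambda)$ bounded away from $0$, contradicting $(1-\lambda)/\lambda\to 0$) is a tighter version of the same idea and in fact makes the essential-supremum interpretation of $x^*$ explicit, which the paper leaves implicit. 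The only substantive caveat is the one you already flag: the surrogate extension $\bar{f^*_p}$ must be chosen so that $\bar{f^*_p}{}'\equiv 0$ on the negative axis (which the paper enforces via $(f')^{-1}(x)=0$ for $x\le l$ and then takes $l=0$), and so that $\bar{f^*_p}{}'>0$ on the positive axis; the paper handles this per-divergence (with special treatment for TV and domain restrictions for Hellinger/RKL), so your plan to isolate exactly those two properties and verify them case-by-case matches what the paper does.
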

 Figure~\ref{fig:implicit_maximizers} provides an illustration for Proposition~\ref{thm:implicit_maximizer}. We propose a family of maximizers associated with different $f$-divergences and apply them to \dualV. We call the resulting methods $f$-\texttt{DVL} (Dual-V Learning).
 
\begin{wrapfigure}{R}{0.45\textwidth}
\centering
\begin{minipage}[t]{.45\textwidth}
\begin{algorithm}[H]
\algsetup{linenosize=\tiny}
\caption{\fdvl (Under Stochastic Dynamics)}
\label{algo:dvl_algorithm}
\begin{algorithmic}[1]
    \STATE Initialize $Q_\phi$, $V_\theta$, $\pi_\psi$, temperature $\alpha$, weight $\lambda$
    \STATE Let $\mathcal{D} =( s, a, r, s')$ be offline dataset
    \FOR{$t=1..T$ iterations}
        \STATE  Train $Q_\phi$ by minimizing:\\
        \resizebox{0.8\textwidth}{!}{
             $\E{s,a,s'\sim \mathcal{D}}{(Q_\phi(s,a) - (r(s,a) + \gamma V_\theta(s')))^2}.$
        }
        \STATE Train $V_\theta$ by minimizing Eq~\ref{eq:implicit_maximization}\\
        with surrogate $\bar{f^*_p}$
         \STATE Update $\pi_\psi$ by maximizing:
        \resizebox{0.8\textwidth}{!}{
              $\mathbb{E}_{s,a\sim\mathcal{D}}[e^{\alpha(Q_\phi(s, a) - V_\theta(s))} \log \pi_\psi(s|a)].$
        }
        \vspace{0.025in}
    \ENDFOR
\end{algorithmic}
\end{algorithm}
\end{minipage}
\vskip-10pt
\end{wrapfigure}
 \paragraph{Practical Considerations and Algorithm} A practical issue for optimizing Eq~\ref{eq:implicit_maximization} is that $f^*_p$ is not well defined over the entire domain of $\mathbb{R}$. To remedy this, we consider an extension of $f^*_p$ on  $\mathbb{R}$ that leads to the surrogate function denoted by $\bar{f^*_p}$: for (1) Total Variation: $f(x) = \tfrac{1}{2}|x - 1|,   \bar{f^*_p}(y) = \max(y, 0)$, (2) Pearson $\chi^2$ divergence: $f(x) = (x-1)^2, \; \bar{f^*_p}(y) = \max(\tfrac{1}{4}y^2 + y, 0)$. We defer the derivation of these surrogates to Appendix~\ref{ap:f_star_p_practical}. Recall that XQL uses the implicit maximizer associated with reverse KL divergence, where $f^*_p$ is exponential. Compared with XQL, our $\bar{f^*_p}$ functions are low-order polynomials and are thus stable for optimization. Algorithm~\ref{algo:dvl_algorithm} details the steps for \texttt{$f$-DVL}.
\looseness=-1
\section{Experiments}
\label{sec:result}
Our experiments aim to answer the following four questions. \textbf{IL:} 1) How does \texttt{ReCOIL} perform and compare with previous offline IL methods?
2) Can \texttt{ReCOIL} accurately estimate the policy visitation distribution $d^\pi$ and the reward function/intent of the expert?
\textbf{RL:} 3) How does \texttt{$f$-DVL} perform and compare with previous offline RL methods?
4) Is the training of \texttt{$f$-DVL} more stable than XQL?

In order to circumvent the intricacies associated with exploration and direct our attention towards the intrinsic nature of dual RL formulation, we focus on the offline setting in this section, although the approaches can also be applied to online settings. 
We consider the locomotion and manipulation tasks from the D4RL benchmark~\citep{fu2020d4rl}, and report the results in Section~\ref{sec:expr_offline_il} and \ref{sec:expr_offline_rl}, respectively. 
For each algorithm, we train $7$ instances with different seeds and report their average return and standard derivation. Complete experiment details can be found in Appendix~\ref{ap:experiment_details}.

\subsection{Offline IL}
\label{sec:expr_offline_il}
\captionsetup[subfigure]{skip=0pt}
\begin{figure*}
\vspace{-20pt}
\centering
\begin{subfigure}{0.57\linewidth}
\includegraphics[width=\columnwidth]{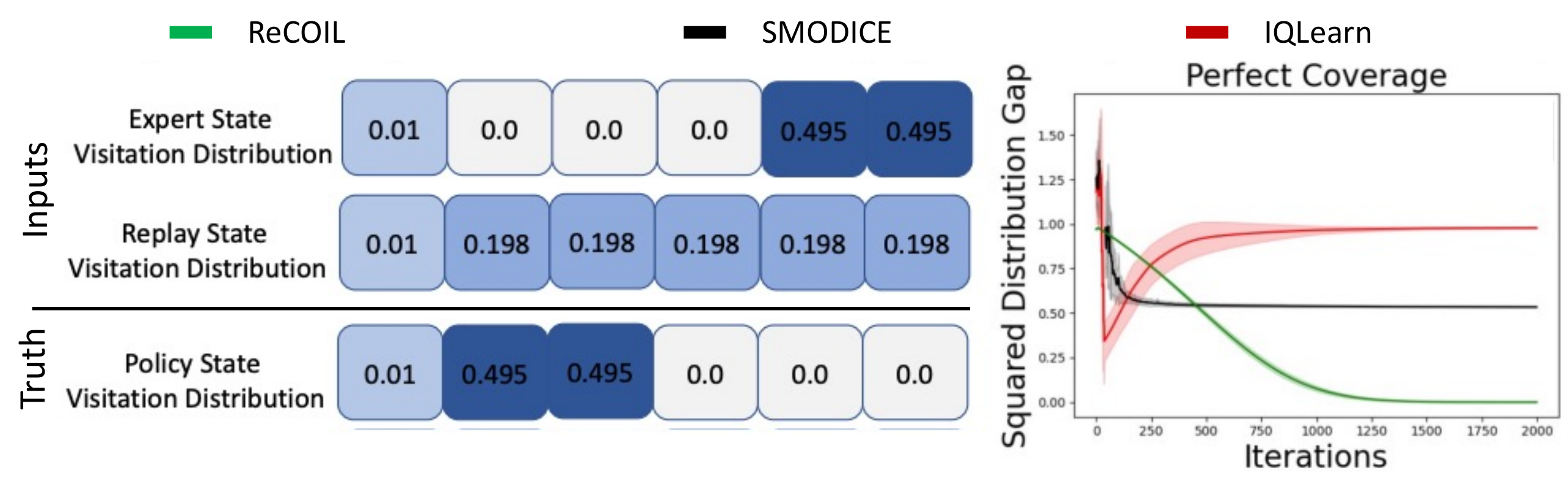}
\caption{}
 \label{fig:distribution_ratio_estimation1}
\end{subfigure}
\begin{subfigure}{0.42\linewidth}
\includegraphics[width=\columnwidth]{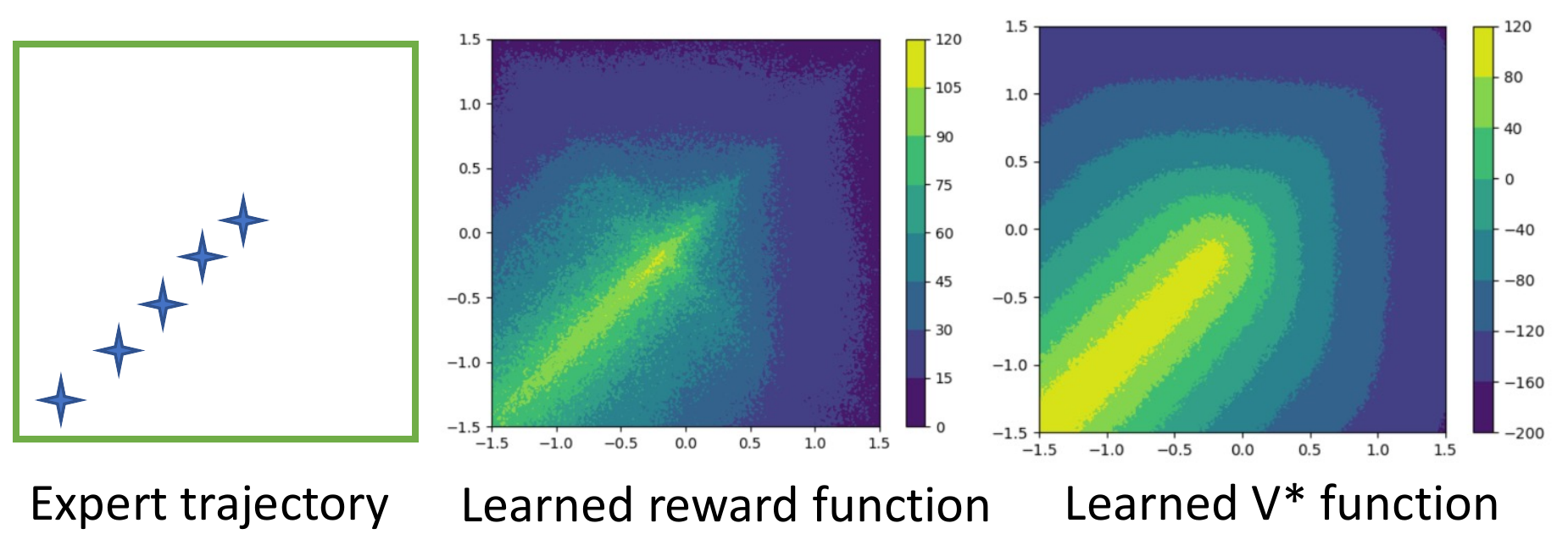}
\caption{}
\label{fig:recovering_rewards}
\end{subfigure}
\vspace{-1.8\baselineskip}
\caption{(a)
\reb{[Left] shows an MDP that starts at the leftmost state and transitions to one of the five absorbing states on the right. Under the given expert and replay/offline visitation we study if a prespecified policy's visitation can be inferred whose ground truth visitation is known [Right] shows MSE error plots with policy's ground truth visitation where }\texttt{ReCOIL} perfectly infers $d^\pi$ whereas a method that only relies on expert data or the replay data with the coverage assumption fails. Results averaged over 100 seeds. \reb{More details in Appendix~\ref{ap:density_ratio_estimation}} (b) Recovered $R$ and $V^*$ on a simple grid-world environment by \texttt{ReCOIL}.}
\end{figure*}
\paragraph{Benchmark Comparisons}
For every task, our agent is given 1 expert demonstration 
and a set of suboptimal transitions, both
extracted from the D4RL datasets. 
We follow the construction of suboptimal dataset in SMODICE~\citep{ma2022smodice}. For locomotion tasks, the suboptimal dataset consists of 1 million transitions of the random or medium D4RL datasets and $200$ expert demonstrations,
which we label as 'random+expert' and 'medium+expert', respectively. 
We also consider suboptimal datasets mixed with only $30$ expert demonstrations, which are called \texttt{random+few-expert} and \texttt{medium+few-expert} to simulate \reb{a more difficult setting}. Similarly, we construct datasets for the manipulation tasks. More details in Appendix~\ref{ap:offline_il_experiment_details}. 

\begin{table}[h]
    \centering
    \vspace{5pt}
    \resizebox{1.00\textwidth}{!}{
    \begin{tabular}{c|c|c|c|c|c|c|c|c||c}
    \toprule
    \multirow{1}{*}{Suboptimal Dataset} & Env & RCE & ORIL & SMODICE & BC (only expert data) & BC (full dataset)& IQ-Learn (offline) & \texttt{ReCOIL}  & Expert  \\
    \midrule
    \multirow{2}{*}{random+}& hopper&51.41$\pm$38.63&73.93$\pm$11.06&\highlight{101.61$\pm$7.69}&4.52$\pm$1.42&5.64$\pm$4.83&1.85 $\pm$2.19&\highlight{108.18$\pm$3.28}&111.33\\
    & halfcheetah&64.19$\pm$11.06&60.49$\pm$3.53& \highlight{80.16$\pm$7.30}&2.2$\pm$0.01&2.25$\pm$0.00&4.83$\pm$7.99&\highlight{80.20$\pm$6.61}&88.83\\
    \multirow{1}{*}{expert}& walker2d&20.90$\pm$26.80&2.86$\pm$3.39&\highlight{105.86$\pm$3.47}&0.86$\pm$0.61&0.91$\pm$0.5&0.57$\pm$0.09&\highlight{102.16$\pm$7.19}& 106.92\\
    & ant&105.38$\pm$14.15&73.67$\pm$12.69&\highlight{126.78$\pm$5.12}&5.17$\pm$5.43&30.66$\pm$1.35&42.23$\pm$20.05&\highlight{126.74$\pm$4.63}&130.75\\
    \midrule
        \multirow{2}{*}{random+}& hopper&25.31$\pm$18.97&42.04$\pm$13.76&60.11$\pm$18.28&4.84$\pm$3.83&3.0$\pm$0.54&1.37 $\pm$1.23&\highlight{97.85$\pm$17.89}&111.33\\
    & halfcheetah& 2.99$\pm$1.07&2.84$\pm$5.52& 2.28$\pm$0.62&-0.93$\pm$0.35&2.24$\pm$0.01&1.14$\pm$1.94& \highlight{76.92$\pm$7.53}&88.83\\
    \multirow{1}{*}{few-expert}& walker2d&40.49$\pm$26.52& 3.22$\pm$3.29&\highlight{107.18$\pm$1.87}&0.98$\pm$0.83&0.74$\pm$0.20&0.39$\pm$0.27&83.23$\pm$19.00& 106.92\\
    & ant&\highlight{67.62$\pm$15.81}&25.41 $\pm$ 8.58&-6.10$\pm$7.85&0.91$\pm$3.93&35.38$\pm$2.66&32.99$\pm$3.12&\highlight{67.14$\pm$ 8.30}&130.75\\
    \midrule
    \multirow{2}{*}{medium+}& hopper&58.71$\pm$34.06&61.68$\pm$7.61&49.74$\pm$3.62&16.09$\pm$12.80&59.25$\pm$3.71&12.90$\pm$24.00&\highlight{88.51$\pm$16.73}&111.33\\
    & halfcheetah&65.14$\pm$13.82& 54.66$\pm$0.88&59.50$\pm$0.82&-1.79$\pm$0.22&42.45$\pm$ 0.42&25.67$\pm$20.82&\highlight{81.15$\pm$2.84}&88.83\\
    \multirow{1}{*}{expert}& walker2d&\highlight{96.24$\pm$14.04}&8.19$\pm$7.70&2.62$\pm$0.93&2.43$\pm$1.82&72.76$\pm$3.82&59.37$\pm$30.14&\highlight{108.54$\pm$1.81}& 106.92\\
    & ant&\highlight{86.14$\pm$38.59}&102.74$\pm$6.63&104.95$\pm$6.43&0.86$\pm$7.42&95.47$\pm$10.37&37.17$\pm$41.15&\highlight{120.36$\pm$7.67}&130.75 \\
    \midrule
    \multirow{2}{*}{medium}& hopper&\highlight{66.15$\pm$35.16}&17.40$\pm$15.15&47.61$\pm$7.08& 7.37$\pm$1.13&46.87$\pm$5.31&11.05$\pm$20.59&\highlight{50.01$\pm$10.36}&111.33\\
    & halfcheetah&\highlight{61.14$\pm$18.31}& 43.24$\pm$0.75&46.45$\pm$3.12&-1.15$\pm$0.06&42.21$\pm$0.06&26.27$\pm$20.24&\highlight{75.96$\pm$4.54}&88.83\\
    \multirow{1}{*}{few-expert}& walker2d&\highlight{85.28$\pm$34.90}&6.81$\pm$6.76&6.00$\pm$6.69&2.02$\pm$0.72&70.42$\pm$2.86&73.30$\pm$2.85&\highlight{91.25$\pm$17.63}& 106.92\\
    & ant&\highlight{67.95$\pm$36.78}&81.53$\pm$8.618&81.53$\pm$8.618&-10.45$\pm$1.63&81.63$\pm$6.67&35.12$\pm$50.56&\highlight{110.38$\pm$10.96}&130.75\\
    \midrule
    \multirow{4}{*}{cloned+expert}& pen&19.60$\pm$11.40&-3.10$\pm$0.40&-3.36$\pm$0.71&13.95$\pm$11.04&34.94$\pm$11.10&2.18$\pm$8.75&\highlight{95.04$\pm$4.48}&106.42\\
    & door&0.08$\pm$ 0.15&-0.33$\pm$0.01& 0.25$\pm$ 0.54&-0.22$\pm$0.05&0.011$\pm$0.00&0.07$\pm$0.02&\highlight{102.75$\pm$4.05}&103.94\\
    & hammer&1.95$\pm$3.89&0.25$\pm$ 0.01&0.15$\pm$ 0.078&2.41$\pm$4.48&5.45$\pm$ 7.84&0.27$\pm$0.02&\highlight{95.77$\pm$17.90}&125.71\\
    & relocate&-0.25$\pm$0.04&-0.29$\pm$0.01&1.75$\pm$3.85&-0.17$\pm$0.04&-0.24$\pm$ 0.01&-0.1$\pm$0.12&\highlight{67.43$\pm$14.60}&118.39\\
    \midrule
        \multirow{4}{*}{human+expert}& pen&17.81$\pm$5.91&-3.38$\pm$2.29&-2.20$\pm$2.40&13.83$\pm$10.76&90.76$\pm$25.09&14.29$\pm$28.82&\highlight{103.72$\pm$2.90}&106.42\\
    & door& -0.05$\pm$0.05&-0.33$\pm$0.01& -0.20$\pm$ 0.11&-0.03$\pm$0.05&103.71$\pm$1.22&5.6$\pm$7.29& \highlight{104.70$\pm$0.55}&103.94\\
    & hammer&5.00$\pm$5.64& 1.89$\pm$0.70&-0.07$\pm$0.39&0.18$\pm$0.14&122.61$\pm$4.85&5.32$\pm$1.38&\highlight{125.19$\pm$3.29}&125.71\\
    & relocate&0.02$\pm$0.10& -0.29$\pm$0.01 &-0.16$\pm$0.04&-0.13$\pm$0.11&81.19$\pm$7.73&-0.04$\pm$0.22&\highlight{91.98$\pm$ 2.89}&118.39\\
    \midrule
    \multirow{1}{*}{partial+expert}& kitchen&6.875$\pm$9.24&0.00$\pm$0.00&39.16$\pm$ 1.17&2.5$\pm$5.0&45.5$\pm$1.87&0.0$\pm$0.0&\highlight{60.0$\pm$5.70}&75.0\\
    \midrule
    \multirow{1}{*}{mixed+expert}& 
    kitchen&1.66$\pm$2.35& 0.00$\pm$0.00&42.5$\pm$2.04&2.2$\pm$3.8&42.1$\pm$1.12&0.0$\pm$0.0&\highlight{52.0$\pm$1.0}&75.0\\
    \bottomrule
    \end{tabular}
}
\vspace{-3mm}
\caption{The normalized return obtained by different offline IL methods trained on the D4RL suboptimal datasets with 
\reb{1 expert trajectory}. \reb{Methods with avg. perf within the std-dev of the top performing method is highlighted.}} 
    \vspace{5pt}
    \label{table:offline_il_results} 
\end{table}

We compare \texttt{ReCOIL} against recent offline IL methods 
RCE~\citep{eysenbach2021replacing}, IQLearn~\citep{garg2021iq}, SMODICE~\citep{ma2022smodice}, ORIL~\citep{zolna2020offline} and behavior cloning. We do not compare to DEMODICE~\citep{kim2022demodice} and ValueDICE~\citep{kostrikov2019imitation} as SMODICE was shown to outperform DEMODICE in \citet{ma2022smodice} and IQLearn was shown to outperform ValueDICE in~\citep{garg2021iq} on the same environments. 
Both SMODICE and ORIL require learning a discriminator, and SMODICE relies on the coverage assumption. 
RCE also uses a recursive discriminator to test the proximity of the policy visitations to successful examples.
In contrast, \texttt{ReCOIL} is discriminator-free and does not need this coverage assumption. 
Table~\ref{table:offline_il_results} shows that \texttt{ReCOIL} strongly outperforms the baselines in most environments. SMODICE exhibits poor performance in cases when the combined offline dataset has \reb{few expert samples} (\texttt{random+few-expert}) or where the discriminator can easily overfit (high-dimensional environments like dextrous manipulation). \looseness=-1


\paragraph {Estimation of the Policy Visitation Distribution and Reward Recovery}
Correctly estimating a given policy's visitation distribution $d^\pi$ is key to testing its closeness to the expert visitation.
 $d^\pi$ can be computed via Eq~\eqref{ap:optimal_distribution_ratio} (appendix). Figure~\ref{fig:distribution_ratio_estimation1} and Figure~\ref{fig:distribution_ratio_estimation2} show that \texttt{ReCOIL}  can estimate $d^\pi$ more accurately than \reb{SMODICE~\citep{ma2022smodice}} which relies on coverage assumption and IQLearn~\citep{garg2021iq} which only utilizes expert data. This empirically validates our hypothesis that a learned discriminator with low coverage can lead to poor performance of the downstream policy. Further, Figure~\ref{fig:recovering_rewards} shows the reward function recovered by \texttt{ReCOIL} for a simple grid-world task. For Hopper and Walker, we respectively observe a Pearson correlation of \textbf{0.98} and \textbf{0.92} between the recovered reward with the ground truth. See more details in \reb{Appendix~\ref{ap:density_ratio_estimation} and \ref{ap:recovering_rewards}}.

\subsection{Offline RL}
\label{sec:expr_offline_rl}
\begin{wrapfigure}{r}{0.35\textwidth}
\vspace{-10pt}
\begin{center}
\includegraphics[width=1.0\linewidth]{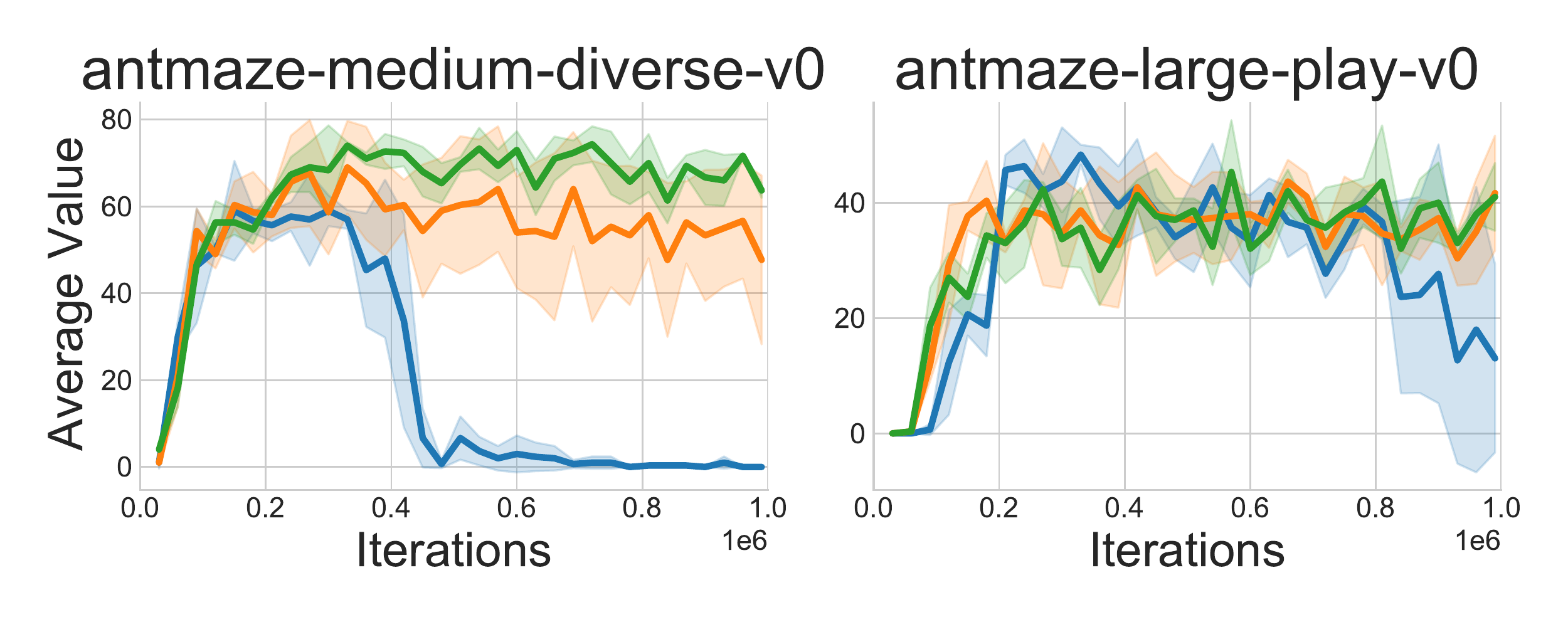}
      \\
\vspace{-2.0mm}
\includegraphics[width=1.0\linewidth]{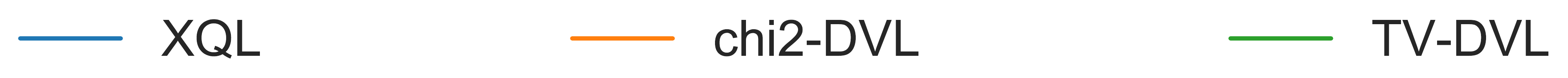}  
    \vspace{-7.5mm}
\end{center}
\caption{\small XQL training diverges due to the numerical instability of its loss function. $f$-DVL fixes this problem by using more well-behaved $f$-divergences.}
\label{fig:offline-rl-analysis}
\end{wrapfigure}
\paragraph{Benchmark Comparison}
Table~\ref{tab:d4rl} shows that \fdvl outperforms XQL and other prior offline RL methods~\citep{chen2021decision,kumar2019stabilizing,kumar2020conservative,kostrikov2021offline,fujimoto2021minimalist}
on a broad range of continuous control tasks. We note an inconsistency between our reproduced XQL results and
the results reported in the original paper: 
their results were reported by taking the best average return during training as opposed to the standard practice of taking the average of the last iterate performance across different seeds at 1 million gradient steps. Such inconsistency can be validated by comparing their training plots and reported results (Fig 11 and Table 1 in  \cite{garg2023extreme}). XQL(r) shows the results for XQL under the standard evaluation protocol.

\begin{table}[t]
\vspace{-20pt}
\centering
\scriptsize
\setlength\tabcolsep{3pt}
\renewcommand{\arraystretch}{1.0}
\resizebox{0.8\textwidth}{!}{
\begin{tabular}{l||rrrrrrr|cc}
\multicolumn{1}{c||}{Dataset} & BC & 10\%BC & DT  & TD3+BC & CQL & IQL &  {XQL}(r) & \bf{$f$-DVL} ($\chi^2$) & \bf{$f$-DVL} (TV)\\\hline
halfcheetah-medium-v2 & 42.6 & 42.5 & 42.6  & \highlight{48.3} &44.0 & \highlight{47.4} & 47.4 & \highlight{47.7} & 47.5\\ 
hopper-medium-v2 & 52.9 & 56.9 & 67.6 &  59.3 & 58.5 & 66.3  & \highlight{68.5} & 63.0& 64.1\\ 
walker2d-medium-v2 & 75.3 &75.0 &74.0 & \highlight{83.7} & 72.5 & 78.3  & 81.4 & 80.0 & 81.5\\ 
halfcheetah-medium-replay-v2 & 36.6 & 40.6 & 36.6 &  \highlight{44.6} & \highlight{45.5} & 44.2  &44.1  & 42.9 & \highlight{44.7}\\ 
hopper-medium-replay-v2 & 18.1 & 75.9 & 82.7 & 60.9 & 95.0 &  94.7  &95.1  & 90.7&\highlight{98.0}\\ 
walker2d-medium-replay-v2 & 26.0 & 62.5 & 66.6 &  \highlight{81.8} & 77.2 & 73.9  &58.0 &52.1&68.7  \\ 
halfcheetah-medium-expert-v2 & 55.2 & \highlight{92.9} & 86.8 &   90.7 & 91.6 & 86.7 &  90.8 &89.3 &91.2 \\ 
hopper-medium-expert-v2 &52.5 & \highlight{110.9} & {107.6}  & 98.0 & 105.4 & 91.5  & 94.0 &{105.8} & 93.3 \\ 
walker2d-medium-expert-v2 & 107.5 & {109.0} & 108.1  & \highlight{110.1} & 108.8 & \highlight{109.6}  & \highlight{110.1} &\highlight{110.1} & \highlight{109.6}\\ \hline 
antmaze-umaze-v0 & 54.6 & 62.8 & 59.2 & 78.6 & 74.0 & \highlight{87.5}  & 47.7&83.7 &\highlight{87.7} \\ 
antmaze-umaze-diverse-v0 & 45.6 & 50.2 & 53.0 &  71.4 & \highlight{84.0} & 62.2  & 51.7   &  50.4&48.4\\  
antmaze-medium-play-v0 & 0.0 & 5.4 & 0.0 &  10.6 & 61.2 & \highlight{71.2}  &  31.2 & 56.7&\highlight{71.0} \\  
antmaze-medium-diverse-v0 & 0.0 & 9.8 & 0.0 &  3.0 & 53.7 & \highlight{70.0}  &  0.0& 48.2& 60.2\\  
antmaze-large-play-v0 &0.0 &0.0 &0.0 &0.2 &15.8 & 39.6  &10.7 &36.0 & \highlight{41.7}\\  
antmaze-large-diverse-v0 & 0.0 &6.0 & 0.0 & 0.0 & 14.9 & \highlight{47.5} &31.28 &44.5 & 39.3\\ \hline 
kitchen-complete-v0 & 65.0 & -  & -  & - &43.8 & 62.5  & 56.7 &\highlight{67.5} &61.3 \\ 
kitchen-partial-v0 &38.0 & - & - & - & 49.8 & 46.3   & 48.6 &58.8 & \highlight{70.0} \\ 
kitchen-mixed-v0 & 51.5 & - & - &  - & 51.0 & 51.0 & 40.4  & \highlight{53.75}&52.5 \\ \hline \hline
\end{tabular}
}
\vspace{-4pt}
\caption{The normalized return of offline RL methods on D4RL tasks. XQL(r) denotes the results obtained under the standard evaluation protocol. Results aggregated over 7 seeds. Highlighted results are within one performance point of the best-performing algorithm.}
\vspace{-5pt}
\label{tab:d4rl}
\end{table}

\paragraph{Training Stability} As pointed out by the authors, the exponential loss function of XQL causes numerical instabilities during optimization. As 
discussed in Section~\ref{sec:new_rl_method}, this is a by-product of reverse KL divergence. Fig.~\ref{fig:offline-rl-analysis} confirms that this is fixed by $f$-\texttt{DVL} by using other $f$-divergences with more stable loss functions. Additionally, Fig.~\ref{fig:online_rl} demonstrates that $f$-\texttt{DVL} \reb{is competitive to}
XQL and SAC in the online setting as well. See Appendix~\ref{ap:experiment_details} for additional experimental details.


\subsection{Additional Experiments} 
\begin{wrapfigure}{r}{0.40\textwidth}
\vspace{-10pt}
\begin{center}
\includegraphics[width=1.0\linewidth]{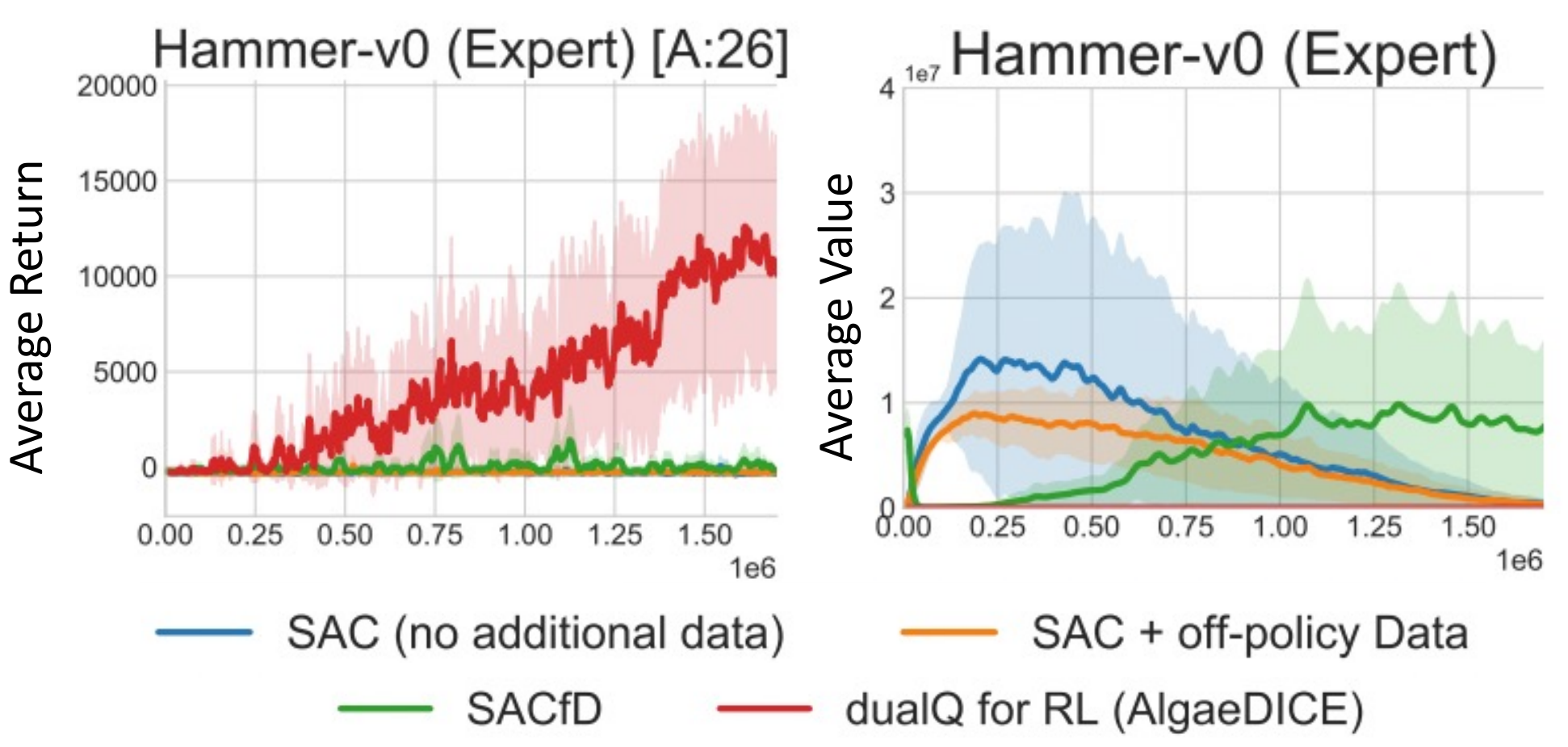}  
    \vspace{-7.5mm}
\end{center}
\caption{\small \reb{Augmenting SAC with expert data at the start of training destabilizes value function learning (r), but dual-RL approaches can make effective use of the additional data to learn performant policy (l).}}
\label{fig:sac_overestimation_example}
\vspace{-5.0mm}
\end{wrapfigure}
We conduct additional experiments in Appendix~\ref{ap:additional_experiments}. We further demonstrate a) when incorporating off-policy data in online training, traditional ADP-based methods \reb{can} suffer from the over-estimation of value functions, and the performance gain is limited, whereas dual-RL methods can leverage the same
data to achieve better performance (\reb{Figure~\ref{fig:sac_overestimation_example} and} Appendix~\ref{ap:dual_rl_are_better});
 b) the reward functions learned by \texttt{ReCOIL} are of high quality (Appendix~\ref{ap:recovering_rewards});
 c) the hyperparameter ablation for \fdvl (Appendix~\ref{ap:dvl_sensitivity}) and qualitative results for \texttt{ReCOIL} (Appendix~\ref{ap:qualitative_comparison}).

\section{Conclusion}
\label{sec:conclusion}
Our work unifies a significant number of recent developments in RL and IL. The insights gleaned from the unification allow us to identify key gaps and subsequently improve performance and training stability in imitation learning and reinforcement learning. Leveraging this unification, we 
propose 1) a family of \textit{stable offline RL methods} $f$-\texttt{DVL} relying on implicit value function maximization, 2) \texttt{ReCOIL}, a general \textit{off-policy IL method} to learn from arbitrary suboptimal data while being discriminator-free. We show that $f$-\texttt{DVL} and \texttt{ReCOIL} both outperform previous methods in online/offline RL and offline IL domains, respectively.  We demonstrate that Dual-RL algorithms have great potential for developing performant algorithms and warrant further study. We direct interested readers to read our Appendix containing more insights into Dual-RL and potential directions for improvements and new algorithms.
\looseness=-1

\section*{Acknowledgements}
We thank  Ahmed Touati,  Ben Eysenbach, Siddhant Agarwal, and ICLR reviewers for valuable feedback on this work. This work has taken place in the Safe, Correct, and Aligned Learning and Robotics Lab (SCALAR) at The University of Massachusetts Amherst and Machine Intelligence through Decision-making and Interaction (MIDI) Lab at The University of Texas at Austin. SCALAR research is supported in part by the NSF (IIS-2323384), AFOSR (FA9550-20-1-0077), and ARO (78372-CS, W911NF-19-2-0333), and the Center for AI Safety (CAIS).  This research was also sponsored by the Army Research Office under Cooperative Agreement Number W911NF-19-2-0333. HS and AZ are funded in part by a sponsored research agreement with Cisco Systems Inc. The views and conclusions contained in this document are those of the authors and should not be interpreted as representing the official policies, either expressed or implied, of the Army Research Office or the U.S. Government. The U.S. Government is authorized to reproduce and distribute reprints for Government purposes notwithstanding any copyright notation herein.


\newpage
\bibliography{example_paper}

\begin{thebibliography}{82}
\providecommand{\natexlab}[1]{#1}
\providecommand{\url}[1]{\texttt{#1}}
\expandafter\ifx\csname urlstyle\endcsname\relax
  \providecommand{\doi}[1]{doi: #1}\else
  \providecommand{\doi}{doi: \begingroup \urlstyle{rm}\Url}\fi

\bibitem[Agarwal et~al.(2019)Agarwal, Jiang, Kakade, and Sun]{agarwal2019reinforcement}
A.~Agarwal, N.~Jiang, S.~M. Kakade, and W.~Sun.
\newblock Reinforcement learning: Theory and algorithms.
\newblock \emph{CS Dept., UW Seattle, Seattle, WA, USA, Tech. Rep}, pages 10--4, 2019.

\bibitem[Agarwal et~al.(2020)Agarwal, Sikchi, Gulino, Wilkinson, and Gautam]{agarwal2020imitative}
S.~Agarwal, H.~Sikchi, C.~Gulino, E.~Wilkinson, and S.~Gautam.
\newblock Imitative planning using conditional normalizing flow.
\newblock \emph{arXiv preprint arXiv:2007.16162}, 2020.

\bibitem[Al-Hafez et~al.(2023)Al-Hafez, Tateo, Arenz, Zhao, and Peters]{al2023ls}
F.~Al-Hafez, D.~Tateo, O.~Arenz, G.~Zhao, and J.~Peters.
\newblock Ls-iq: Implicit reward regularization for inverse reinforcement learning.
\newblock \emph{arXiv preprint arXiv:2303.00599}, 2023.

\bibitem[Baird(1995)]{baird1995residual}
L.~Baird.
\newblock Residual algorithms: Reinforcement learning with function approximation.
\newblock In \emph{Machine Learning Proceedings 1995}, pages 30--37. Elsevier, 1995.

\bibitem[Ball et~al.(2023)Ball, Smith, Kostrikov, and Levine]{ball2023efficient}
P.~J. Ball, L.~Smith, I.~Kostrikov, and S.~Levine.
\newblock Efficient online reinforcement learning with offline data.
\newblock \emph{arXiv preprint arXiv:2302.02948}, 2023.

\bibitem[Bas-Serrano et~al.(2021)Bas-Serrano, Curi, Krause, and Neu]{bas2021logistic}
J.~Bas-Serrano, S.~Curi, A.~Krause, and G.~Neu.
\newblock Logistic q-learning.
\newblock In \emph{International Conference on Artificial Intelligence and Statistics}, pages 3610--3618. PMLR, 2021.

\bibitem[Bertsekas and Tsitsiklis(1995)]{bertsekas1995neuro}
D.~P. Bertsekas and J.~N. Tsitsiklis.
\newblock Neuro-dynamic programming: an overview.
\newblock In \emph{Proceedings of 1995 34th IEEE conference on decision and control}, volume~1, pages 560--564. IEEE, 1995.

\bibitem[Borkar(1988)]{borkar1988convex}
V.~S. Borkar.
\newblock A convex analytic approach to markov decision processes.
\newblock \emph{Probability Theory and Related Fields}, 78\penalty0 (4):\penalty0 583--602, 1988.

\bibitem[Chen et~al.(2021{\natexlab{a}})Chen, Lu, Rajeswaran, Lee, Grover, Laskin, Abbeel, Srinivas, and Mordatch]{chen2021decision}
L.~Chen, K.~Lu, A.~Rajeswaran, K.~Lee, A.~Grover, M.~Laskin, P.~Abbeel, A.~Srinivas, and I.~Mordatch.
\newblock Decision transformer: Reinforcement learning via sequence modeling.
\newblock \emph{Advances in neural information processing systems}, 34:\penalty0 15084--15097, 2021{\natexlab{a}}.

\bibitem[Chen et~al.(2021{\natexlab{b}})Chen, Wang, Zhou, and Ross]{chen2021randomized}
X.~Chen, C.~Wang, Z.~Zhou, and K.~Ross.
\newblock Randomized ensembled double q-learning: Learning fast without a model.
\newblock \emph{arXiv preprint arXiv:2101.05982}, 2021{\natexlab{b}}.

\bibitem[Cheng et~al.(2022)Cheng, Xie, Jiang, and Agarwal]{cheng2022adversarially}
C.-A. Cheng, T.~Xie, N.~Jiang, and A.~Agarwal.
\newblock Adversarially trained actor critic for offline reinforcement learning.
\newblock In \emph{International Conference on Machine Learning}, pages 3852--3878. PMLR, 2022.

\bibitem[Dai et~al.(2017)Dai, He, Pan, Boots, and Song]{dai2017learning}
B.~Dai, N.~He, Y.~Pan, B.~Boots, and L.~Song.
\newblock Learning from conditional distributions via dual embeddings.
\newblock In \emph{Artificial Intelligence and Statistics}, pages 1458--1467. PMLR, 2017.

\bibitem[de~Ghellinck and Eppen(1967)]{de1967linear}
G.~T. de~Ghellinck and G.~D. Eppen.
\newblock Linear programming solutions for separable markovian decision problems.
\newblock \emph{Management Science}, 13\penalty0 (5):\penalty0 371--394, 1967.

\bibitem[Denardo(1970)]{denardo1970linear}
E.~V. Denardo.
\newblock On linear programming in a markov decision problem.
\newblock \emph{Management Science}, 16\penalty0 (5):\penalty0 281--288, 1970.

\bibitem[Emmons et~al.(2021)Emmons, Eysenbach, Kostrikov, and Levine]{emmons2021rvs}
S.~Emmons, B.~Eysenbach, I.~Kostrikov, and S.~Levine.
\newblock Rvs: What is essential for offline rl via supervised learning?
\newblock \emph{arXiv preprint arXiv:2112.10751}, 2021.

\bibitem[Eysenbach et~al.(2021)Eysenbach, Levine, and Salakhutdinov]{eysenbach2021replacing}
B.~Eysenbach, S.~Levine, and R.~R. Salakhutdinov.
\newblock Replacing rewards with examples: Example-based policy search via recursive classification.
\newblock \emph{Advances in Neural Information Processing Systems}, 34:\penalty0 11541--11552, 2021.

\bibitem[Florence et~al.(2022)Florence, Lynch, Zeng, Ramirez, Wahid, Downs, Wong, Lee, Mordatch, and Tompson]{florence2022implicit}
P.~Florence, C.~Lynch, A.~Zeng, O.~A. Ramirez, A.~Wahid, L.~Downs, A.~Wong, J.~Lee, I.~Mordatch, and J.~Tompson.
\newblock Implicit behavioral cloning.
\newblock In \emph{Conference on Robot Learning}, pages 158--168. PMLR, 2022.

\bibitem[Fu et~al.(2019)Fu, Kumar, Soh, and Levine]{fu2019diagnosing}
J.~Fu, A.~Kumar, M.~Soh, and S.~Levine.
\newblock Diagnosing bottlenecks in deep q-learning algorithms.
\newblock In \emph{International Conference on Machine Learning}, pages 2021--2030. PMLR, 2019.

\bibitem[Fu et~al.(2020)Fu, Kumar, Nachum, Tucker, and Levine]{fu2020d4rl}
J.~Fu, A.~Kumar, O.~Nachum, G.~Tucker, and S.~Levine.
\newblock D4rl: Datasets for deep data-driven reinforcement learning.
\newblock \emph{arXiv preprint arXiv:2004.07219}, 2020.

\bibitem[Fujimoto and Gu(2021)]{fujimoto2021minimalist}
S.~Fujimoto and S.~S. Gu.
\newblock A minimalist approach to offline reinforcement learning.
\newblock \emph{Advances in neural information processing systems}, 34:\penalty0 20132--20145, 2021.

\bibitem[Fujimoto et~al.(2018)Fujimoto, Hoof, and Meger]{fujimoto2018addressing}
S.~Fujimoto, H.~Hoof, and D.~Meger.
\newblock Addressing function approximation error in actor-critic methods.
\newblock In \emph{International conference on machine learning}, pages 1587--1596. PMLR, 2018.

\bibitem[Fujimoto et~al.(2019)Fujimoto, Meger, and Precup]{fujimoto2019off}
S.~Fujimoto, D.~Meger, and D.~Precup.
\newblock Off-policy deep reinforcement learning without exploration.
\newblock In \emph{International conference on machine learning}, pages 2052--2062. PMLR, 2019.

\bibitem[Garg et~al.(2021)Garg, Chakraborty, Cundy, Song, and Ermon]{garg2021iq}
D.~Garg, S.~Chakraborty, C.~Cundy, J.~Song, and S.~Ermon.
\newblock Iq-learn: Inverse soft-q learning for imitation.
\newblock \emph{Advances in Neural Information Processing Systems}, 34:\penalty0 4028--4039, 2021.

\bibitem[Garg et~al.(2023)Garg, Hejna, Geist, and Ermon]{garg2023extreme}
D.~Garg, J.~Hejna, M.~Geist, and S.~Ermon.
\newblock Extreme q-learning: Maxent rl without entropy.
\newblock \emph{arXiv preprint arXiv:2301.02328}, 2023.

\bibitem[Ghasemipour et~al.(2020)Ghasemipour, Zemel, and Gu]{ghasemipour2020divergence}
S.~K.~S. Ghasemipour, R.~Zemel, and S.~Gu.
\newblock A divergence minimization perspective on imitation learning methods.
\newblock In \emph{Conference on Robot Learning}, pages 1259--1277. PMLR, 2020.

\bibitem[Gleave et~al.(2020)Gleave, Dennis, Legg, Russell, and Leike]{gleave2020quantifying}
A.~Gleave, M.~Dennis, S.~Legg, S.~Russell, and J.~Leike.
\newblock Quantifying differences in reward functions.
\newblock \emph{arXiv preprint arXiv:2006.13900}, 2020.

\bibitem[Goldfeld()]{goldfieldinfotheory}
Z.~Goldfeld.
\newblock Ece 5630: Information theory for data transmission, security and machine learning, lecture 8: Duality for f-divergences.

\bibitem[Haarnoja et~al.(2017)Haarnoja, Tang, Abbeel, and Levine]{haarnoja2017reinforcement}
T.~Haarnoja, H.~Tang, P.~Abbeel, and S.~Levine.
\newblock Reinforcement learning with deep energy-based policies.
\newblock In \emph{International conference on machine learning}, pages 1352--1361. PMLR, 2017.

\bibitem[Haarnoja et~al.(2018)Haarnoja, Zhou, Abbeel, and Levine]{haarnoja2018soft}
T.~Haarnoja, A.~Zhou, P.~Abbeel, and S.~Levine.
\newblock Soft actor-critic: Off-policy maximum entropy deep reinforcement learning with a stochastic actor.
\newblock In \emph{International conference on machine learning}, pages 1861--1870. PMLR, 2018.

\bibitem[Hafner et~al.(2023)Hafner, Pasukonis, Ba, and Lillicrap]{hafner2023mastering}
D.~Hafner, J.~Pasukonis, J.~Ba, and T.~Lillicrap.
\newblock Mastering diverse domains through world models.
\newblock \emph{arXiv preprint arXiv:2301.04104}, 2023.

\bibitem[Ho and Ermon(2016)]{ho2016generative}
J.~Ho and S.~Ermon.
\newblock Generative adversarial imitation learning.
\newblock \emph{Advances in neural information processing systems}, 29:\penalty0 4565--4573, 2016.

\bibitem[Hoshino et~al.(2022)Hoshino, Ota, Kanezaki, and Yokota]{hoshino2022opirl}
H.~Hoshino, K.~Ota, A.~Kanezaki, and R.~Yokota.
\newblock Opirl: Sample efficient off-policy inverse reinforcement learning via distribution matching.
\newblock In \emph{2022 International Conference on Robotics and Automation (ICRA)}, pages 448--454. IEEE, 2022.

\bibitem[Janner et~al.(2019)Janner, Fu, Zhang, and Levine]{janner2019trust}
M.~Janner, J.~Fu, M.~Zhang, and S.~Levine.
\newblock When to trust your model: Model-based policy optimization.
\newblock \emph{Advances in neural information processing systems}, 32, 2019.

\bibitem[Ji et~al.(2023)Ji, Luo, Sun, Zhan, Zhang, and Xu]{ji2023seizing}
T.~Ji, Y.~Luo, F.~Sun, X.~Zhan, J.~Zhang, and H.~Xu.
\newblock Seizing serendipity: Exploiting the value of past success in off-policy actor-critic.
\newblock \emph{arXiv preprint arXiv:2306.02865}, 2023.

\bibitem[Kim et~al.(2022{\natexlab{a}})Kim, Lee, Jang, Yang, and Kim]{kim2022lobsdice}
G.-H. Kim, J.~Lee, Y.~Jang, H.~Yang, and K.-E. Kim.
\newblock Lobsdice: offline imitation learning from observation via stationary distribution correction estimation.
\newblock \emph{arXiv preprint arXiv:2202.13536}, 2022{\natexlab{a}}.

\bibitem[Kim et~al.(2022{\natexlab{b}})Kim, Seo, Lee, Jeon, Hwang, Yang, and Kim]{kim2022demodice}
G.-H. Kim, S.~Seo, J.~Lee, W.~Jeon, H.~Hwang, H.~Yang, and K.-E. Kim.
\newblock Demodice: Offline imitation learning with supplementary imperfect demonstrations.
\newblock In \emph{International Conference on Learning Representations}, 2022{\natexlab{b}}.

\bibitem[Kostrikov et~al.(2018)Kostrikov, Agrawal, Dwibedi, Levine, and Tompson]{kostrikov2018discriminator}
I.~Kostrikov, K.~K. Agrawal, D.~Dwibedi, S.~Levine, and J.~Tompson.
\newblock Discriminator-actor-critic: Addressing sample inefficiency and reward bias in adversarial imitation learning.
\newblock \emph{arXiv preprint arXiv:1809.02925}, 2018.

\bibitem[Kostrikov et~al.(2019)Kostrikov, Nachum, and Tompson]{kostrikov2019imitation}
I.~Kostrikov, O.~Nachum, and J.~Tompson.
\newblock Imitation learning via off-policy distribution matching.
\newblock \emph{arXiv preprint arXiv:1912.05032}, 2019.

\bibitem[Kostrikov et~al.(2021)Kostrikov, Nair, and Levine]{kostrikov2021offline}
I.~Kostrikov, A.~Nair, and S.~Levine.
\newblock Offline reinforcement learning with implicit q-learning.
\newblock \emph{arXiv preprint arXiv:2110.06169}, 2021.

\bibitem[Kumar et~al.(2019)Kumar, Fu, Soh, Tucker, and Levine]{kumar2019stabilizing}
A.~Kumar, J.~Fu, M.~Soh, G.~Tucker, and S.~Levine.
\newblock Stabilizing off-policy q-learning via bootstrapping error reduction.
\newblock \emph{Advances in Neural Information Processing Systems}, 32, 2019.

\bibitem[Kumar et~al.(2020)Kumar, Zhou, Tucker, and Levine]{kumar2020conservative}
A.~Kumar, A.~Zhou, G.~Tucker, and S.~Levine.
\newblock Conservative q-learning for offline reinforcement learning.
\newblock \emph{Advances in Neural Information Processing Systems}, 33:\penalty0 1179--1191, 2020.

\bibitem[Lee et~al.(2021)Lee, Jeon, Lee, Pineau, and Kim]{lee2021optidice}
J.~Lee, W.~Jeon, B.~Lee, J.~Pineau, and K.-E. Kim.
\newblock Optidice: Offline policy optimization via stationary distribution correction estimation.
\newblock In \emph{International Conference on Machine Learning}, pages 6120--6130. PMLR, 2021.

\bibitem[Lei~Ba et~al.(2016)Lei~Ba, Kiros, and Hinton]{lei2016layer}
J.~Lei~Ba, J.~R. Kiros, and G.~E. Hinton.
\newblock Layer normalization.
\newblock \emph{ArXiv e-prints}, pages arXiv--1607, 2016.

\bibitem[Levine et~al.(2020)Levine, Kumar, Tucker, and Fu]{levine2020offline}
S.~Levine, A.~Kumar, G.~Tucker, and J.~Fu.
\newblock Offline reinforcement learning: Tutorial, review, and perspectives on open problems.
\newblock \emph{arXiv preprint arXiv:2005.01643}, 2020.

\bibitem[Ma et~al.(2022)Ma, Shen, Jayaraman, and Bastani]{ma2022smodice}
Y.~J. Ma, A.~Shen, D.~Jayaraman, and O.~Bastani.
\newblock Smodice: Versatile offline imitation learning via state occupancy matching.
\newblock \emph{arXiv preprint arXiv:2202.02433}, 2022.

\bibitem[Malek et~al.(2014)Malek, Abbasi-Yadkori, and Bartlett]{malek2014linear}
A.~Malek, Y.~Abbasi-Yadkori, and P.~Bartlett.
\newblock Linear programming for large-scale markov decision problems.
\newblock In \emph{International Conference on Machine Learning}, pages 496--504. PMLR, 2014.

\bibitem[Manne(1960)]{manne1960linear}
A.~S. Manne.
\newblock Linear programming and sequential decisions.
\newblock \emph{Management Science}, 6\penalty0 (3):\penalty0 259--267, 1960.

\bibitem[Mehta and Meyn(2009)]{mehta2009q}
P.~Mehta and S.~Meyn.
\newblock Q-learning and pontryagin's minimum principle.
\newblock In \emph{Proceedings of the 48h IEEE Conference on Decision and Control (CDC) held jointly with 2009 28th Chinese Control Conference}, pages 3598--3605. IEEE, 2009.

\bibitem[Nachum and Dai(2020)]{nachum2020reinforcement}
O.~Nachum and B.~Dai.
\newblock Reinforcement learning via fenchel-rockafellar duality.
\newblock \emph{arXiv preprint arXiv:2001.01866}, 2020.

\bibitem[Nachum et~al.(2019)Nachum, Dai, Kostrikov, Chow, Li, and Schuurmans]{nachum2019algaedice}
O.~Nachum, B.~Dai, I.~Kostrikov, Y.~Chow, L.~Li, and D.~Schuurmans.
\newblock Algaedice: Policy gradient from arbitrary experience.
\newblock \emph{arXiv preprint arXiv:1912.02074}, 2019.

\bibitem[Nair et~al.(2020)Nair, Gupta, Dalal, and Levine]{nair2020awac}
A.~Nair, A.~Gupta, M.~Dalal, and S.~Levine.
\newblock Awac: Accelerating online reinforcement learning with offline datasets.
\newblock \emph{arXiv preprint arXiv:2006.09359}, 2020.

\bibitem[Nakamoto et~al.(2023)Nakamoto, Zhai, Singh, Mark, Ma, Finn, Kumar, and Levine]{nakamoto2023cal}
M.~Nakamoto, Y.~Zhai, A.~Singh, M.~S. Mark, Y.~Ma, C.~Finn, A.~Kumar, and S.~Levine.
\newblock Cal-ql: Calibrated offline rl pre-training for efficient online fine-tuning.
\newblock \emph{arXiv preprint arXiv:2303.05479}, 2023.

\bibitem[Ni et~al.(2021)Ni, Sikchi, Wang, Gupta, Lee, and Eysenbach]{ni2021f}
T.~Ni, H.~Sikchi, Y.~Wang, T.~Gupta, L.~Lee, and B.~Eysenbach.
\newblock f-irl: Inverse reinforcement learning via state marginal matching.
\newblock In \emph{Conference on Robot Learning}, pages 529--551. PMLR, 2021.

\bibitem[Peters and Schaal(2007)]{peters2007reinforcement}
J.~Peters and S.~Schaal.
\newblock Reinforcement learning by reward-weighted regression for operational space control.
\newblock In \emph{Proceedings of the 24th international conference on Machine learning}, pages 745--750, 2007.

\bibitem[Picard-Weibel and Guedj(2022{\natexlab{a}})]{picard2022change}
A.~Picard-Weibel and B.~Guedj.
\newblock On change of measure inequalities for $ f $-divergences.
\newblock \emph{arXiv preprint arXiv:2202.05568}, 2022{\natexlab{a}}.

\bibitem[Picard-Weibel and Guedj(2022{\natexlab{b}})]{wiebelfdivergence}
A.~Picard-Weibel and B.~Guedj.
\newblock On change of measure inequalities for f -divergences.
\newblock 2022{\natexlab{b}}.

\bibitem[Precup(2000)]{precup2000eligibility}
D.~Precup.
\newblock Eligibility traces for off-policy policy evaluation.
\newblock \emph{Computer Science Department Faculty Publication Series}, page~80, 2000.

\bibitem[Precup et~al.(2001)Precup, Sutton, and Dasgupta]{precup2001off}
D.~Precup, R.~S. Sutton, and S.~Dasgupta.
\newblock Off-policy temporal-difference learning with function approximation.
\newblock In \emph{ICML}, pages 417--424, 2001.

\bibitem[Puterman(2014)]{puterman2014markov}
M.~L. Puterman.
\newblock \emph{Markov decision processes: discrete stochastic dynamic programming}.
\newblock John Wiley \& Sons, 2014.

\bibitem[R{\'e}nyi(1961)]{renyi1961measures}
A.~R{\'e}nyi.
\newblock On measures of entropy and information.
\newblock In \emph{Proceedings of the Fourth Berkeley Symposium on Mathematical Statistics and Probability, Volume 1: Contributions to the Theory of Statistics}, pages 547--561. University of California Press, 1961.

\bibitem[Schmidhuber(2019)]{schmidhuber2019reinforcement}
J.~Schmidhuber.
\newblock Reinforcement learning upside down: Don't predict rewards--just map them to actions.
\newblock \emph{arXiv preprint arXiv:1912.02875}, 2019.

\bibitem[Schulman et~al.(2015)Schulman, Levine, Abbeel, Jordan, and Moritz]{schulman2015trust}
J.~Schulman, S.~Levine, P.~Abbeel, M.~Jordan, and P.~Moritz.
\newblock Trust region policy optimization.
\newblock In \emph{International conference on machine learning}, pages 1889--1897. PMLR, 2015.

\bibitem[Schulman et~al.(2017)Schulman, Wolski, Dhariwal, Radford, and Klimov]{schulman2017proximal}
J.~Schulman, F.~Wolski, P.~Dhariwal, A.~Radford, and O.~Klimov.
\newblock Proximal policy optimization algorithms.
\newblock \emph{arXiv preprint arXiv:1707.06347}, 2017.

\bibitem[Sikchi et~al.(2022{\natexlab{a}})Sikchi, Saran, Goo, and Niekum]{sikchi2022ranking}
H.~Sikchi, A.~Saran, W.~Goo, and S.~Niekum.
\newblock A ranking game for imitation learning.
\newblock \emph{arXiv preprint arXiv:2202.03481}, 2022{\natexlab{a}}.

\bibitem[Sikchi et~al.(2022{\natexlab{b}})Sikchi, Zhou, and Held]{sikchi2022learning}
H.~Sikchi, W.~Zhou, and D.~Held.
\newblock Learning off-policy with online planning.
\newblock In \emph{Conference on Robot Learning}, pages 1622--1633. PMLR, 2022{\natexlab{b}}.

\bibitem[Singh et~al.(2022)Singh, Kumar, Vuong, Chebotar, and Levine]{singh2022offline}
A.~Singh, A.~Kumar, Q.~Vuong, Y.~Chebotar, and S.~Levine.
\newblock Offline rl with realistic datasets: Heteroskedasticity and support constraints.
\newblock \emph{arXiv preprint arXiv:2211.01052}, 2022.

\bibitem[Singh and Yee(1994)]{singh1994upper}
S.~P. Singh and R.~C. Yee.
\newblock An upper bound on the loss from approximate optimal-value functions.
\newblock \emph{Machine Learning}, 16:\penalty0 227--233, 1994.

\bibitem[Sun et~al.(2022)Sun, Han, Yang, Ma, Guo, and Zhou]{sun2022optimistic}
H.~Sun, L.~Han, R.~Yang, X.~Ma, J.~Guo, and B.~Zhou.
\newblock Optimistic curiosity exploration and conservative exploitation with linear reward shaping.
\newblock \emph{arXiv preprint arXiv:2209.07288}, 2022.

\bibitem[Sutton and Barto(2018)]{sutton2018reinforcement}
R.~S. Sutton and A.~G. Barto.
\newblock \emph{Reinforcement learning: An introduction}.
\newblock MIT press, 2018.

\bibitem[Swamy et~al.(2021)Swamy, Choudhury, Bagnell, and Wu]{swamy2021moments}
G.~Swamy, S.~Choudhury, J.~A. Bagnell, and S.~Wu.
\newblock Of moments and matching: A game-theoretic framework for closing the imitation gap.
\newblock In \emph{International Conference on Machine Learning}, pages 10022--10032. PMLR, 2021.

\bibitem[Thrun and Schwartz(1993)]{thrun1993issues}
S.~Thrun and A.~Schwartz.
\newblock Issues in using function approximation for reinforcement learning.
\newblock In \emph{Proceedings of the Fourth Connectionist Models Summer School}, volume 255, page 263. Hillsdale, NJ, 1993.

\bibitem[Todorov et~al.(2012)Todorov, Erez, and Tassa]{todorov2012mujoco}
E.~Todorov, T.~Erez, and Y.~Tassa.
\newblock Mujoco: A physics engine for model-based control.
\newblock In \emph{2012 IEEE/RSJ international conference on intelligent robots and systems}, pages 5026--5033. IEEE, 2012.

\bibitem[Tsitsiklis and Van~Roy(1996)]{tsitsiklis1996analysis}
J.~Tsitsiklis and B.~Van~Roy.
\newblock An analysis of temporal-difference learning with function approximation.
\newblock \emph{Rep. LIDS-P-2322). Lab. Inf. Decis. Syst. Massachusetts Inst. Technol. Tech. Rep}, 1996.

\bibitem[Uchendu et~al.(2022)Uchendu, Xiao, Lu, Zhu, Yan, Simon, Bennice, Fu, Ma, Jiao, et~al.]{uchendu2022jump}
I.~Uchendu, T.~Xiao, Y.~Lu, B.~Zhu, M.~Yan, J.~Simon, M.~Bennice, C.~Fu, C.~Ma, J.~Jiao, et~al.
\newblock Jump-start reinforcement learning.
\newblock \emph{arXiv preprint arXiv:2204.02372}, 2022.

\bibitem[Vecerik et~al.(2017)Vecerik, Hester, Scholz, Wang, Pietquin, Piot, Heess, Roth{\"o}rl, Lampe, and Riedmiller]{vecerik2017leveraging}
M.~Vecerik, T.~Hester, J.~Scholz, F.~Wang, O.~Pietquin, B.~Piot, N.~Heess, T.~Roth{\"o}rl, T.~Lampe, and M.~Riedmiller.
\newblock Leveraging demonstrations for deep reinforcement learning on robotics problems with sparse rewards.
\newblock \emph{arXiv preprint arXiv:1707.08817}, 2017.

\bibitem[Viano et~al.(2022)Viano, Kamoutsi, Neu, Krawczuk, and Cevher]{viano2022proximal}
L.~Viano, A.~Kamoutsi, G.~Neu, I.~Krawczuk, and V.~Cevher.
\newblock Proximal point imitation learning.
\newblock \emph{arXiv preprint arXiv:2209.10968}, 2022.

\bibitem[Wu et~al.(2019)Wu, Tucker, and Nachum]{wu2019behavior}
Y.~Wu, G.~Tucker, and O.~Nachum.
\newblock Behavior regularized offline reinforcement learning.
\newblock \emph{arXiv preprint arXiv:1911.11361}, 2019.

\bibitem[Zhang et~al.(2020)Zhang, Dai, Li, and Schuurmans]{zhang2020gendice}
R.~Zhang, B.~Dai, L.~Li, and D.~Schuurmans.
\newblock Gendice: Generalized offline estimation of stationary values.
\newblock \emph{arXiv preprint arXiv:2002.09072}, 2020.

\bibitem[Zheng et~al.(2022)Zheng, Zhang, and Grover]{zheng2022online}
Q.~Zheng, A.~Zhang, and A.~Grover.
\newblock Online decision transformer.
\newblock In \emph{international conference on machine learning}, pages 27042--27059. PMLR, 2022.

\bibitem[Zhu et~al.(2020)Zhu, Lin, Dai, and Zhou]{zhu2020off}
Z.~Zhu, K.~Lin, B.~Dai, and J.~Zhou.
\newblock Off-policy imitation learning from observations.
\newblock \emph{Advances in Neural Information Processing Systems}, 33:\penalty0 12402--12413, 2020.

\bibitem[Ziebart et~al.(2008)Ziebart, Maas, Bagnell, Dey, et~al.]{ziebart2008maximum}
B.~D. Ziebart, A.~L. Maas, J.~A. Bagnell, A.~K. Dey, et~al.
\newblock Maximum entropy inverse reinforcement learning.
\newblock In \emph{Aaai}, volume~8, pages 1433--1438. Chicago, IL, USA, 2008.

\bibitem[Zolna et~al.(2020)Zolna, Novikov, Konyushkova, Gulcehre, Wang, Aytar, Denil, de~Freitas, and Reed]{zolna2020offline}
K.~Zolna, A.~Novikov, K.~Konyushkova, C.~Gulcehre, Z.~Wang, Y.~Aytar, M.~Denil, N.~de~Freitas, and S.~Reed.
\newblock Offline learning from demonstrations and unlabeled experience.
\newblock \emph{arXiv preprint arXiv:2011.13885}, 2020.

\end{thebibliography}
\bibliographystyle{abbrvnat}

\newpage
\appendix
\newpage
\appendix
\onecolumn

\addcontentsline{toc}{section}{Appendix} 
\part{Appendix} 
\parttoc 




\section{Limitations and Negative Societal Impacts}

\textbf{Limitations}: Notably, none of the supervised learning approaches, like Upside-Down RL~\citep{schmidhuber2019reinforcement}, DT~\citep{chen2021decision,zheng2022online} and RvS~\citep{emmons2021rvs}, can be cast as instances of this framework. 
The main reason is that there is no return optimization term in the supervised learning loss for those approaches.
Second, our framework is constrained to the regularized RL setting where the regularization coefficient $\alpha > 0$ (see Eq.~\ref{eq:reg_rl_main} ).
Finally, our framework considers a static regularization distribution $d^O$, and we believe that regularization with dynamically changing distributions, like those in on-policy methods (PPO~\citep{schulman2017proximal}, TRPO~\citep{schulman2015trust}, etc) requires a more careful theoretical treatment. Another limitation of the paper is the assumption that the expert demonstrations used in the imitation learning process are always of high quality and provide the desired behavior. In practice, obtaining high-quality demonstrations can be challenging, especially in complex environments where the behavior of the expert is not always clear. The performance of the proposed approach could be limited in cases where the expert demonstrations are of poor quality or where the behavior of the expert does not correspond to the desired behavior. 


\textbf{Negative Societal Impacts}: As machine learning algorithms continue to grow in sophistication, it is important to consider the potential risks and harms associated with their use. One such area of concern is imitation learning, which involves training a model to imitate a desired behavior by providing it with examples of that behavior. However, this approach can be problematic if the demonstration data includes harmful behaviors, whether intentional or not. Even in cases where the demonstration data is of high quality and desirable behavior is learned, the algorithm may still fall short of providing sufficient guarantees of performance. In high-stakes domains, the use of such algorithms without appropriate safety checks on learned behaviors could lead to serious consequences. As such, it is crucial to carefully consider the potential risks and benefits of imitation learning, and to develop strategies for ensuring safe and effective use of these algorithms in real-world application

\section{Dual Reinforcement Learning}
\label{ap:theory}

\subsection{A Review of Dual-RL}
\label{ap:dual_rl_review}
In this section, we aim to give a self-contained review for Dual Reinforcement Learning. For a more thorough read, refer to~\citep{nachum2020reinforcement}. Our proofs will take a different approach than~\citep{nachum2020reinforcement} which we believe leads to easier exposition and understanding.

\subsubsection{Convex conjugates and $f$-divergence}
\label{ap:convex_conjugation}
We first review the basics of duality in reinforcement learning. Let $f:\R_+\to \mathbb{R}$ be a convex function. The convex conjugate $f^*: \R_+ \rightarrow \R$ of $f$ is defined by:
\begin{equation}
    \label{eq:cx_conjugate_def}
    f^*(y)= \text{sup}_{x \in \R_+}[xy -f(x)].
\end{equation}
The convex conjugates have the important property that $f^*$ is also convex and the convex conjugate of $f^*$ retrieves back the original function $f$. 
We also note an important relation regarding $f$ and $f^*$: $(f^*)^{'}=(f')^{-1}$, where the $'$ notation denotes first derivative.

Going forward, we would be dealing extensively with $f$-divergences. Informally, $f$-divergences~\citep{renyi1961measures} are a measure of distance between two probability distributions. Here's a more formal definition:

Let $P$ and $Q$ be two probability distributions over a space $\mathcal{Z}$ such that $P$ is absolutely continuous with respect to $Q$ \footnote{
Let $z$ denote the random variable. 
For any measurable set $Z \subseteq \mathcal{Z}$, $Q(z \in Z) = 0$ implies $P(z \in Z) = 0$.}. 
For a function $f: \R_+ \to \mathbb{R}$ that is a convex lower semi-continuous and $f(1)=0$,
the $f$-divergence of $P$ from $Q$ is  
\begin{equation}
\f{P}{Q}=\mathbb{E}_{z\sim Q}\left[f\left(\frac{P(z)}{Q(z)}\right)\right].
 \end{equation}
Table~\ref{tbl:div} lists some common $f$-divergences with their generator functions $f$ and the conjugate functions $f^*$.

\begin{table}[t]
	\centering
	\vskip5pt
	\def\arraystretch{1.5}
	\begin{tabular}{l|c|c}
        \toprule
		Divergence Name & Generator $f(x)$ & Conjugate $f^*(y)$  \\ \hline
		Reverse KL & $x\log{x}$ & $e^{(y-1)}$  \\
		Squared Hellinger & $(\sqrt{x} - 1)^2$ & $\frac{y}{1-y}$ \\ 
		Pearson $\chi ^{2}$ & $(x-1)^2$ & $y + \frac{y^2}{4}$ \\
		Total Variation & $\frac{1}{2} | x-1| $ & $y$ if $y \in [-\frac{1}{2}, \frac{1}{2}]$ otherwise $\infty$ \tablefootnote{For our derivations, we will consider smooth extensions of TV conjugate function} \\
		Jensen-Shannon & $-(x+1)\log(\frac{x+1}{2}) + x \log{x}$ & $-\log{(2- e^y)}$ \\
  \bottomrule
	\end{tabular}
 \vskip5pt
	\caption{List of common $f$-divergences.} 
	\label{tbl:div}
\end{table}

\subsubsection{An Overview of Reinforcement Learning via Lagrangian Duality}
\label{ap:dual_rl}


In this section, we give a more detailed review than we are able to in the main text due to space constraints. We consider RL problems with their average return considered in the form of a convex program with linear constraints~\citep{manne1960linear}, to which we apply Lagrangian duality to obtain corresponding constraint-free problems. This framework was first introduced in the work of~\citet{nachum2020reinforcement}, which obtains the same formulations as ours via Fenchel-Rockfeller duality. Here we use Lagrangian duality for its simplicity and popularity. 

Consider the following regularized policy learning problem 
\begin{equation}
    \label{eq:reg_rl_ap}
    \max_\pi \, J(\pi)=\mathbb{E}_{d^\pi(s,a)}[r(s,a)] -\alpha \f{d^\pi(s,a)}{d^O(s,a)},
\end{equation}
where 
$\f{d^\pi(s,a)}{d^O(s,a)}$ is a conservatism regularizer that encourages the visitation distribution of $\pi$ to stay close to some distribution $d^O$, and
$\alpha$ is a temperature parameter that balances the expected return and the conservatism.

An interesting fact is that $J(\pi)$ can be rewritten as a convex problem that searches for an \textit{achievable} visitation distribution that satisfies the \textit{Bellman-flow} constraints:
\begin{equation}
 \begin{aligned}
\label{eq:primal_rl_q_ap}
             J(\pi) & = \max_{d} \; \mathbb{E}_{d(s,a)}[r(s,a)]-{\alpha}\f{d(s,a)}{d^O(s,a)}\\
    & \text{s.t} \resizebox{0.85 \hsize}{!}{$\; d(s,a)=(1-\gamma)d_0(s).\pi(a|s)+\gamma \textstyle \sum_{s',a'} d(s',a')p(s|s',a')\pi(a|s), \; \forall s \in \S, a \in \A.$}
\end{aligned}    
\end{equation}
Applying Lagrangian duality and convex conjugate~\eqref{eq:cx_conjugate_def} to this problem, we can convert it to an unconstrained problem with dual variables $Q(s, a)$ defined for all $s, a \in \S \times \A$:
\begin{equation}
\label{eq:dual-Q-inner_ap}
 \min_{Q} (1-\gamma) \E{s\sim d_0,a \sim \pi(s)}{Q(s,a)} + {\alpha}\E{(s,a)\sim d^O}{f^*\left(\left[\bellman Q(s,a)-Q(s,a)\right]/
\alpha \right)},
\end{equation}
where $f^*$ is the convex conjugate of $f$. We defer the derivation to the next section.
As problem \eqref{eq:primal_rl_q_ap} is convex, strong duality holds and problems \eqref{eq:primal_rl_q_ap}
and \eqref{eq:dual-Q-inner_ap} have the same optimal objective value up to a constant scaling\footnote{We scaled the dual problem by $1/\alpha$ for derivation simplicity.}. We refer to the nested policy learning problem where $J(\pi)$ is of form \eqref{eq:primal_rl_q_ap} as \primalQ and the joint problem with scaled $J(\pi)$ of form \eqref{eq:dual-Q-inner_ap}
as \dualQ.
\begin{align}
\colorbox{RoyalBlue!15}{\primalQ} &\; \max_\pi \, [J(\pi) \, \text{in the form Eq.~\eqref{eq:primal_rl_q_main}]}, \\
\label{eq:dual-Q}
\colorbox{Goldenrod!30}{\dualQ} & \; 
\resizebox{0.8\hsize}{!}{$\max_{\pi} \min_{Q} (1-\gamma)\E{s\sim d_0,a \sim \pi(s)}{Q(s,a)} +{\alpha}\E{(s,a)\sim d^O}{f^*\left(\left[ \bellman Q(s,a)-Q(s,a)\right]/ \alpha \right)}$.}
 \end{align}
In fact, problem~\eqref{eq:primal_rl_q_ap} is overconstrained -- the maximization w.r.t $d$ is unnecessary, as for a fixed $\pi$ the $|\mathcal{S}| \times |\mathcal{A}|$ equality constraints already uniquely determine a solution $d^\pi$~\citep{puterman2014markov}. 
Let $\pi^*, d^*$ be the optimal policy and corresponding visitation distribution.
In fact, we can relax the constraints to get another problem~\citep{agarwal2019reinforcement} with the same optimal solution $d^*$, which we call \primalV below:
\begin{equation}
  \begin{aligned}
\label{eq:primal_rl_v_ap}
\colorbox{RoyalBlue!15}{\primalV} &~~
    \max_{d \geq 0}  \; \mathbb{E}_{d(s,a)}[r(s,a)]-\alpha\f{d(s,a)}{d^O(s,a)}\\
   &~~ \text{s.t} \; \textstyle \sum_{a\in\mathcal{A}} d(s,a)=(1-\gamma)d_0(s)+\gamma \sum_{(s',a') \in \S \times \A } d(s',a') p(s|s',a'), \; \forall s \in \S.
\end{aligned}  
\end{equation}
Comparing with problem~\eqref{eq:primal_rl_q_ap}, the constraints are relaxed and there is no policy $\pi$ in this formulation. 
In fact, as opposed to \primalQ, which needs to solve nested inner problems, \primalV solves a single problem to obtain $d^*$, from which we can recover $\pi^*$ via Eq.~\eqref{eq:recover_pi_from_d}\footnote{Eq.~\eqref{eq:recover_pi_from_d} can be easily computed for discrete actions, yet it is difficult for continuous actions. While our analysis focuses on the tabular case,
we discuss two methods for recovering $\pi^*$ for continuous actions in Appendix~\ref{ap:recovering_policy}.}:
\begin{equation}
    \label{eq:recover_pi_from_d}
    \pi(a|s) = d^\pi(s,a) / \textstyle \sum_{a \in \A}d^\pi(s, a).
\end{equation}
Similarly, we consider the Lagrangian dual of \eqref{eq:primal_rl_v_ap}, with dual variables $V(s)$ defined for all $s \in S$:
\begin{align}
\label{eq:dual-V_ap}
\colorbox{Goldenrod!30}{\dualV} \;
\min_{V} {(1-\gamma)}\E{s \sim d_0}{V(s)} +{\alpha}\E{(s,a)\sim d^O}{f^*_p\left(\left[\mathcal{T}V(s,a)-V(s))\right]/
\alpha\right)},
\end{align}
where $f^*_p$ is a variant of $f^*$ defined in Eq.~\eqref{eq:f_star_p_def}. Such modification is to cope with the nonnegativity constraint $d(s, a) \geq 0$ in \primalV. This constraint is ignored in \primalQ because the constraints of the inner problem~\eqref{eq:primal_rl_q_ap} already uniquely identify the solution. See Appendix~\ref{app:derive_dual_v} for the derivation.
As before, strong duality holds here (up to a factor of $1/\alpha$), and we can compute the optimal policy $\pi^*$ after obtaining $V^*$.
We discuss this in detail in Appendix~\ref{ap:recovering_policy}.

\emph{Remark 1.}  The above formulations generalizes to the popular MaxEnt RL framework, where the objective $J(\pi)$ contains an extra policy entropy regularizer. One only needs to replace
the Bellman operator $\mathcal{T}^\pi_r$ by its soft variant: $\mathcal{T}^\pi_{r, \text{soft}} Q(s,a) = r(s,a) + \gamma \E{s', a'}{Q(s',a') - \log \pi(a'|s')}$.

\emph{Remark 2.} We derive the dual problems via the Lagrangian duality. 
Taking the \primalQ problem as an example, the key step which bridges its Lagrangian dual problem $\min_Q \max_d L(Q, d)$ and the final formulation \dualQ is that the maximizer $d^*$ of the inner problem has a closed form solution. Equivalently, we can rewrite the inner problem $\max_d L(Q, d)$ via the convex conjugate~\eqref{eq:f_cvx_conjugate}, which eliminates the variable $d$. 
The Fenchel-Rockerfeller duality provides an alternative way to directly reach the same formulation, where one first rewrites the linear constraints as part of the objective using the Dirac delta function~\citep{nachum2020reinforcement}.\looseness=-1

\emph{Remark 3.} The dual formulations have a few appealing properties. (a)  They allow us to transform constrained distribution-matching problems,  w.r.t previously logged data, into unconstrained forms.
(b)  One can show that the gradient of \dualQ w.r.t $\pi$, when $Q$ is optimized for the inner problem, is the on-policy policy gradient computed by off-policy data. This key property relieves the instability or divergence issue in off-policy learning. (c)  The dual framework can be extended to the max-entropy RL setting, where $J(\pi)$  consists of additional entropy regularization, by replacing Bellman-operator with their soft Bellman counterparts~\citep{haarnoja2017reinforcement}.

\subsubsection{Deriving \texttt{dual-Q}}
\label{app:derive_dual_q}

We again consider the RL problem as a maximization of a convex program for estimating policy performance $J(\pi)$ by considering optimization over \textit{achievable} state-action visitations (i.e $\max_\pi J(\pi)$):
\begin{align}
\label{eq:primal_rl}
    &\max_{\pi} \bigg[ \max_{d\ge0} \mathbb{E}_{d(s,a)}[r(s,a)]-\textcolor{red}{\alpha}\f{d(s,a)}{d^O(s,a)}\\
    &\text{s.t}~~d(s,a)=(1-\gamma)d_0(s).\pi(a|s)+\gamma \sum_{s',a'} d(s',a')p(s|s',a')\pi(a|s) \bigg],
\end{align}
where $\alpha$ allows us to weigh policy improvement against conservatism from staying close to the state-action distribution $d^O$. \reb{We will assume that $d^O$ has non-zero coverage over $\mathcal{S} \times \mathcal{A}$ for the derivation below.}

A careful reader may notice that the inner problem is overconstrained and overparameterized. The solution to the inner maximization problem with respect to $d$ is uniquely determined by the $|\mathcal{S}|\times |\mathcal{A}|$ linear constraints, and the nonnegativity constraint $d \geq 0$ is not necessary. 
Moreover, given a fixed policy $\pi$, the solution of the inner problem is its visitation distribution $d^\pi$.


The constraints of the inner problem are known as the \textit{Bellman flow equations} that an achievable stationary state-action distribution must satisfy. The next question is how can we solve it? Here is where Lagrangian duality comes into play. First, we form the Lagrangian dual of our original optimization problem, transforming our constrained optimization into an unconstrained form. This introduces additional optimization variables - the Lagrange multipliers $Q$. 

As mentioned before, we can discard the nonnegativity constraint $d \geq 0$ as the other constraints imply a unique solution for $d$. Focusing on the inner optimization problem, we optimize the Lagrangian dual problem: 
\begin{align*}
    & \min_{Q(s,a)} \max_{d} \E{s,a\sim d(s,a)}{r(s,a)}-\alpha \f{d(s,a)}{d^O(s,a)}\\
    &+\sum_{s,a} Q(s,a)\left((1-\gamma)d_0(s).\pi(a|s)+\gamma \sum_{s',a'} d(s',a')p(s|s',a')\pi(a|s)-d(s,a)\right),
\end{align*}
where $Q(s,a)$ are the Lagrange multipliers associated with the equality constraints. 
We can now do some simple algebraic manipulation to further simplify it:
\begin{align}
    & \min_{Q(s,a)} \max_{d}\E{s,a\sim d(s,a)}{r(s,a)}-\alpha \f{d(s,a)}{d^O(s,a)} \nonumber\\
    &+\sum_{s,a} Q(s,a)\left((1-\gamma)d_0(s).\pi(a|s)+\gamma \sum_{s',a'} d(s',a')p(s|s',a')\pi(a|s)-d(s,a)\right)\\
   =& \min_{Q(s,a)}\max_{d} (1-\gamma)\E{d_0(s),\pi(a|s)}{Q(s,a)}  \label{eq:interchange_min_max} \nonumber \\
    &+\E{s,a\sim d}{r(s,a)+\gamma \sum_{s',\reb{a'}} p(s'|s,a)\pi(a'|s')Q(s',a')-Q(s,a)}-\alpha\f{d(s,a)}{d^O(s,a)},
\end{align}
where we swap the maximum and minimum in the last step as strong duality holds for this problem.
This is equivalent to solving the following scaled objective (scaled by $1/\alpha$).
\begin{align}
    &\min_{Q(s,a)}\max_{d} \frac{(1-\gamma)}{\alpha}\E{d_0(s),\pi(a|s)}{Q(s,a)} \nonumber \\
    &+\E{s,a\sim d}{(r(s,a)+\gamma \sum_{s',\reb{a'}} p(s'|s,a)\pi(a'|s')Q(s',a')-Q(s,a))/\alpha}-\f{d(s,a)}{d^O(s,a)}\\
  = & \min_{Q(s,a)} \frac{(1-\gamma)}{\alpha}\E{d_0(s),\pi(a|s)}{Q(s,a)} \nonumber\\
    \label{eq:estimating_performance_dual_Q}
    &+\E{s,a\sim d^O}{f^*((r(s,a)+\gamma \sum_{s',\reb{a'}} p(s'|s,a)\pi(a'|s')Q(s',a')-Q(s,a))/\alpha)},
\end{align}
where we applied the convex conjugate (Eq.~\eqref{eq:cx_conjugate_def}) in the last step.
To see this more clearly, let $y(s,a)= \left( r(s,a)+\gamma \sum_{s',\reb{a'}} p(s'|s,a)\pi(a'|s')Q(s',a')-Q(s,a) 
 \right) / \alpha$. Then, 
under mild conditions that the interchangeability principle~\citep{dai2017learning} is satisfied, 
and $d^O$ has sufficient support
over $\S \times \A$~\citep{nachum2020reinforcement}, it holds that
\begin{align}
\label{eq:f_cvx_conjugate}
      & \max_{d} \E{s,a\sim d}{y(s,a)}-\f{d(s,a)}{d^O(s,a)}\\
    = & \max_{d}\E{s,a\sim d^O}{\frac{d(s,a)}{d^O(s,a)}y(s,a)-f\left(\frac{d(s,a)}{d^O(s,a)}\right)}\\
    = & \E{d^O}{f^*(y(s,a))}.
\end{align}

We have transformed the problem of computing $J(\pi)$ to solving Eq.~\eqref{eq:estimating_performance_dual_Q}. Finally, the policy optimization problem $\max_\pi J(\pi)$ is reduced to solving the following min-max optimization problem, which we will refer to as $\texttt{dual-Q}$:
\begin{equation}
\resizebox{1\hsize}{!}{%
    $\max_{\pi}\min_{Q} \frac{(1-\gamma)}{\alpha}\E{d_0(s),\pi(a|s)}{Q(s,a)} +\E{s,a\sim d^O}{f^*((r(s,a)+\gamma \sum_{s',\reb{a'}} p(s'|s,a)\pi(\reb{a'}|s')Q(s',a')-Q(s,a))/\alpha)}$.
}
\end{equation}

 Table~\ref{tbl:div} lists the corresponding convex conjugates $f^*$ for common $f$-divergences. 

In the case of deterministic policy and deterministic dynamics, the above-obtained optimization takes a  simpler form:
\begin{equation}
    \max_{\pi(a|s)}\min_{Q(s,a)} \frac{(1-\gamma)}{\alpha}\E{\rho_0(s)}{Q(s,\pi(s))}+\E{s,a\sim d^O}{f^*((r(s,a)+\gamma Q(s',\pi(s'))-Q(s,a))/\alpha)}
\end{equation}
Now, we have seen how we can transform a regularized RL problem into its $\texttt{dual-Q}$ form which uses Lagrange variables in the form of state-action functions. Interestingly, we can go further to transform the regularized RL problem into Lagrange variables (V) that only depend on the state, and in doing so we also get rid of the two-player nature (min-max optimization) in the $\texttt{dual-Q}$. 

\subsubsection{Deriving \texttt{dual-V}}
\label{app:derive_dual_v}
One important constraint we have not discussed so far is that the variable $d$ we are optimizing must be nonnegative. 
This constraint is not needed for \primalQ, as for the inner problem~\eqref{eq:primal_rl_q_main},
the solution is uniquely determined by the constraints. Nonetheless, it is important we consider this constraint for \primalV and derive the correct dual problem.

In \primalV, we formulate the visitation constraints to depend solely on states rather than state-action pairs. Note that doing this does not change the solution $\pi^*$ for the regularized RL problem (Eq~\eqref{eq:reg_rl_ap}). We consider $\alpha=1$ for the sake of exposition. Interested readers can derive the result for $\alpha \neq 1$ as in the \texttt{dual-Q} case above.  Recall the formulation of \primalV:
\begin{align}
    \max_{d\ge0} & ~~\mathbb{E}_{d(s,a)}[r(s,a)]-\f{d(s,a)}{d^O(s,a)}\nonumber\\
    &\text{s.t}~~\sum_{a\in\mathcal{A}} d(s,a)=(1-\gamma)d_0(s)+\gamma \sum_{s',a'} d(s',a')p(s|s',a').
\end{align}
As before, we construct the Lagrangian dual to this problem. Note that our constraints now solely depend on $s$.
\begin{align}
   & \min_{V(s)}  \max_{d\ge 0}  \E{\reb{s,a}\sim d(s,a)}{r(s,a)}-\f{d(s,a)}{d^O(s,a)} \nonumber \\
   & \hskip40pt +\sum_sV(s)\left((1-\gamma)d_0(s)+\gamma \sum_{s',a'} d(s',a')p(s|s',a')-\reb{\sum_{a\in\mathcal{A}}}d(s,a)\right)
\end{align}



Using similar algebraic manipulations we used to obtain $\texttt{dual-Q}$ in Section~\ref{app:derive_dual_q}, we have :
\begin{flalign}
    & \min_{V(s)} \max_{d(s,a)\ge0} \E{s,a\sim d(s,a)}{r(s,a)}-\f{d(s,a)}{d^O(s,a)} \nonumber\\
    &\quad +\E{s,a\sim d}{r(s,a)+\gamma \sum_{s'} p(s'|s,a)V(s')-V(s)}-\f{d(s,a)}{d^O(s,a)} \label{eq:dual_v_to_postivity}  \\
         = &\min_{V(s)}\max_{d(s,a)\ge0} (1-\gamma)\E{d_0(s)}{V(s)}\nonumber\\
   &\quad  +\E{s,a\sim d}{r(s,a)+\gamma \sum_{s'} p(s'|s,a)V(s')-V(s)}-\f{d(s,a)}{d^O(s,a)}\\
     = & \min_{V(s)}\max_{d(s,a)\ge0}  (1-\gamma)\E{d_0(s)}{V(s)} \nonumber\\
   &\quad  + \E{s,a\sim d^O}{\tfrac{d(s,a)}{d^O(s,a)}\big(r(s,a)+\gamma \sum_{s'} p(s'|s,a)V(s')-V(s) \big)}- \E{s,a\sim d^O}{f\big(\tfrac{d(s,a)}{d^O(s,a)}\big)}
   \label{eq:dual_v_derivation_is}
\end{flalign}

Let $w(s,a)=\frac{d(s,a)}{d^O(s,a)}$
and $\tdV(s,a) = r(s,a)+\gamma \sum_{s'} p(s'|s,a)V(s')-V(s)$ denote the TD error. The last equation becomes
\begin{align}
     \label{eq:intermediate_primal_v}
     \min_{V(s)}\max_{w(s,a)\ge0}  (1-\gamma)\E{d_0(s)}{V(s)}
    +\E{s,a\sim d^O}{w(s,a)(\tdV(s,a))}- \E{s,a\sim d^O}{f(w(s,a))}.
\end{align}
We now direct the attention to the inner maximization problem and derive a closed-form solution for it. Consider the Lagrangian dual problem of it: 
\begin{align}
    \min_{\lambda \ge 0 } \max_{w(s,a)} \E{s,a\sim d^O}{w(s,a)(\tdV(s,a))}- \E{s,a\sim d^O}{f(w(s,a))} +\sum_{s,a} \lambda(s,a)w(s,a)
\end{align}
where the parameters $\lambda(s, a)$ for all $s \in S$ and $a \in A$ are the Lagrange multipliers. Since strong duality holds, we can use the KKT constraints to find the optimal solutions $w^*(s,a)$ and $\lambda^*(s,a)$:
\begin{enumerate}[leftmargin=*]\itemsep=0pt
    \item \textbf{Primal feasibility} $w^*(s,a)\ge0~~\forall~s,a$\\
    \item \textbf{Dual feasibility} $\lambda^*(s,a)\ge0~~\forall~s,a$\\
    \item \textbf{Stationarity} $d^O(s,a)(-f'(w^*(s,a))+\tdV(s,a)) \reb{+\lambda^*(s,a)}=0~~\forall~s,a$\\
    \item \textbf{Complementary Slackness} $w^*(s,a)\lambda^*(s,a)=0~~\forall~s,a$
\end{enumerate}

Using stationarity we have the following:
\begin{equation}
   \reb{d^O(s,a)} f'(w^*(s,a)) = \reb{d^O(s,a)}\tdV(s,a)+\lambda^*(s,a)~~\forall~s,a
\end{equation}
Now using complementary slackness only two cases are possible $w^*(s,a)\ge0$ or $\lambda^*(s,a)\ge0$.  Combining both cases we arrive at the following solution for this constrained optimization:
\begin{equation}
\label{eq:sol_w_star}
    w^*(s,a) = \max\left(0,{f'}^{-1}(\tdV(s,a)) \right)
\end{equation}
We refer to the resulting function after plugging the solution for $w^*$ back in Eq.~\eqref{eq:intermediate_primal_v} and refer to the closed form solution for $d$ in second and third term as $f^*_p$.
\begin{equation}
\label{eq:sol_f_star_p_in_w_star}
    f^*_p(\tdV(s,a)) = w^*(s,a)(\tdV(s,a))-{f(w^*(s,a))}
\end{equation}
Plugging in $w^*(s,a)$ from Eq.~\eqref{eq:sol_w_star} to Eq.~\eqref{eq:sol_f_star_p_in_w_star}, we get:
\begin{equation}
    \label{eq:f_star_p_def}
    f^*_p(\tdV(s,a)) = \max\left(0,{f'}^{-1}(\tdV(s,a)) \right)(\tdV(s,a))-{f\left(\max\left(0,{f'}^{-1}(\tdV(s,a)) \right)\right)}
\end{equation}

Note that we get the original conjugate $f^*$ back if we do not consider the nonnegativity constraints:
\begin{equation}
    f^*(s,a) ={f'}^{-1}(\tdV(s,a))(\tdV(s,a))- {f({f'}^{-1}(\tdV(s,a)))}.
\end{equation}
Finally, we have the following optimization to solve for \texttt{dual-V} when considering the nonnegativity constraints:
\begin{summarybox}
\centering
$\texttt{dual-V}$: $\;
     \min_{V(s)}  (1-\gamma)\E{s \sim d_0}{V(s)} + \E{(s,a)\sim d^O}{f^*_p(\tdV(s,a))} $
\end{summarybox}

Some works e.g. SMODICE~\citep{ma2022smodice}, ignore the nonnegativity constraints and use the corresponding \dualV formulation
\begin{summarybox}
$\dualV \text{\, (w/o nonneg. constraints)}$: $\min_{V} (1-\gamma)\E{s\sim d_0}{V(s)} +\E{(s,a)\sim d^O}{f^*(\tdV(s,a)}$.
\end{summarybox}

\subsubsection{Discussion on Dual Formulations}

In summary, we have two dual formulations for regularized policy learning:
\begin{summarybox}
$\texttt{dual-Q}$: $
\max_{\pi}\min_{Q} (1-\gamma)\E{d_0(s),\pi(a|s)}{Q(s,a)} \nonumber\\
~~~~~~~~~~~~~~~~~~~~~~~~~~~~~~+\E{s,a\sim d^O}{f^*\left(r(s,a)+\gamma \sum_{s'} p(s'|s,a)\pi(a'|s')Q(s',a')-Q(s,a)\right)}$
\end{summarybox}
and
\begin{summarybox}
\centering
$\texttt{dual-V}$: $\;
     \min_{V(s)}  (1-\gamma)\E{s \sim d_0}{V(s)} + \E{(s,a)\sim d^O}{f^*_p(\tdV(s,a))} $
\end{summarybox}

The above derivations for dual of primal RL formulation - $\texttt{dual-Q}$ and $\texttt{dual-V}$ brings out some important observations
\begin{itemize}
    \item $\texttt{dual-Q}$ and $\texttt{dual-V}$ present off-policy policy optimization solutions for regularized RL problems which requires sampling transitions only from the off-policy distribution the policy state-action visitation is being regularized against. The gradient with respect to policy $\pi$ when $d$ is optimized in \dualQ can be shown to be equivalent to the on-policy policy gradient under a regularized Q-function (see Section 5.1 from \citep{nachum2020reinforcement}).
    \item The above property allows us to solve not only RL problems but also imitation problems by setting the reward function to be zero everywhere and $d^O$ to be the expert dataset, and also offline RL problems where we want to maximize reward with the constraint that our state-action visitation should not deviate too much from the replay buffer ($d^O=\text{replay-buffer}$).
    \item $\texttt{dual-V}$ formulation presents a way to solve the RL problem using a single optimization rather than a min-max optimization of the \primalQ or standard RL formulation. \texttt{dual-V} implicitly subsumes greedy policy maximization.
\end{itemize}



\subsubsection{How to recover the optimal policy in \texttt{dual-V}? }
\label{ap:recovering_policy}
In the above derivations for \texttt{dual-Q} and \texttt{dual-V} we leveraged the fact that the closed form solution for optimizing Eq.~\eqref{eq:cx_conjugate_def} w.r.t $d$ is known. The value of $d^*$ for which Eq.~\eqref{eq:f_cvx_conjugate} is maximized can be found by setting the gradient to zero (stationary point) leading to:
\begin{equation}
\label{ap:optimal_distribution_ratio}
   \frac{d^*(s,a)}{d^O(s,a)}=\max\left(0, (f')^{-1} \left(\frac{y(s,a)}{\alpha}\right)\right)
\end{equation}

This ratio can be utilized in two different ways to recover the optimal policy:

\textbf{Method 1: Maximum likelihood on expert visitation distribution}

Policy learning can be written as maximizing the likelihood of optimal actions under the optimal state-action visitation:
\begin{equation}
    \max \E{s,a\sim d^*}{\log \pi_\theta(a|s)}
\end{equation}
Using importance sampling we can rewrite the optimization above in a form suitable for optimization: 
\begin{equation}
    \max_\theta \E{s,a\sim d^O}{\frac{d^*(s,a)}{d^O(s,a)}\log \pi_\theta(a|s)} = \max_\theta \E{s,a\sim d^O}{w^*(s,a)\log \pi_\theta(a|s)}
\end{equation}
This way of policy learning is similar to weighted behavior cloning or advantage-weighted regression, but suffers from the issue that policy is not optimized at state-actions where the offline dataset $d^O$ has no coverage but $d^*>0$.

\textbf{Method 2: Reverse KL matching on offline data distribution (Information Projection)}

To allow the policy to be optimized at all that states in the offline dataset + actions outside the dataset we consider an alternate objective:
\begin{equation}
    \min_\theta \kl{d^O(s)\pi_\theta(a|s)}{d^O(s)\pi^*(a|s)}
\end{equation}

The objective can be expanded as follows:
\begin{align}
    &\min_\theta \kl{d^O(s)\pi_\theta(a|s)}{d^O(s)\pi^*(a|s)} \\  &=\min_\theta\E{s\sim d^O(s),a\sim\pi_\theta}{\log \frac{\pi_\theta(a|s)}{\pi^*(a|s)}}\\
    &= \min_\theta \E{s\sim d^O(s),a\sim\pi_\theta}{\log \frac{\pi_\theta(a|s)d^*(s)d^O(s)\pi^o(a|s)}{\pi^*(a|s)d^*(s)d^O(s)\pi^o(a|s)}}\\
    &= \min_\theta \E{s\sim d^O(s),a\sim\pi_\theta}{\log\frac{\pi_\theta(a|s)}{\pi^o(a|s)}-\log(w^*(s,a))+\log \frac{d^*(s)}{d^O(s)}}\\
    &=  \min_\theta \E{s\sim d^O(s),a\sim\pi_\theta}{\log(\pi_\theta(a|s))-\log({\pi^o(a|s)})-\log(w^*(s,a))}
\end{align}
This method recovers the optimal policy at the states present in the dataset but has the added complexity of learning another policy $\pi^o(a|s)$. One way of obtaining $\pi^o(a|s)$ is by behavior cloning the replay buffer.

\subsubsection{Semi-gradient and Full-gradient}
\label{ap:semi_gradient_info}
RL algorithms often learn via Bellman backups which minimize an error of the form $L(\theta) = \mathbb{E}_{(s,a,s') \sim \mathcal{D} }[(Q_{\theta}(s,a) - (r(s,a) + \gamma \E{a' \sim \pi}{Q_\theta(s', a'))^2}$. Full stochastic gradient differentiates through the entire objective, whereas semi-gradient methods do not differentiate through the $Q_\theta(s', a')$ term in the bootstrapping target and is generally implemented through a stop-gradient operator. Note that the semi-gradient update still changes the value of the bootstrapping target when used at the next iteration as $\theta$ is updated. Semi-gradient optimization is a common choice in deep RL and often enables significantly faster learning~\citep{sutton2018reinforcement}.

\section{A unified perspective on RL and IL algorithms through duality}
\label{ap:complete_unification}

Figure~\ref{fig:dualRL_main} shows an alternate viewpoint on the landscape of dualRL methods and highlights the gaps in algorithms that we are able to successfully address in this work. Now we discuss in detail how recent algorithms can be unified via duality viewpoint for both RL and IL.

\begin{figure*}[h]
\begin{center}
      \includegraphics[width=1.0\linewidth]{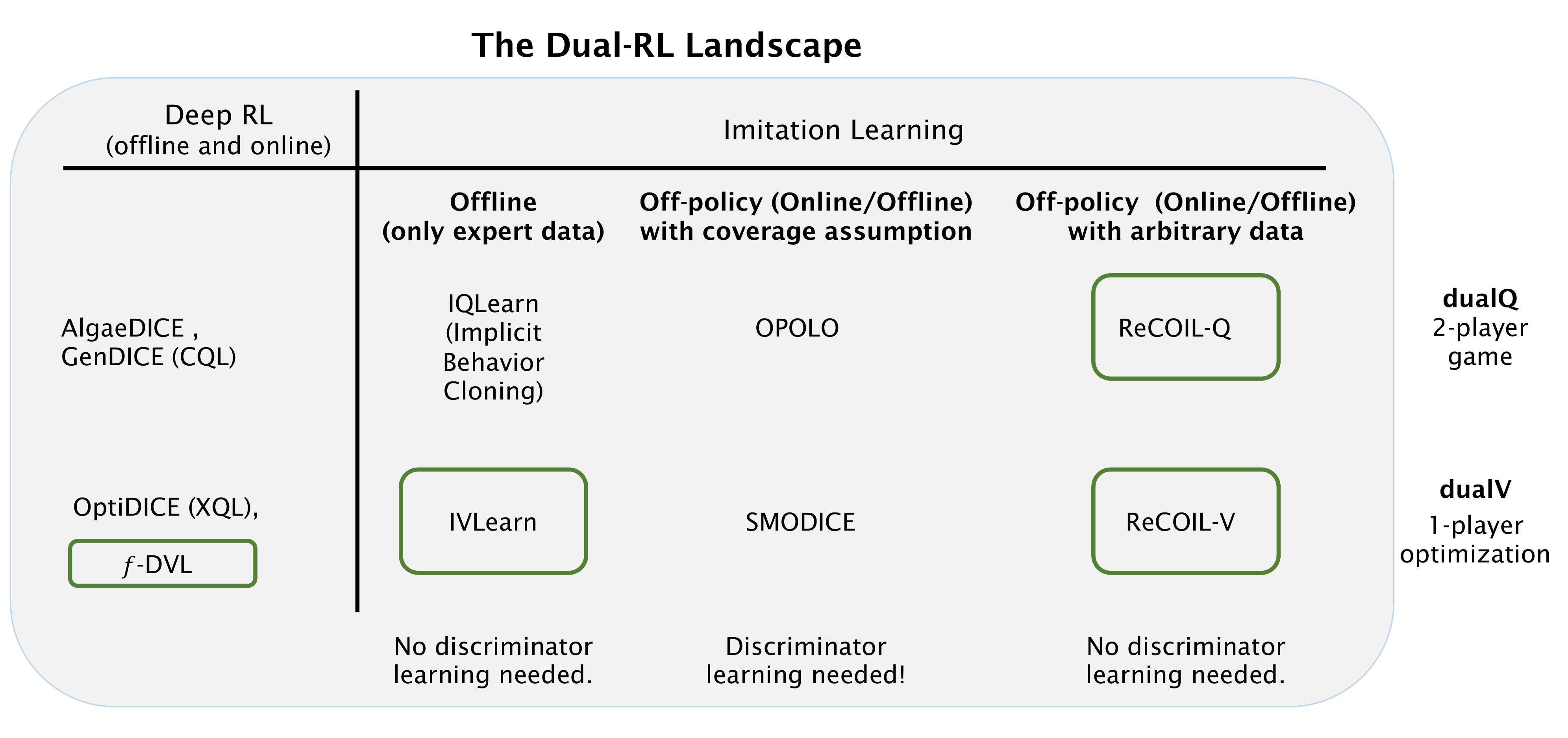}
      \\
      \vspace{-2.0mm}
\end{center}
\caption{We show that a number of prior methods can be understood as a special case of the dual RL framework. Based on this framework, we also propose new methods addressing the shortcomings of previous works (boxed in green).}
\label{fig:dualRL_main}
\end{figure*}

\subsection{Dual Connections to Reinforcement Learning}

We begin by showing reducing popular offline RL class of methods: pessimistic value learning 
 (CQL~\citep{kumar2020conservative}\reb{, ATAC~\citep{cheng2022adversarially}}) and implicit policy improvement (XQL~\citep{garg2021iq}) to the \texttt{dual-Q} and \texttt{dual-V} framework respectively. Then, we show how the \texttt{dual-V} framework under a semi-gradient update rule leads to a family of offline RL algorithms that do not sample OOD actions.

\begin{restatable}[]{proposition}{cql}
\label{thm:CQL}
CQL is an instance of \dualQ  under the semi-gradient update rule,
where the $f$-divergence is the Pearson $\chi^2$ divergence,
and $d^O$ is the offline visitation distribution. 
\vspace{-2mm}
\end{restatable}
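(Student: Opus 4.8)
The plan is to show that a single CQL critic/actor update is exactly the \emph{semi-gradient} of the \dualQ objective in Eq.~\eqref{eq:dual_q_original_equation} once $f$ is chosen to be the Pearson $\chi^2$ generator. First I would read off the Pearson $\chi^2$ conjugate from Table~\ref{tbl:div}, $f^*(y)=y+\tfrac{y^2}{4}$, substitute it into \dualQ, and abbreviate the TD residual as $\delta(s,a):=\bellman Q(s,a)-Q(s,a)$. Because $f^*$ is a quadratic, the $f^*$-term splits additively, turning \dualQ into
\[
\max_\pi \min_Q\; (1-\gamma)\E{s\sim d_0,\,a\sim\pi}{Q(s,a)} \;+\; \E{(s,a)\sim d^O}{\delta(s,a)} \;+\; \tfrac{1}{4\alpha}\,\E{(s,a)\sim d^O}{\delta(s,a)^2}.
\]
I would then match these three terms one-to-one with the three ingredients of CQL.

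For the matching: (i) the last term is, up to the scalar $\tfrac{1}{4\alpha}$, the mean-squared Bellman error, and under the semi-gradient rule (stop-gradient on the bootstrapped target $\bellman Q$, cf.\ Appendix~\ref{ap:semi_gradient_info}) its gradient is precisely the standard TD regression used in CQL's critic step. (ii) In the middle term, $\E{(s,a)\sim d^O}{\bellman Q}=\E{(s,a)\sim d^O}{r}+\gamma\,\E{(s,a)\sim d^O}{P^\pi Q}$, whose first summand is constant in $Q$ and whose second summand is frozen by the stop-gradient; hence its semi-gradient contribution is exactly that of $-\E{(s,a)\sim d^O}{Q(s,a)}$, i.e.\ CQL's ``push up $Q$ on in-distribution actions'' term. (iii) The first term $(1-\gamma)\E{s\sim d_0,\,a\sim\pi}{Q(s,a)}$ is CQL's ``push down $Q$ on policy actions'' term with CQL's sampling policy $\mu$ identified with the dual max-player $\pi$; maximizing it over $\pi$ recovers the greedy policy (or, in the soft/MaxEnt variant of Remark~1 in Appendix~\ref{ap:dual_rl}, a softmax policy, which reproduces the $\log\sum_a \exp Q(s,a)$ form of CQL($\mathcal{H}$)). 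Comparing the linear-to-quadratic coefficient ratio $4\alpha:1$ in the display above with CQL's ratio fixes the correspondence between CQL's conservatism weight and $\alpha$, and $d^O$ is read off as the offline visitation distribution.

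The main obstacle is the state distribution in term (iii): \dualQ genuinely uses the initial-state distribution $d_0$ (equivalently, via the Bellman-flow identity, an on-policy expectation $-\E{(s,a)\sim d^\pi}{\gamma P^\pi Q - Q}$), whereas CQL samples its push-down term from the dataset state marginal. I would resolve this exactly as the paper treats the other semi-gradient instances: replacing the on-policy initial-state expectation by a dataset-state sample is the standard, bias-introducing practical choice that places CQL among the \emph{semi-gradient} instances of \dualQ rather than the exact dual, and under this identification the update rules coincide term by term. A secondary, routine check is that CQL's inner $\max_\mu$ with its entropy/prior regularizer $\mathcal{R}(\mu)$ equals the (soft) $\max_\pi$ of \dualQ; this follows from the same convex-conjugate computation used to pass between the hard and soft Bellman operators, so I would state it and defer the algebra.
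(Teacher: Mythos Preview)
Your decomposition and the identification of the quadratic term with the mean-squared Bellman error are fine, but there is a real gap in how you handle the two linear pieces. By invoking the semi-gradient rule on the linear term $\E{(s,a)\sim d^O}{\delta(s,a)}$, you freeze precisely the piece $\gamma\,\E{(s,a)\sim d^O}{P^\pi Q}$ that the paper uses to \emph{exactly} absorb the initial-state term. The paper does not treat the $d_0$/$d^O$ state mismatch as a ``standard bias-introducing practical choice''; it eliminates it algebraically. Concretely, before any semi-gradient is taken,
\[
(1-\gamma)\E{s\sim d_0,\,a\sim\pi}{Q(s,a)}+\gamma\,\E{(s,a)\sim d^O}{\sum_{s',a'}p(s'|s,a)\pi(a'|s')Q(s',a')}
=\sum_{s,a}\pi(a|s)Q(s,a)\Big[(1-\gamma)d_0(s)+\gamma\sum_{s',a'}d^O(s',a')p(s|s',a')\Big],
\]
and since $d^O$ is a visitation distribution its state marginal satisfies the Bellman-flow identity $d^O(s)=(1-\gamma)d_0(s)+\gamma\sum_{s',a'}d^O(s',a')p(s|s',a')$, so the bracket is exactly $d^O(s)$ and the whole expression collapses to $\E{s\sim d^O,\,a\sim\pi}{Q(s,a)}$. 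Together with the surviving $-\E{(s,a)\sim d^O}{Q(s,a)}$ from the linear term (and the dropped constant $\E{d^O}{r}$), this is exactly CQL's push-down/push-up pair, with no approximation and no appeal to $d_0\approx d^O$.

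In other words, you applied Bellman flow on $d^\pi$ (your parenthetical remark), whereas the paper applies it on $d^O$, which is what matches CQL's dataset-state sampling. Once you make this swap, the ``main obstacle'' you flag vanishes: the \dualQ objective with Pearson $\chi^2$ is \emph{equal} (up to the constant $\E{d^O}{r}$) to the unregularized CQL objective, and the semi-gradient qualifier is needed only so that the policy maximization reduces to maximizing the first term $\E{s\sim d^O,a\sim\pi}{Q(s,a)}$ and so that the squared term gives the usual TD regression. Your discussion of the soft/MaxEnt variant and the coefficient matching is fine and can stay as-is.
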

\begin{proof}
\reb{We show that CQL~\citep{kumar2020conservative} and ATAC~\citep{cheng2022adversarially}, popular offline RL methods are a special case of $\texttt{dual-Q}$ for offline RL}. Consider the $\chi^2$ $f$-divergence with the generator function $f=(t-1)^2$. The dual function $f^*$ is given by $f^*=(\frac{t^2}{4}+t)$.
With this $f$-divergence the \texttt{dual-Q} optimization can be simplified as:

\begin{align}
&\frac{(1-\gamma)}{\alpha}\E{d_0,\pi(a|s)}{Q(s,a)}+\E{s,a\sim d^O}{\frac{y(s,a,r,s')^2}{4\alpha^2}+\frac{y(s,a,r,s')}{\alpha}}\\
&=\frac{(1-\gamma)}{\alpha}\E{d_0,\pi(a|s)}{Q(s,a)}+\E{s,a\sim d^O}{\frac{y(s,a,r,s')}{\alpha}}+\E{s,a\sim d^O}{\frac{y(s,a,r,s')^2}{4\alpha^2}}
\end{align}

Let's simplify the first two terms:
\begin{align}
    &\frac{1}{\alpha}\left[(1-\gamma)\E{d_0,\pi(a|s)}{Q(s,a)}+ \E{s,a\sim d^O}{r(s,a)+\gamma \sum_{s',a'}p(s'|s,a)\pi(a'|s')Q(s',a')-Q(s,a)}\right]
\end{align}
\begin{small}
\begin{align}
    &= \frac{1}{\alpha}\left[(1-\gamma)\E{d_0,\pi(a|s)}{Q(s,a)}+\E{s,a\sim d^O}{\gamma \sum_{s',a'}p(s'|s,a)\pi(a'|s')Q(s',a')}-\E{s,a\sim d^O}{Q(s,a)}+\cancel{\E{s,a\sim d^O}{r(s,a)}}\right]\label{eq:remove_constant_term_cql}
\end{align}
\end{small}

\begin{small}
\begin{align}
    &= \frac{1}{\alpha}\left[(1-\gamma)\sum_{s,a}d_0(s)\pi(a|s)Q(s,a) + \gamma \sum_{s,a} d^O(s,a) \sum_{s'}p(s'|s,a)\pi(a'|s')Q(s',a') - \E{s,a\sim d^O}{Q(s,a)}\right]
\end{align}
\end{small}

\begin{align}
    &= \frac{1}{\alpha}\left[(1-\gamma)\sum_{s,a}d_0(s)\pi(a|s)Q(s,a) + \gamma \langle d^O,P^\pi Q\rangle - \E{s,a\sim d^O}{Q(s,a)}\right]\\
    &= \frac{1}{\alpha}\left[(1-\gamma)\sum_{s,a}d_0(s)\pi(a|s)Q(s,a) + \gamma \langle P^\pi_* d^O, Q\rangle - \E{s,a\sim d^O}{Q(s,a)}\right]\\
    &=  \frac{1}{\alpha}\left[(1-\gamma)\sum_{s,a}d_0(s)\pi(a|s)Q(s,a) +\gamma \sum_{s,a} \pi(a|s) Q(s,a)\sum_{s',a'}p(s|s',a')d(s',a')  - \E{s,a\sim d^O}{Q(s,a)}\right]
\end{align}
 
\begin{align}
&= \frac{1}{\alpha}\left[\sum_{s,a}(d_0(s)+\gamma \sum_{s'a,'}p(s|s',a')d(s',a') )\pi(a|s)Q(s,a)  - \E{s,a\sim d^O}{Q(s,a)}+\E{s,a\sim d^O}{r(s,a)}\right]\\
    &= \frac{1}{\alpha}\left[\sum_{s,a} d^O(s)\pi(a|s) Q(s,a)  - \E{s,a\sim d^O}{Q(s,a)}+\E{s,a\sim d^O}{r(s,a)}\right]\\
    &= \frac{1}{\alpha}\left[\E{s\sim d^O,a\sim \pi}{Q(s,a)}- \E{s,a\sim d^O}{Q(s,a)}\right]
\end{align}

where $P^\pi$ denotes the policy transition operator, $P^\pi_{*}$ denotes the adjoint policy transition operator. Removing constant terms (Eq.~\eqref{eq:remove_constant_term_cql}) with respect to optimization variables we end up with the following form for \texttt{dual-Q}:

\begin{equation}
    \frac{1}{\alpha}\left[\color{teal}
      \underbrace{\color{black}\E{s\sim d^O,a\sim \pi}{Q(s,a)}}_{\text{reduce Q at OOD actions}}- \color{red}
      \underbrace{\color{black}\E{s,a\sim d^O}{Q(s,a)}}_{\text{increase Q at in-distribution actions}}\color{black} \right]+\color{orange}
      \underbrace{\color{black}\E{s,a\sim d^O}{\frac{y(s,a,r,s')^2}{4\alpha^2}}}_{\text{minimize Bellman Error}}\color{black}
\end{equation}
Hence the \texttt{dual-Q} optimization reduces to:
\begin{equation} 
 \max_\pi \min_Q   \alpha\left[\E{s\sim d^O,a\sim \pi}{Q(s,a)}- \E{s,a\sim d^O}{Q(s,a)}\right]+\E{s,a\sim d^O}{\frac{y(s,a,r,s')^2}{4}}
\end{equation}

Our proposition assumes semi-gradient update, i.e the gradients are not backpropogated through the bootstrapping target $Q(s',\pi(s'))$ when updating $\pi$ or $Q$. The bootstrapping target is regularly updated with the most recent parameters. Thus, maximization with respect to policy just amounts to maximizing the first term $\E{s\sim d^O,a\sim \pi}{Q(s,a)}$. \reb{This update equation matches the unregularized CQL objective (Equation 3 in~\citep{kumar2020conservative}) and the ATAC objective (Equation 1 in~\citep{cheng2022adversarially} when $\beta=0.25$).  One of they key differences between CQL and ATAC is the use of optimization strategy -- CQL uses Gradient Descent Ascent whereas ATAC uses a Stackelberg formulation.}
\end{proof}

\xql*

\begin{proof}
We show that the Extreme Q-Learning~\citep{garg2023extreme} framework for offline and online RL is a special case of the dual framework, specifically the $\texttt{dual-V}$ using the semi-gradient update rule.   

Consider setting the $f$-divergence to be the KL divergence in the \dualV framework, the regularization distribution and the initial state distribution to be the replay buffer distribution ($d^O=d^R$ and $d_0=d^R$). The conjugate of the generating function for KL divergence is given by $f^*(t)=e^{t-1}$.

\begin{equation}
\label{eq:dual-V-kl} 
\min_{V(s)} (1-\gamma)\E{d_0(s)}{V(s)} +\E{s,a\sim d^R}{f^*\left(\left[r(s,a)+\gamma \sum_{s'} p(s'|s,a)V(s')-V(s))\right]/
\alpha\right)}
\end{equation}

\begin{align}
\min_{V(s)} (1-\gamma)\E{d_0(s)}{V(s)} +\E{s,a\sim d^S}{\text{exp}(\left(\left[r(s,a)+\gamma \sum_{s'} p(s'|s,a)V(s')-V(s))\right]/
\alpha-1\right)}
\end{align}

A popular approach for stable optimization in temporal difference learning is the semi-gradient update rule which has been studied in previous works~\citep{sutton2018reinforcement}. In this update strategy, we fix the targets for the temporal difference backup. The target in the above optimization is given by:
\begin{equation}
    \bar{Q}(s,a) = r(s,a) + \gamma \sum_{s'} p(s'|s,a)V(s')
\end{equation}
The update equation for V is now given by:

\begin{align}
\label{eq:reduction_xql_v}
\min_{V(s)} (1-\gamma)\E{d_0(s)}{V(s)} +\E{s,a\sim d^R}{\text{exp}(\left(\left[\bar{Q}(s,a)-V(s))\right]/
\alpha-1\right)}
\end{align}

 where hat denotes the $\texttt{stop-gradient}$ operation. We approximate this target by using mean-squared regression with the single sample unbiased estimate as follows:
 
\begin{equation}
\label{eq:reduction_xql_q}
    \min_Q \E{s,a,s'\sim d^R}{(Q(s,a) - (r(s,a) + V(s')))^2}
\end{equation}

The procedure (alternating Eq.~\eqref{eq:reduction_xql_v} and Eq.~\eqref{eq:reduction_xql_q}) is now equivalent to the Extreme-Q learning and is a special case of the \texttt{dual-V} framework.
\end{proof}

\subsubsection{$f$\texttt{-DVL}: A family of implicit policy improvement algorithms for RL}
\label{ap:implicit_maximizers_family}

\begin{figure}[h]
\begin{center}
    \includegraphics[width=1.0\linewidth]{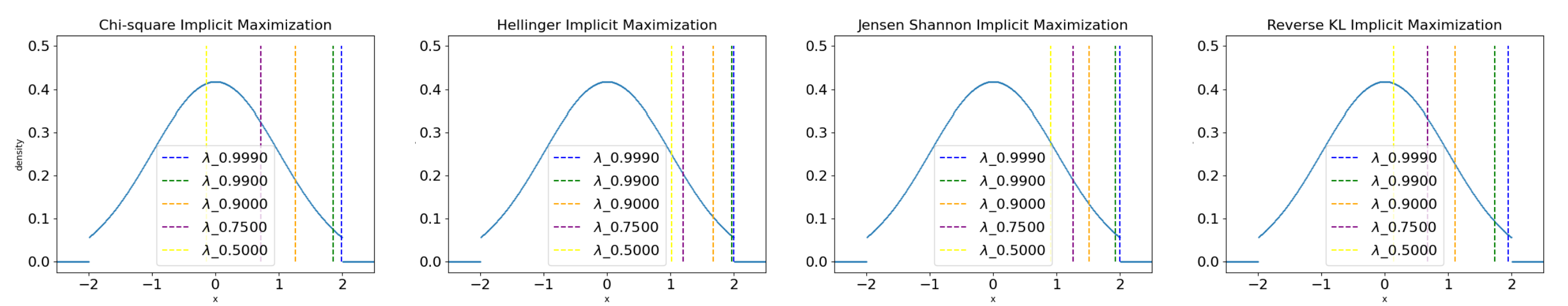}
      \\
\end{center}
\vspace{-1.0mm}
\caption{
Illustration of a family of implicit maximizers corresponding to different $f$-divergences.
The underlying data distribution is a truncated Gaussian \texttt{TN} with mean $0$, variance $1$ and 
a truncation range $(-2, 2)$. 
We sample 10000 data points from \texttt{TN} and compute the solution $v_\lambda$ of Problem~\eqref{eq:implicit_maximization_general}. 
As $\lambda \to 1$, the solution $v_\lambda$ becomes a more accurate estimation for the supremum of the random variable $x$.
}
\label{fig:implicit_maximizers}
\vspace{5pt}
\end{figure}

\implicit*
\begin{proof}
\reb{The behavior of dual-V (Equation~\ref{eq:dual-V}) when $f$ is reverse KL to serve as an implicit maximizer was established in~\citep{garg2023extreme}. In this section we consider other divergences from Table~\ref{tbl:div} under the rewriting of dual-V in terms of temperature $\lambda$ and a surrogate extention for $f^*_p$ (defined below)}.  We analyze the behavior for the following optimization of interest.
\begin{equation}
\label{eq:ap_implicit_maximization_general}
\min_{v} (1-\lambda)\E{x\sim D}{v}+ \lambda\E{x\sim D}{\bar{f^*_p}\left( x-v  \right)}
\end{equation}
$f^*_p(t)$ is given by (using the definition in Eq.~\eqref{eq:f_star_p_def}:
\begin{equation}
    f^*_p(t) = -f\left(\max({f'}^{-1}(t),0)\right) +  t \max\left({f'}^{-1}(t),0 \right)
\end{equation}
Accordingly, the function $f^*_p$ admits two different behaviors given by:
\begin{equation}
\label{eq:f_star_p_extension}
   f^*_p= 
\begin{cases}
   -f({f'}^{-1}(t)) +  t {f'}^{-1}(t) = f^*(t) ,& \text{if } {f'}^{-1}(t)>0\\
    -f(0),              & \text{otherwise}
\end{cases}  
\end{equation}
\reb{where $f^*$ is the convex conjugate of $f$-divergence. We consider all $f$-divergences for which $f^*$ is strictly increasing in $\R^+$, and note that TV  divergence will need special treatment as $(f')^{-1}$ is not well defined. Some properties of note are  $f'$ and $(f')^{-1}$ is non-decreasing, $f(0^+)\ge0$ and $f^*(x)\ge0~\forall x\ge0$ for the divergences we consider. A key limitation of formulating an optimization objective with forms of $f$-divergences is their domain restriction of $\mathbb{R}^+$. First, we note as a result of restriction of $f$ to $\R^+$ that $f':\R^+ \rightarrow [l,\infty)$ for some $l\in\R$ and as a consequence $(f')^{-1}:[l,\infty) \rightarrow \R^+$. Since our objective function depends on $(f')^{-1}(x)$ being well defined on $\R$, We consider an extension that preserves the non-decreasing property of $(f')^{-1}$ such that $(f')^{-1}(x)=0$ for $x\in (-\infty,l]$. We also know from above that for  $x>=l$ , $(f')^{-1}(x)>=0$ as $(f')^{-1}$ is non-decreasing. We define the surrogate $\bar{f^*_p}$ to be a particular extension for $f^*_p$ with $l=0$. Similar extensions can be found in prior work~\citep{picard2022change, goldfieldinfotheory}.}


We analyze the second term in Eq.~\eqref{eq:ap_implicit_maximization_general}. It can be expanded as follows:
\begin{align}
    \lambda \int_{x:(f')^{-1}(x-v)>0} p(x) f^*(x-v) dx - \lambda\int_{x:(f')^{-1}(x-v)\le0} f(0)p(x)dx
\end{align}

From the properties of $f$, we use the fact that $(f')^{-1}(x-v)>0$ when $x-v>0$ or equivalently $x>v$.
\begin{align}
    \lambda \int_{x>v} p(x) f^*(x-v) dx - \lambda\int_{x\le v} f(0)p(x)dx
\end{align}
The first term in the above equation decreases monotonically and the second term increases monotonically (thus the combined terms decrease) as $v$ increases until $v=x^*$ (supremum of the support of the distribution) after which the equation assumes a constant value of $-\lambda f(0)$.

Going back to our original optimization in Eq.~\eqref{eq:ap_implicit_maximization_general}, the first term decreases monotonically with $v$. As $\lambda\to 1$, the minimization of the second term takes precedence, with increasing $v$ until saturation ($v=x^*$). We can go further to characterize the effect of $\lambda$ on solution $v_\lambda$ of the equation. The solution of the optimization can be written in closed form as (using stationarity):
\begin{equation}
    \frac{(1-\lambda)}{\lambda} = \E{x\sim D}{{{f_p^*}'}(x-v)}
\end{equation}
Using the fact that $f^{*'}_p$ is non-decreasing, 
we can show that the right-hand term in the equation above increases\reb{/stays the same} as $v$ decreases. 
This in turn implies that for all $\lambda_1,\lambda_2$ such that $\lambda_1 \le \lambda_2$ we have that $v_{\lambda_1}\le v_{\lambda_2}$ .

\reb{TV divergence require a special treatment as $(f')^{-1}$ is not defined. We construct $f^*_p$ by noting that $f^*$ for TV exists even if  $(f')^{-1}$ does not. A concise proof can be found in Example 8.1 from~\cite{goldfieldinfotheory}. Thus, for TV we consider a smooth extension in $\R^+$ by using $f^*(x)=x$. For Squared Hellinger and RKL, $f^*$ is discontinuous. The problem can be addressed by considering random variable $x\sim D$ upper bounded by 1 and $\ln 2$ for Hellinger and RKL respectively. This can be ensured by rescaling rewards so that the maximum reward is ${1-\gamma}$ and $(1-\gamma)\ln(2)$ for hellinger and RKL respectively. Appendix~\ref{ap:f_star_p_practical} outlines the derivation of surrogate implicit maximizers that we use in practice.
}

\end{proof}

\subsubsection{Connections of CQL to AlgaeDICE and XQL to OptiDICE}

\citet{kumar2020conservative} shows that CQL outperforms a family of behavior-regularized offline RL methods~\citep{fujimoto2018addressing, wu2019behavior, nair2020awac}, which solve different forms of~\primalQ using approximate dynamic programming.
The above result indicates that CQL's better performance is likely due to the choice of $f$-divergence and more amenable optimization afforded by the dual formulation. Moreover, the same \dualQ formulation has been previously studied for online RL in AlgaeDICE~\citep{nachum2019algaedice}, and proposition~\ref{thm:CQL} suggests that CQL is an offline version of AlgaeDICE. 

We also highlight that the full-gradient variant of the \texttt{dual-V} framework for offline RL has been studied extensively in OptiDICE~\citep{lee2021optidice} and proposition~\ref{thm:XQL} highlights that XQL is a special case OptiDICE with a semi-gradient update rule.


\subsection{Dual Connections to Imitation Learning} 
\label{ap:sec_dual_connections_imitation}
This section outlines the reduction of a number of algorithms for Imitation Learning to the dual framework. Most prior methods can either take into account expert-only data for imitation whereas the other methods which do imitation from arbitrary offline data are limited by their assumptions and the form of $f$-divergence they optimize for. We walk through explaining how prior methods can be derived through the unified framework and also why they are limited.

\subsubsection{Offline imitation learning with expert data only}
\label{ap:offline_imitation_learning_expert}

We saw in Section~\ref{main:offline_il_expert}, how using the $\texttt{dual-Q}$ framework directly led to a reduction of IQ-Learn~\citep{garg2021iq} as part of the dual framework. This was accomplished by simple setting the reward function to be 0 uniformly and setting the regularization distribution to the expert. \citet{garg2021iq} uses this method in the online imitation learning setting as well by incorporating the replay data as additional regularization which we suggest is unprincipled, also pointed out by others~\citep{al2023ls} (as only expert data samples can be leveraged in the above optimization) and provide a fix in the Section~\ref{sec:new_il_method}. In this section, we show how the same approach can directly lead to another method for learning to imitate from expert-only data avoiding the alternating min-max optimization of IQ-Learn.

\textbf{IV-Learn: A new method for offline imitation learning: } Analogous to \texttt{dual-Q} (offline imitation), we can leverage the \texttt{dual-V} (offline imitation) setting which avoids the min-max optimization given by:

\texttt{IV-Learn} or \texttt{dual-V}~\text{(offline imitation from expert-only data)}:
\begin{equation}
\label{eq:dual-V-imit}
\min_{V(s)} (1-\gamma)\E{d_0(s)}{V(s)} +\E{s,a\sim d^E}{f^*\left(\left[\mathcal{T}_0 V(s,a)-V(s))\right]/
\alpha\right)}
\end{equation}

We propose \texttt{dual-V} (offline imitation) to be a new method arising out of this framework which we leave for future exploration. This work primarily focuses on imitation learning from general off-policy data.

\textbf{Proofs for this section:}

\begin{restatable}[]{corollary}{ibc}
\label{corollary:ibc}
 IBC~\citep{florence2022implicit} is an instance of \texttt{dual-Q} using the full-gradient update rule, where $r(s,a)=0~~\forall s \in \mathcal{S}, a \in \mathcal{A}$, $d^O=d^E$, and the $f$-divergence is the total variation distance. 
 \vspace{-2mm}
\end{restatable}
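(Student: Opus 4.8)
The plan is to specialize the \dualQ imitation objective of Eq.~\eqref{eq:dual-Q-imit} to the total-variation divergence and to show that, under the full-gradient rule, the resulting problem is exactly the contrastive (InfoNCE) objective of IBC. As in the IQLearn reduction (Proposition~\ref{lemma:iqlearn}) I would work with the soft (MaxEnt) Bellman operator, since IBC's softmax action model is precisely the MaxEnt form. First I would substitute the total-variation generator $f(x)=\tfrac12|x-1|$, whose convex conjugate, under the smooth extension flagged in Table~\ref{tbl:div}, is the identity $f^*(y)=y$ on all of $\mathbb{R}$. This removes the nonlinearity in the conjugate term, so (taking $\alpha=1$ for exposition, as elsewhere in the appendix) the problem collapses to $\max_\pi\min_Q (1-\gamma)\E{d_0,\pi}{Q(s,a)}+\E{s,a\sim d^E}{\mathcal{T}^\pi_0 Q(s,a)-Q(s,a)}$, with $\mathcal{T}^\pi_0$ the soft Bellman operator.

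Next I would carry out the inner $\max_\pi$ in closed form. With zero reward the MaxEnt-optimal policy is $\pi_Q(a\mid s)\propto e^{Q(s,a)}$, its soft state value is $V_Q(s)=\log\sum_a e^{Q(s,a)}$, and the same $\pi_Q$ maximizes both the initial-state term and the soft-Bellman term. Substituting it reduces the min--max to the single-level minimization $\min_Q (1-\gamma)\E{d_0}{V_Q(s)}+\E{s,a\sim d^E}{\gamma\,\E{s'\sim p(\cdot\mid s,a)}{V_Q(s')}-Q(s,a)}$.

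The crux is a telescoping identity that eliminates the discounted next-state value. Since $d^E$ is the stationary occupancy of the expert and the transitions in the expert dataset are drawn from the expert policy, the state marginal of $d^E$ satisfies the state Bellman-flow equation; pairing that equation with any bounded $V_Q$ (the state-value analogue of the adjoint manipulation used in the proof of Proposition~\ref{thm:CQL}) yields $(1-\gamma)\E{d_0}{V_Q(s)}=\E{s,a\sim d^E}{V_Q(s)-\gamma\,\E{s'\sim p(\cdot\mid s,a)}{V_Q(s')}}$. Substituting this into the objective, the $\gamma V_Q(s')$ terms cancel and we are left with $\min_Q \E{s,a\sim d^E}{\log\sum_{a'}e^{Q(s,a')}-Q(s,a)}=\min_Q \E{s,a\sim d^E}{-\log\pi_Q(a\mid s)}$. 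Identifying $Q$ with the negative of IBC's energy $E_\theta$, this is exactly the IBC InfoNCE loss in the limit where the negative samples cover the action space — the finite-negative InfoNCE that IBC actually optimizes is the standard Monte-Carlo estimate of this softmax negative-log-likelihood. Finally, because the reduced objective carries no bootstrapped target (the $\gamma V_Q(s')$ terms have cancelled), its gradient coincides with the full-gradient update of the original \dualQ problem; the semi-gradient version of the same specialization would instead give an IQLearn-type update, which is why IBC corresponds to the full-gradient rule.

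I expect the telescoping step to be the main obstacle: it uses that the expert dataset is on-policy for the expert so that $d^E$ obeys the Bellman flow, and it relies on having extended the total-variation conjugate to the identity on all of $\mathbb{R}$ (rather than $+\infty$ outside $[-\tfrac12,\tfrac12]$) so that the initial linearization is legitimate. I would also make explicit that the finite-negative InfoNCE loss used by IBC in practice is an unbiased estimate of the full-action softmax objective obtained above, so that the identification holds in the appropriate limit.
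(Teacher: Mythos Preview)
Your approach is correct and shares with the paper the two essential moves---specializing the TV conjugate to the identity and collapsing the discounted terms via the Bellman-flow identity for $d^E$---but you take a genuinely different route at the policy step. The paper works throughout with the \emph{hard} Bellman operator and never eliminates $\pi$: after the telescoping (done via the adjoint manipulation you cite from the CQL proof) it lands on the two-player contrastive form $\max_\pi\min_Q\,\E{s\sim d^E,a\sim\pi}{Q(s,a)}-\E{s,a\sim d^E}{Q(s,a)}$ and then argues, somewhat informally, that its gradient in $Q$ matches IBC's InfoNCE update when $Q$ is read as the negative energy. You instead pass to the MaxEnt/soft-Bellman variant, solve $\max_\pi$ in closed form as $\pi_Q\propto e^{Q}$, and arrive \emph{directly} at the softmax cross-entropy $\E{s,a\sim d^E}{\log\sum_{a'}e^{Q(s,a')}-Q(s,a)}$. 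Your route is more explicit (it produces the actual InfoNCE loss rather than a gradient correspondence) and sidesteps the unboundedness of the hard contrastive objective; the paper's route is shorter and yields the clean energy-based-model reading of IBC. One small slip to fix: in the MaxEnt setting the initial-state term should be $(1-\gamma)\,\E{d_0,\pi}{Q(s,a)-\log\pi(a|s)}$, so that its maximum over $\pi$ is indeed $V_Q$ and is attained at the softmax policy, as you assert; your substitution already uses this, but the intermediate line you wrote drops the entropy term.
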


Eq.~\eqref{eq:dual-Q-imit} suggests that intuitively IQ-Learn trains an energy-based model in the form of Q where it pushes down the Q-values for actions predicted by current policy and pushes up the Q-values at the expert state-action pairs. This becomes more clear when the divergence $f$ is chosen to be Total-Variation ($f^*=\mathbb{I}$), IQ-Learn for Total-Variation divergence reduces to:

\begin{align}
     &(1-\gamma)\E{d_0(s),\pi(a|s)}{Q(s,a)}+\E{s,a\sim d^E}{\gamma \sum_{s',a'} p(s'|s,a)\pi(a'|s')Q(s',a')-Q(s,a)}\\
    &=  \left[(1-\gamma)\E{d_0(s),\pi(a|s)}{Q(s,a)}+\E{s,a\sim d^E}{\gamma \sum_{s',a'} p(s'|s,a)\pi(a'|s')Q(s',a')}\right]\nonumber\\
    &-\E{s,a\sim d^E}{Q(s,a)} \label{eq:iq_tv}
\end{align} 

First, we simplify the initial two terms:

\begin{align}
    &(1-\gamma)\E{d_0(s),\pi(a|s)}{Q(s,a)}+\E{s,a\sim d^E}{\gamma \sum_{s'} p(s'|s,a)\pi(a'|s')Q(s',a')}\\
    &=(1-\gamma) \sum_{s,a} d_0(s)\pi(a|s)Q(s,a) + \gamma\sum_{s,a} d^E(s,a) \sum_{s',a'}  p(s'|s,a)\pi(a'|s')Q(s',a')
\end{align}
\begin{align}
& = (1-\gamma) \sum_{s,a} d_0(s)\pi(a|s)Q(s,a) + \gamma\sum_{s',a'} \sum_{s,a} d^E(s,a)   p(s'|s,a)\pi(a'|s')Q(s',a')\\
    & = (1-\gamma) \sum_{s,a} d_0(s)\pi(a|s)Q(s,a) + \gamma\sum_{s',a'} \pi(a'|s')Q(s',a')(\sum_{s,a} d^E(s,a)   p(s'|s,a))\\
    & = (1-\gamma) \sum_{s,a} d_0(s)\pi(a|s)Q(s,a) + \gamma\sum_{s',a'} \pi(a'|s')Q(s',a')(\sum_{s,a} d^E(s,a)   p(s'|s,a))
\end{align}

\begin{align}
& = (1-\gamma) \sum_{s,a} d_0(s)\pi(a|s)Q(s,a) + \gamma\sum_{s,a} \pi(a|s)Q(s,a)(\sum_{s',a'} d^E(s',a')   p(s|s',a'))\\
    & =  \sum_{s,a} (1-\gamma) d_0(s)\pi(a|s)Q(s,a) +  \pi(a|s)Q(s,a)(\sum_{s',a'} d^E(s',a')   p(s|s',a'))\\
    & =  \sum_{s,a}\pi(a|s)Q(s,a) \left[(1-\gamma) d_0(s) +  \gamma\sum_{s',a'} d^E(s',a')   p(s|s',a')\right]\\
    & =  \sum_{s,a}\pi(a|s)Q(s,a) d^E(s)
\end{align}

where the last step is due to the steady state property of the MDP (Bellman flow constraint).  


Therefore IQ-Learn/$\texttt{dual-Q}$ for offline imitation (in the special case of TV divergence) simplifies to (from Eq.~\eqref{eq:iq_tv}):
\begin{align}
    &\left[(1-\gamma)\E{d_0(s),\pi(a|s)}{Q(s,a)}+\E{s,a\sim d^E}{\gamma \sum_{s',a'} p(s'|s,a)\pi(a'|s')Q(s',a')}\right]-\E{s,a\sim d^E}{Q(s,a)}\\
    & ~~~~~~~~~~~= \min_Q \E{d_E(s),\pi(a|s)}{Q(s,a)}-\E{s,a\sim d^E}{Q(s,a)} \label{eq:iqlearn_tv}
\end{align}
The update gradient w.r.t for the above optimization matches the gradient update of infoNCE objective in Implicit Behavior Cloning~\citep{florence2022implicit} with $Q$ as the energy-based model.

\subsection{Off-policy imitation learning (under coverage assumption)}
\label{ap:off_policy_imitation_coverage}
 Directly utilizing the dual-RL framework for imitation has its limitation as we see in the previous section -- we cannot leverage off-policy suboptimal data. We first show that it is easy to see why choosing the $f$-divergence to reverse KL makes it possible to get an off-policy objective for imitation learning in the dual framework.  We start with the \primalQ for imitation learning under the reverse KL-divergence regularization ($r(s,a)=0~~\text{and}~~d^O=d^E$):
\begin{align}
\label{eq:primal_imitation_kl}
    &\max_{d(s,a)\ge0,\pi(a|s)} -\kl{d(s,a)}{d^E(s,a)} \nonumber\\
    &\text{s.t}~~d(s,a)=(1-\gamma)\rho_0(s).\pi(a|s)+\gamma \pi(a|s)\sum_{s',a'} d(s',a')p(s|s',a').
\end{align}
\textit{Under the assumption that the suboptimal data visitation (denoted by $d^S$) covers the expert visitation ($d^S>0$ wherever $d^E$>0)}~\citep{ma2022smodice}, which we refer to as the \textbf{coverage assumption}, the reverse KL divergence can be expanded as follows:
\begin{align}
  \kl{d(s,a)}{d^E(s,a)} &= \E{s,a\sim d(s,a)}{\log \frac{d(s,a)}{d^E(s,a)}}=\E{s,a\sim d(s,a)}{\log \frac{d(s,a)}{d^E(s,a)}\frac{d^S(s,a)}{d^S(s,a)}}\\
  &= \E{s,a\sim d(s,a)}{\log \frac{d(s,a)}{d^S(s,a)}+\log \frac{d^S(s,a)}{d^E(s,a)}}\\
  &= \E{s,a\sim d(s,a)}{\log \frac{d^S(s,a)}{d^E(s,a)}}+\kl{d(s,a)}{d^S(s,a)}.
\end{align}
Hence the \primalQ can now be written as:
\begin{align}
    \max_{d(s,a)\ge0,\pi(a|s)} \E{s,a\sim d(s,a)}{-\log \frac{d^S(s,a)}{d^E(s,a)}}-\kl{d(s,a)}{d^S(s,a)}\\
    \text{s.t}~~d(s,a)=(1-\gamma)\rho_0(s).\pi(a|s)+\gamma \sum_{s',a'} d(s',a')p(s|s',a')\pi(a|s).
\end{align}
Now, in the optimization above the first term resembles the reward function and the second term resembles the divergence constraint with a new distribution $d^S(s,a)$ in the original regularized RL primal (Eq.~\eqref{eq:primal_rl}). Hence we can obtain respective $\texttt{dual-Q}$ and $\texttt{dual-V}$ in the setting for off-policy imitation learning using the reward function as $r^\text{imit}(s,a)=-\log \frac{d^S(s,a)}{d^E(s,a)}$ and the new regularization distribution as $d^S(s,a)$. Using $\Timit^\pi$ and $\Timit$ to denote backup operators under a new reward function $r^\text{imit}$, we have

\colorbox{Goldenrod!30}{$\texttt{dual-Q}$ for off-policy imitation (coverage assumption)}:
\begin{align}
    \label{eq:dual_q_imit_coverage}
    &\max_{\pi(a|s)}\min_{Q(s,a)} (1-\gamma)\E{\rho_0(s),\pi(a|s)}{Q(s,a)}+\E{s,a\sim d^S}{f^*(\Timit^\pi Q(s,a)-Q(s,a))}.
\end{align}
This choice of KL divergence leads us to a reduction of  another methods, OPOLO~\citep{zhu2020off} and OPIRL~\citep{hoshino2022opirl} for off-policy imitation learning to \texttt{dualQ} which we formalize in the proposition below:
\begin{restatable}[]{proposition}{opolo}
OPOLO~\citep{zhu2020off} and OPIRL~\citep{hoshino2022opirl} are instances of $\texttt{dual-Q}$ using the semi-gradient update rule, where $r(s,a)=0~\forall \mathcal{S},\mathcal{A}$, $d^O=d^E$,
and the $f$-divergence set to the reverse KL divergence.
\vspace{-2mm}
\end{restatable}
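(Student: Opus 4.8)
The starting point is the \dualQ formulation for off-policy imitation under the coverage assumption, Eq.~\eqref{eq:dual_q_imit_coverage}, which we have already derived in this section: take the reverse-KL-regularized \primalQ with $r\equiv 0$ and $d^O=d^E$, and use the coverage assumption $d^S>0$ wherever $d^E>0$ to telescope $\kl{d(s,a)}{d^E(s,a)}=\E{s,a\sim d}{\log(d^S/d^E)}+\kl{d(s,a)}{d^S(s,a)}$; this yields a regularized problem whose reward is the pseudo-reward $r^\text{imit}(s,a)=-\log(d^S(s,a)/d^E(s,a))$ and whose regularizer distribution is $d^S$. The first step is then to instantiate the conjugate: for reverse KL, Table~\ref{tbl:div} gives $f^*(y)=e^{y-1}$, so \eqref{eq:dual_q_imit_coverage} becomes
\begin{equation*}
\max_{\pi}\min_{Q}\;(1-\gamma)\E{\rho_0,\pi}{Q(s,a)}+\E{s,a\sim d^S}{\exp\!\big(\Timit^\pi Q(s,a)-Q(s,a)-1\big)}.
\end{equation*}

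Next I would recall the OPOLO/OPIRL algorithm and match it term by term. Both methods proceed in two stages: (i) train a discriminator on expert versus off-policy samples, whose Bayes-optimal solution recovers the density ratio $d^E/d^S$ and hence produces exactly $r^\text{imit}=-\log(d^S/d^E)$; and (ii) solve a DICE-style saddle-point problem in a state-action critic (playing the role of $Q$) and the policy. After aligning notation --- critic $\leftrightarrow Q$, the $(1-\gamma)$ initial-state term, and the temperature $\alpha$ (folded into the conjugate argument) --- stage (ii) is precisely the displayed objective above, so OPOLO is an instance of \dualQ with reverse-KL divergence, $r\equiv 0$, $d^O=d^E$. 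For the semi-gradient qualifier, I would point to the fact that OPOLO and OPIRL, like other temporal-difference methods, apply a stop-gradient / target network to the bootstrap term $Q(s',a')$ inside $\Timit^\pi Q$ when updating $Q$ and $\pi$; this is exactly the semi-gradient update rule of Appendix~\ref{ap:semi_gradient_info}, which is why the proposition is stated with that rule rather than the full gradient. OPIRL differs from OPOLO only in auxiliary regularization of the discriminator / recovered reward (e.g.\ a gradient penalty and a causal-entropy term), which does not alter the correspondence of the core saddle-point objective, so the same argument covers it.

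The main obstacle is that OPOLO is designed for learning \emph{from observations} --- state-only expert trajectories --- so its objective is naturally written over state-transition visitations and it introduces an auxiliary inverse-action model, rather than directly over $d^\pi(s,a)$. Reconciling this with the state-action \dualQ requires showing that the extra ``principled off-policy'' regularization OPOLO adds to make the observation-only objective tractable --- a $D_{\mathrm{KL}}$ between the policy visitation and the off-policy visitation over $(s,a)$, obtained through a change of variables / variational lower bound --- is exactly the $\alpha\,\kl{d(s,a)}{d^S(s,a)}$ term produced by the telescoping above, so that after this identification the OPOLO objective is still a \dualQ problem with pseudo-reward $r^\text{imit}$. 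A second, more routine obstacle is bookkeeping: keeping the sign of the discriminator fixed point consistent so that it yields $r^\text{imit}=-\log(d^S/d^E)$ and not its negative or a constant shift, and tracking the $1/\alpha$ rescaling of the dual problem through to the loss functions OPOLO and OPIRL actually implement.
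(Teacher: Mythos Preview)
Your proposal is correct and follows the same route as the paper: start from the reverse-KL \primalQ with $r\equiv0$, $d^O=d^E$, telescope the KL through $d^S$ under the coverage assumption to obtain the pseudo-reward $r^\text{imit}=-\log(d^S/d^E)$ and a new regularizer $\kl{d}{d^S}$, take the dual to get Eq.~\eqref{eq:dual_q_imit_coverage}, and identify this with the OPOLO/OPIRL critic--policy saddle-point objective. The paper's own proof is in fact nothing more than a pointer to that earlier derivation (``Eq.~\eqref{eq:dual_q_imit_coverage} is the update equation for OPOLO''), so your write-up is already more complete---in particular, your discussion of the state-only/observation-based nature of OPOLO and how its auxiliary KL regularizer coincides with the $\kl{d}{d^S}$ term is a point the paper does not spell out.
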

\proof{Proof is sketched in the above section, ie. Eq.~\eqref{eq:dual_q_imit_coverage} is the update equation for OPOLO.}

Analogously we have $\texttt{dual-V}$ for off-policy imitation (coverage assumption):
\begin{equation}
\min_{V(s)} (1-\gamma)\E{\rho_0(s)}{V(s)}+\E{s,a\sim d^S}{\reb{f^*_p}(\mathcal{T}_{r^{\text{imit}}}V(s,a)-V(s))}.
\end{equation}

We note that the \texttt{dual-V} framework for off-policy imitation learning under coverage assumptions was studied in the imitation learning work SMODICE~\citep{ma2022smodice}.

\subsection{Logistic Q-learning and P$^2$IL as dual-QV methods}
\label{ap:LQL_P2IL}
Logistic Q-learning and Proximal Point Imitation Learning (P$^2$IL) uses a modified primal for regularized policy optimization:
\begin{align}
    \max_{d\ge0} & ~~\mathbb{E}_{d(s,a)}[r(s,a)]-\f{d(s,a)}{d^O(s,a)} - H (\mu(s,a)\| \mu^O(s,a)\nonumber\\
    &\text{s.t}~~ d(s,a)=(1-\gamma)d_0(s)+\pi(a|s)\gamma \sum_{s',a'} \mu(s',a')p(s|s',a').\\
     &\text{and}~~ d(s,a)=\mu(s,a)
\end{align}
where $H(\mu(s,a)\|\mu^O(s,a)) = \sum \mu(s,a)\log\frac{\pi_\mu(a|s)}{\pi_{\mu^O}(a|s)}$ denotes the conditional relative entropy and $\mu^O$ is another offline distribution of state-action transitions potentially the same as $d^O$. The optimization is overparameterized (setting $\mu=d$). This trick was popularized via~\citep{mehta2009q} and leads to unbiased estimators and better downstream data driven algorithms. We call these two methods \texttt{dual-QV} as their dual requires estimating both $Q$ and $V$ as shown in~\citep{viano2022proximal,bas2021logistic}

\section{ReCOIL: Off-policy imitation learning without the coverage assumption}
\label{ap:closer}

\begin{figure}
\begin{center}
      \includegraphics[width=1.0\linewidth]{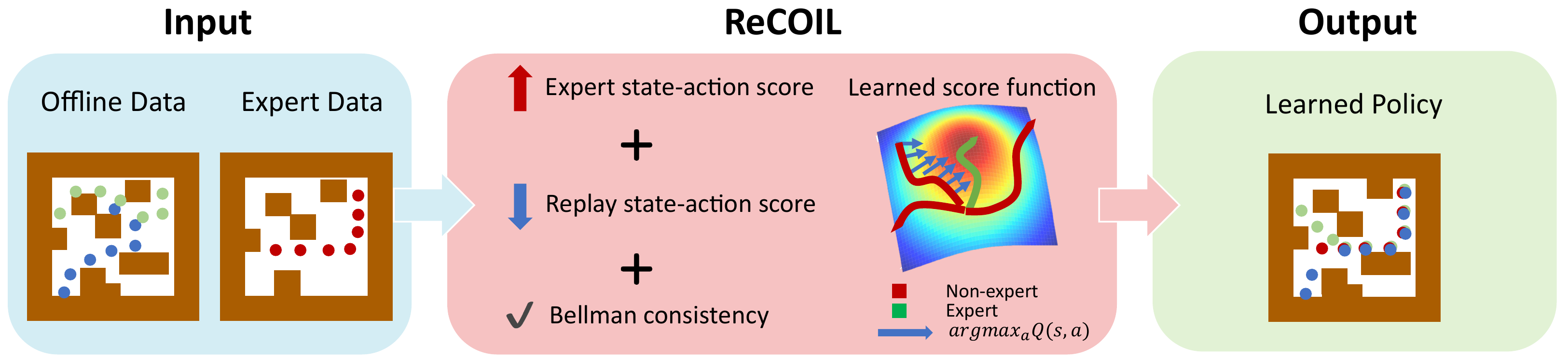}
\end{center}
\caption{Recipe for ReCOIL: Learn a Bellman consistent EBM - A model which increases the score of expert transitions, and decreases the score of replay transitions while maintaining Bellman consistency throughout. }
\label{fig:recoil_main}
\end{figure}

Understanding the limitations of existing imitation learning methods in the dual framework, we now proceed to derive our proposed method for imitation learning with arbitrary 
(off-policy) data. The derivation for the $\texttt{dual-Q}$ setting is shown below. \texttt{dual-V} derivation proceeds analogously.
\closerq*


\begin{proof}

Let's define two mixture distributions that we are going to leverage to formulate the imitation learning problem: $\dmix :=\beta d(s,a)+(1-\beta)d^S(s,a)$ and $\demix := \beta d^E(s,a)+(1-\beta)d^S(s,a)$. $\dmix$ is a mixture between the current agent's visitation distribution with suboptimal transition dataset obtained from a mixture of arbitrary policies and $\demix$ is the mixture between the expert's visitation distribution with arbitrary experience from the offline transition dataset. Minimizing the divergence between these visitation distributions still solves the imitation learning problem, i.e $d=d^E$. We again start with the new modified \primalQ under this mixture divergence regularization:
\begin{align*}
    &\max_{d(s,a)\ge0,\pi(a|s)}  -\f{\dmix(s,a)}{\demix(s,a)} \nonumber\\
    &\text{s.t}~~d(s,a)=(1-\gamma)d_0(s).\pi(a|s)+\gamma \pi(a|s)\sum_{s',a'} d(s',a')p(s|s',a').
\end{align*}
Using the same algebraic machinery of duality as before (Section~\ref{app:derive_dual_q}) to get an unconstrained tractable optimization problem, we obtain:
\begin{align}
    &\max_{\pi,d\ge0} \min_{Q(s,a)} -\f{\dmix}{\demix}\nonumber\\
    &+ \beta \sum_{s,a} Q(s,a)\left((1-\gamma)d_0(s).\pi(a|s)+\gamma \sum_{s',a'} d(s',a') p(s|s',a')\pi(a|s)-d(s,a)\right)\\
    &= \max_{\pi,d\ge0} \min_{Q(s,a)}   \beta (1-\gamma)\E{d_0(s),\pi(a|s)}{Q(s,a)} \nonumber \\
    &+ \beta  \E{s,a\sim d}{\gamma \sum_{s'} p(s'|s,a)\pi(a'|s')Q(s',a')-Q(s,a)}-\f{\dmix}{\demix}
\end{align}
\begin{align}
&= \max_{\pi,d\ge0} \min_{Q(s,a)}   \beta (1-\gamma)\E{d_0(s),\pi(a|s)}{Q(s,a)} \nonumber \\
    &+ \beta \E{s,a\sim d}{\gamma \sum_{s'} p(s'|s,a)\pi(a'|s')Q(s',a')-Q(s,a)}\nonumber\\
    &+(1-\beta) \E{s,a\sim d^S}{\gamma \sum_{s'} p(s'|s,a)\pi(a'|s')Q(s',a')-Q(s,a)}\nonumber\\
    &-(1-\beta) \E{s,a\sim d^S}{\gamma \sum_{s'} p(s'|s,a)\pi(a'|s')Q(s',a')-Q(s,a)}-\f{\dmix}{\demix}
\end{align}

 \reb{Strong duality allows us to swap the order of $\max_d$ and $\min_Q$ in order to arrive at the following result:}
 
\begin{summarybox}
\text{Imitation from Arbitrary data (dualQ)}\\
    \begin{align}
&=\max_{\pi(a|s)}\min_{Q(s,a)}\max_{d\ge0}  \beta (1-\gamma)\E{d_0(s),\pi(a|s)}{Q(s,a)} \nonumber \\
    &+\E{s,a\sim \dmix}{\gamma \sum_{s'} p(s'|s,a)\pi(a'|s')Q(s',a')-Q(s,a)}-\f{\dmix}{\demix}\nonumber\\
    & - (1-\beta ) \E{s,a\sim d^S}{\gamma \sum_{s'} p(s'|s,a)\pi(a'|s')Q(s',a')-Q(s,a)}
    \label{eq:imitation_mixture_approach2_no_constraint}
\end{align}
\end{summarybox}

We can ignore the constraints ($d\ge0$) as the primal-Q is overparameterized and the constraints uniquely determine the distribution $d$. Therefore, ignoring this constraint ($d\ge0$) results in the following dual-optimization for imitation from arbitrary data.

\begin{align}
    &\max_{\pi(a|s)}\min_{Q(s,a)}  \reb{\beta} (1-\gamma)\E{d_0(s),\pi(a|s)}{Q(s,a)} \nonumber\\
    &+\E{s,a\sim \demix}{f^*(\gamma \sum_{s'} p(s'|s,a)\pi(a'|s')Q(s',a')-Q(s,a))}\nonumber\\
    & - (1-\reb{\beta}) \E{s,a\sim d^S}{\gamma \sum_{s'} p(s'|s,a)\pi(a'|s')Q(s',a')-Q(s,a)}
\end{align}

\subsection{ReCOIL-V}
A similar derivation can be done in $V$-space to obtain an analogous result for \texttt{ReCOIL-V}, although extra care has to be taken to ensure the non-negativity constraints similar to proof for in section~\ref{app:derive_dual_v}.
\begin{align}
\label{eq:primal_imitation_f_mixture}
\colorbox{RoyalBlue!15}{\primalV} &~~\max_{d(s,a)\ge0} -\f{d^S_\text{mix}(s,a)}{d^{E,S}_\text{mix}(s,a)} \nonumber\\
    &~~\text{s.t}~\textstyle \sum_{a\in\mathcal{A}} d(s,a)=(1-\gamma)d_0(s)+\gamma \sum_{(s',a') \in \S \times \A } d(s',a') p(s|s',a'), \; \forall s \in \S.
\end{align}
The dual of \primalV form for mixture distribution matching is given by:
\begin{equation}
\label{eq:recoil_v}
\resizebox{0.935\textwidth}{!}{
\colorbox{Goldenrod!30}{\texttt{ReCOIL-V}}
$\min_{V} \beta (1-\gamma)\E{s \sim d_0}{V(s)}  +\E{(s,a)\sim d_\text{mix}^{E,S}}{f^*_p(\mathcal{T}_0V(s,a)-V(s))} - (1-\beta) \E{(s,a)\sim d^S}{\mathcal{T}_0V(s,a)-V(s)}$.
}
\end{equation}

\end{proof}

\subsection{Suboptimality Bound for \texttt{ReCOIL-V}}

\begin{restatable}[Suboptimality Bound for Offline \texttt{ReCOIL}]{theorem}{recoil2}
\label{thm:recoil}
 Let $S^J$ denote the joint support of $d^S$ and $d^E$. Let $r(s,a) = V(s)- \mathcal{T}_0V(s,a)$ be the pseudo-reward implied by \texttt{ReCOIL} and $R_{\max} = \max_{s,a} r(s,a)$.
Let $D_\delta = \set{ d \, | \Pr_d\big((s,a) \in S^J \big) \geq 1 - \delta}$ be the set of visitation distributions that have $1 - \delta$ coverage of $S^J$.
Let $\pi^*_{\delta}$ be the best policy over all policies whose visitation distribution is in $D_\delta$.
Let $g(d,V) = (1-\gamma)\E{d_0(s)}{V(s)} + \E{d}{ \mathcal{T}_0V(s, a)-V(s)} - \f{d(s,a)}{d^E(s,a)}$ be the imitation learning objective, and
$h(V) = \max_{d \in D_\delta} g(d,V)$.
Suppose that we can solve \texttt{ReCOIL} with the constraint $d \in D_\delta$, $h$ is $\kappa$-strongly convex in $V$ and $\beta \rightarrow 1$, then the output policy $\hat{\pi}$ satisfies that
 $J(\pi^*_{\delta}) - J(\hat{\pi}) \leq \tfrac{4}{1-\gamma} \sqrt{ 2\delta R_{\max} / \kappa}$.
\vspace{-2mm}
\end{restatable}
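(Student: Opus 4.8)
The plan is to bound the suboptimality of the recovered policy by tracking the error introduced by restricting the feasible set of visitation distributions to $D_\delta$, and then converting a small error in the dual objective $h(V)$ into a small error in the primal visitation distribution via strong convexity. First I would set up the restricted primal problem: instead of matching over all achievable visitations, we solve $\max_{d \in D_\delta} g(d, V)$, so the optimal value is attained (up to the usual duality arguments) at the visitation $d^{\pi^*_\delta}$ of the best policy whose support lies $1-\delta$ inside $S^J$. The key observation is that \texttt{ReCOIL}'s objective, when $\beta \to 1$, reduces to the \dualV form with pseudo-reward $r(s,a) = V(s) - \mathcal{T}_0 V(s,a)$ and regularizer $\f{d}{d^E}$; by Theorem~\ref{thm:recoilq_main} (and its V-space analogue) strong duality holds, so the dual objective $h(V) = \max_{d \in D_\delta} g(d,V)$ has a minimizer $V^*$ whose induced distribution is $d^{\pi^*_\delta}$.

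Next I would quantify how far the approximately-optimal $V$ we actually obtain is from $V^*$. The coverage parameter $\delta$ enters because, on the set where $d$ is forced to put mass outside $S^J$ (measure at most $\delta$), the pseudo-reward contributes an error of at most $\delta R_{\max}$ to $g$; thus $|h(\hat V) - h(V^*)| = O(\delta R_{\max})$ for the $\hat V$ that \texttt{ReCOIL} returns. Using $\kappa$-strong convexity of $h$, this translates to $\|\hat V - V^*\|^2 \leq \tfrac{2}{\kappa}(h(\hat V) - h(V^*)) = O(\delta R_{\max}/\kappa)$, hence $\|\hat V - V^*\| \leq \sqrt{2\delta R_{\max}/\kappa}$. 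The induced visitation distribution $\hat d$ (recovered from $\hat V$ via the closed form $w^*(s,a) = \max(0, (f')^{-1}(\mathcal{T}_0\hat V(s,a) - \hat V(s)))$ of Eq.~\eqref{eq:sol_w_star}) is a Lipschitz function of $\hat V$ in the relevant norm, so $\|\hat d - d^{\pi^*_\delta}\|_1 = O(\sqrt{\delta R_{\max}/\kappa})$.

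Finally I would convert the visitation-distribution gap into a return gap. Since for any bounded reward $r$ with $\|r\|_\infty \le R_{\max}$ we have $|J(\pi) - J(\pi')| \le \tfrac{1}{1-\gamma}\|d^\pi - d^{\pi'}\|_1 \cdot \|r\|_\infty$ (and here the relevant ``reward'' comparison amounts to $\tfrac{1}{1-\gamma}$ scaling of the $L^1$ visitation gap, with the extra factors of $2$ coming from the two mixture components and from relating $\|\hat V - V^*\|$ to the distribution gap through $f$), we obtain $J(\pi^*_\delta) - J(\hat\pi) \le \tfrac{4}{1-\gamma}\sqrt{2\delta R_{\max}/\kappa}$, matching the claimed bound. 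The main obstacle I expect is making the middle step rigorous: carefully justifying that the restricted maximization $\max_{d\in D_\delta}g(d,V)$ still admits strong duality and that the returned $\hat V$ is genuinely within $O(\delta R_{\max})$ of optimal in $h$ — in particular controlling the discrepancy between solving the constrained ($d\in D_\delta$) and unconstrained \texttt{ReCOIL} objectives, and tracking how the $\beta \to 1$ limit interacts with the initial-state term $(1-\gamma)\E{d_0}{V(s)}$. The strong-convexity-to-Lipschitz chain and the final return-gap inequality are comparatively routine.
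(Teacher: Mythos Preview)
Your overall architecture (bound the gap in the dual objective, convert via strong convexity of $h$, then translate to a performance gap) matches the paper's, but two steps differ in ways that matter.

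First, you never explain \emph{why} the $\hat V$ returned by \texttt{ReCOIL} satisfies $h(\hat V)-h(V^*)=O(\delta R_{\max})$; you simply assert it. In the paper the mechanism is explicit: \texttt{ReCOIL} does not minimize $h$ but a surrogate $\hat h(V)=\max_{d\in D_\delta}\hat g(d,V)$, where $\hat g$ arises from importance-sampling the term $\E{d}{\mathcal{T}_0 V-V}$ through $d^{E,S}_{\mathrm{mix}}$. As $\beta\to 1$, the pointwise error $|g-\hat g|$ is exactly the mass $d$ places outside $S^J$ times a bound on $|\mathcal{T}_0V-V|$, giving $\epsilon_{\texttt{ReCOIL}}\le\delta R_{\max}$; then $|h-\hat h|\le 2\epsilon$ pointwise, and a standard telescoping argument with $\hat V=\argmin \hat h$ yields $h(\hat V)-h(V^*)\le 4\epsilon$. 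Your ``pseudo-reward contributes an error'' sentence gestures at the right quantity but skips the surrogate entirely, which is where the $\delta$ actually enters.

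Second, and more seriously, your final step takes a detour that does not close. You pass from $\|\hat V-V^*\|$ to a visitation-distribution gap $\|\hat d-d^{\pi^*_\delta}\|_1$ via the closed form $w^*=(f')^{-1}(\cdot)$, and then to a return gap. But $\hat d$ (the inner-maximizer at $\hat V$) is \emph{not} the visitation of the output policy $\hat\pi$, which is the policy acting greedily with respect to $\hat V$; you never relate the two. You also need an unstated Lipschitz property of $(f')^{-1}$, and your return-gap inequality is awkward because the underlying task reward is zero (this is imitation). The paper avoids all of this by going directly from $\|V^*-\hat V\|_\infty\le 2\sqrt{2\epsilon/\kappa}$ to the performance gap using the classical greedy-policy bound of \citet{singh1994upper}: if two value functions are $\epsilon$-close in sup norm, their greedy policies differ in return by $O(\epsilon/(1-\gamma))$. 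That single step replaces your entire visitation-distribution argument and produces the stated constant $\tfrac{4}{1-\gamma}\sqrt{2\delta R_{\max}/\kappa}$ without hand-waving the factors.
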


We provide a suboptimality bound by analyzing \texttt{ReCOIL-V} in this section. An important note is that we consider the setting $\beta \to 1$, which implies that we study the behavior of the optimization when $\beta$ is a number close to 1 and not exactly 1. This allows us to incorporate suboptimal data in off-policy imitation learning setting.

\label{sec:recoil_proof}
Recall that \texttt{ReCOIL-V} admits a \dualV form~\eqref{eq:recoil_v}.
When deriving \dualV, there is one step (Eq.~\eqref{eq:dual_v_derivation_is}) where we assumed the importance sampling is exact, i.e., 
\begin{equation}
   \E{(s,a)\sim d}{\mathcal{T}V(s,a) - V(s)} = \E{(s,a)\sim d^O}{\tfrac{d(s,a)}{d^O(s,a)}(\mathcal{T}V(s,a) - V(s))}. 
\end{equation}
However, this assumption does not hold in general and is not practical, because
$d^O$ and $d$ might have different support. The gap between the two terms greatly affects the performance of dual RL approaches.
We shall bound the approximation error introduced by importance sampling for \texttt{ReCOIL-V} in Section~\ref{app:approx_err_is}, and then bound the suboptimality
of the learned policy in Section~\ref{app:perf_bound_recoil_v}, under mild conditions. This analysis also results in the suboptimality bound of \texttt{IV-Learn} and \texttt{IQ-Learn} methods. 

Let $S^J$ denote the joint support of $d^S$ and $d^E$. 
Let $r(s,a) = V(s)-\gamma \mathcal{T}_0V(s,a)$ be the pseudo-reward implied by \texttt{ReCOIL} and $R_{\max} = \max_{s,a} |r(s,a)|$.
Let $D_\delta = \set{ d \, | \Pr_d\big((s,a) \in S^J \big) \geq 1 - \delta}$ be the set of visitation distributions that have $1 - \delta$ coverage of $S^J$,
where $\Pr_d\big((s,a) \in S^J \big)$ is the probabily that $(s,a)$ lies in $S^J$ when sampling $(s,a)$ from $d$.

We make the following assumptions for our proof:
    \item[\textbf{Assumption 1}] We consider imitation learning under the constraint $d \in D_\delta$. This is similar to pessimism assumption when learning from fixed datasets in offline RL~\cite{levine2020offline}.
    \item[\textbf{Assumption 2}] The hyperparameter $\beta$ for defining $\dmix$ and $\demix$ goes to $1$: $\beta \to 1$. 
    \item[\textbf{Assumption 3}] The function $h(V)$ defined in Section~\ref{app:perf_bound_recoil_v} is $\kappa$-strongly convex.

For Assumption 1, \texttt{ReCOIL-V}  is able to find a policy under the visitation constraint as a result of a combination of implicit maximization, which prevents overestimation and thus choosing OOD action, and weighted behavior cloning (Advantage-weighted regression), which keeps the output policy close to the dataset policy.


\subsubsection{Approximation Error of the Imitation Learning Objective}
\label{app:approx_err_is}



The imitation learning problem can be written in the Lagrangian form of \primalV where 
$r(s,a)=0$ everywhere:
\begin{equation}
\label{eq:original_objective_imitation}
    \min_V \max_{d \in D_\delta} \, (1-\gamma)\E{d_0(s)}{V(s)} + \E{d}{ \mathcal{T}_0V(s, a)-V(s)} - \f{d(s,a)}{d^E(s,a)},
\end{equation}
where we have a constraint $d \in D_\delta$ due to Assumption 1.
\texttt{ReCOIL-V} optimizes a surrogate objective of Problem~\eqref{eq:original_objective_imitation}.
To derive \texttt{ReCOIL-V}, consider the corresponding \primalV in its Lagrangian form 
\begin{equation}
    \min_V \max_{d \in D_\delta} \, (1-\gamma)\E{d_0(s)}{V(s)} + \E{d}{ \mathcal{T}_0V(s, a)-V(s)} - \f{\dmix}{\demix}.
\end{equation}
Rewriting the second term, we obtain
\begin{align}
\label{eq:recoil_objective_imitation}
& \min_V \max_{d \in D_\delta} \, (1-\gamma)\E{d_0(s)}{V(s)} + \frac{1}{\beta}\E{s,a\sim \dmix}{ \mathcal{T}_0V(s, a)-V(s)} \nonumber\\ 
& \hskip40pt - \f{\dmix}{\demix} - \frac{1-\beta}{\beta} \E{d^S}{ \mathcal{T}_0V(s, a)-V(s)}.
\end{align}
Now we \emph{approximate} the second term via importance sampling, which leads to
\begin{align}
& \min_V \max_{d \in D_\delta} \, (1-\gamma)\E{d_0(s)}{V(s)} + \frac{1}{\beta}\E{s,a \sim \demix}{ \frac{\dmix}{\demix}(\mathcal{T}_0V(s, a)-V(s))} \nonumber\\ 
& \hskip40pt - \E{\demix}{f(\frac{\dmix}{\demix})} - \frac{1-\beta}{\beta} \E{d^S}{ \mathcal{T}_0V(s, a)-V(s)}.
\end{align}
By expanding $\dmix = \beta d(s,a) + (1-\beta)d^S(s,a)$, we obtain
\begin{align}
& \min_V \max_{d \in D_\delta} \, (1-\gamma)\E{d_0(s)}{V(s)} + \frac{1}{\beta}\E{s,a \sim \demix}{ \frac{\dmix}{\demix}\left(\mathcal{T}_0V(s, a)-V(s)\right)} \nonumber\\ 
& \hskip40pt - \E{\demix}{f\left(\frac{\dmix}{\demix}\right)} - \frac{1-\beta}{\beta} \E{d^S}{\mathcal{T}_0V(s, a)-V(s)},
\end{align}
This can be further simplified to
\begin{align}
\label{eq:recoil_v_obj_compare}
& \min_V \max_{d \in D_\delta} \, (1-\gamma)\E{d_0(s)}{V(s)} + \E{s,a\sim \demix}{\frac{d(s,a)}{\demix}\left(\gamma \mathcal{T}_0V(s, a)-V(s)\right)}  \nonumber\\ 
& \hskip40pt - \E{\demix}{f\left(\frac{\dmix}{\demix}\right)},
\end{align}
where we used the fact
$$\E{s,a\sim \demix}{\frac{d^S(s,a)}{\demix}( \mathcal{T}_0V(s, a)-V(s))} = \E{s,a \sim d^S}{\mathcal{T}_0V(s, a)-V(s)}$$
as the support of $\demix$ contains the support of $d^S$.

Let $g(d, V)$ and $\hat{g}_\texttt{ReCOIL}(d,V)$ be the objective functions of Problem~\eqref{eq:original_objective_imitation}
and~\eqref{eq:recoil_v_obj_compare}.
$g(d,V)$ is the original IL objective we want to solve, and $\hat{g}_\texttt{ReCOIL}(d,V)$ is an approximation (with importance sampling) of $g(d, V)$ used by \texttt{ReCOIL-V}.
To simplify the analysis, we consider the case when mixture ratio $\beta \to 1$ (Assumption 2), so that \emph{the approximation error of
the objective function reduces to the approximation error of importance sampling.} That is,
\begin{equation}
    \abs{g(d,V) - \hat{g}_\texttt{ReCOIL}(d,V)} \to  \abs{\E{d}{ \mathcal{T}_0V(s, a)-V(s)} - \E{\demix}{\tfrac{d(s,a)}{\demix}\left( \mathcal{T}_0V(s, a)-V(s)\right)}}.
\end{equation}
For any visitation distribution $d \in D_\delta$, it holds that
\begin{align}
  & \abs{\E{d}{( \mathcal{T}_0V(s, a)-V(s))}-\E{\demix}{\tfrac{d(s,a)}{{\demix}}( \mathcal{T}_0V(s, a)-V(s))}}\nonumber \\
 & \leq \E{ s,a \in S^d \backslash S^J}{\abs{ \mathcal{T}_0V(s, a)-V(s)}} \le \max \delta \abs{ \mathcal{T}_0V(s, a) - V(s)}
 \le \delta R_{\max},
\end{align}
where $S^d$ is the support of $d$, and the second inequality follows from the definition of $D_\delta$. As a consequence, we can bound
the approximation error
\begin{align}
    \epsilon_\texttt{ReCOIL} =  \max_{d \in D_\delta, V} \abs{ g(d,V) - \lim_{\beta \to 1} \hat{g}_\texttt{ReCOIL}(d,V)} \leq \delta R_{\max}.
\end{align}
Similarly, one can show that for \texttt{IV-Learn}, we have
\begin{equation}
    \abs{g(d,V) - \hat{g}_\texttt{IVLearn}(d,V)} \to \abs{
    \E{d}{ \mathcal{T}_0V(s, a)-V(s)} - \E{s,a\sim d^E}{\tfrac{d(s,a)}{d^E(s,a)}} \left( \mathcal{T}_0V(s, a)-V(s)\right)}.
\end{equation}
Let $S^E$ be the support of $d^E$. Unlike \texttt{ReCOIL-V}, the objective of \texttt{IVLearn} suffers from the following worst-case estimation error 
\begin{align}
    & \abs{ \E{d}{( \mathcal{T}_0V(s, a)-V(s))}-\E{d^E}{\tfrac{d(s,a)}{d^E(s,a)}(\mathcal{T}_0V(s, a)-V(s))}  } \nonumber \\
   &  \leq \E{(s,a) \in S^d \backslash S^E}{\abs{ \mathcal{T}_0V(s, a)-V(s)}} \le \max \abs{ \mathcal{T}_0V(s, a) - V(s)} \le R_{\max},
\end{align}
and consequently
\begin{equation}
    \epsilon_\texttt{IVLearn} =  \max_{d \in D_\delta, V} \abs{ g(d,V) - \lim_{\beta \to 1} \hat{g}_\texttt{IVLearn}(d,V)} \leq R_{\max}.
\end{equation}

We note that the same approximation error bounds hold similarly for \texttt{IQLearn} as that of \texttt{IVLearn}. Thus \texttt{ReCOIL} has a smaller upper bound for the approximation error than \texttt{IQLearn} which we will see in the next sections leads to a better performance guarantee than \texttt{IQLearn}. 

\subsubsection{Performance Bound of the Learned Policy}
\label{app:perf_bound_recoil_v}

Recall that $\epsilon_\texttt{ReCOIL}$ denotes the approximation error of the objective function by \texttt{ReCOIL-V}:
\begin{equation}
\label{eq:bounded_approx_error}
    \epsilon_\texttt{ReCOIL} = \max_{d \in D_\delta, V} \abs{g(d,V)- \lim_{\beta \to 1}\hat{g}(d,V)}.
\end{equation}

Let $h(V) = \max_{d \in D_\delta} g(d,V)$ and $\hat{h}(V) = \max_{d \in D_\delta} \lim_{\beta \to 1} \hat{g}(d,V)$. It directly follows from Eq.~\eqref{eq:bounded_approx_error}
that 
\begin{equation}
\label{eq:abs_bound_h}
    |\hat{h}(V) - h(V)| \le 2\epsilon_\texttt{ReCOIL}, \; \forall V.
\end{equation}
We note that $\max_{d} g(d,V)$ (without the $d \in D_\delta$ constraint) is the standard \dualV form for imitation learning, but $h(V)$ here is defined as the same optimization under a constrained set $d \in D_\delta$.


Let $\hat{V} = \argmin_V \hat{h}(V)$ and $ V^* = \argmin_V h(V) $. We are interested
in bounding the gap $h(\hat{V}) - h(V^*)$. It holds that
\begin{align}
        h(\hat{V}) - h(V^*) 
    & = h(\hat{V}) - \hat{h}(\hat{V}) + \hat{h}(\hat{V}) - h(V^*) \\
    & = h(\hat{V}) - \hat{h}(\hat{V}) + \hat{h}(\hat{V}) - \hat{h}(V^*) + \hat{h}(V^*) - h(V^*) \\
    & \leq 2\epsilon_\texttt{ReCOIL} + 0 + 2\epsilon_\texttt{ReCOIL} \\
    & = 4 \epsilon_\texttt{ReCOIL},
\end{align}
where the inequality follows from Eq.~\eqref{eq:abs_bound_h} and the fact $\hat{V} = \argmin_V \hat{h}(V)$.

As a consequence, we have
\begin{align}
    4 \epsilon_\texttt{ReCOIL}
& \geq h(\hat{V}) - h(V^*) \\
& \geq h(V^*) + (V^*-\hat{V})\nabla h(V^*) + \frac{\kappa}{2}\|V^*-\hat{V}\|^2_F - h(V^*) \\
& = \frac{\kappa}{2}\|V^*-\hat{V}\|^2_F,
\end{align}
where the second inequality comes from the fact that the function $h(V)$ is $\kappa$-strongly convex (Assumption 3) and
$\nabla h(V^*) = 0$. It directly follows
that
\begin{align}
    \|V^*-\hat{V}\|_\infty \leq \|V^*-\hat{V}\|_F \leq  2\sqrt{\tfrac{2}{\kappa} \epsilon_\texttt{ReCOIL}}.
\end{align}

 Let $\pi^*_\delta$ be the policy that acts greedily with value function $V^*$, which is an optimal policy over all policies whose visitation distribution is within $D_\delta$. Let $\hat{\pi}$ denote the policy that acts greedily with value function $\hat{V}$, i.e., the output policy of \texttt{ReCOIL-V}. We then use the results in~\citet{singh1994upper} to bound the performance gap between $\pi^*_\delta$ and $\hat{\pi}$:
\begin{equation}
    J^{\pi^*_\delta} - J^{\hat{\pi}} \le \frac{4}{1-\gamma} \sqrt{\frac{2\epsilon_\texttt{ReCOIL}}{\kappa}} \leq  \frac{4}{1-\gamma} \sqrt{\frac{2\delta R_{\max}}{\kappa}}.
\end{equation}

The above results demonstrate that \texttt{ReCOIL} is able to leverage suboptimal data with an approximate in-distribution policy improvement and results in a policy close to the best policy with visitation almost in-support of the dataset.

\subsection{ReCOIL with $\chi^2$ divergence}
\label{ap:recoil_with_chi_square}
In this section, we derive the objective for \texttt{ReCOIL} under the chosen  $\chi^2$ divergence.Starting with the core ReCOIL objective for Q-update:

\begin{equation}
  \max_{\pi}\min_{Q}  \beta (1-\gamma)\mathbb{E}_{d_0,\pi}[Q(s,a)] +\mathbb{E}_{s,a\sim d_\text{mix}^{E,S}}[f^*(\mathcal{T}^{\pi}_0 Q(s,a )-Q(s,a))]  -(1-\beta)\mathbb{E}_{s,a\sim d^S}[\mathcal{T}^{\pi}_0 Q(s,a )-Q(s,a)]
\end{equation}

 Under the $\chi^2$ divergence ($f^* = x^2/4+x$), we can simplify the Recoil objective as follows. Let $Q(s',\pi)=\mathbb{E}_{a'\sim\pi(s')}[Q(s',a')]$, then:

\begin{multline}
        \max_{\pi}\min_{Q}  \beta (1-\gamma)\mathbb{E}_{d_0,\pi}[Q(s,a)] +\reb{0.25} \mathbb{E}_{s,a\sim d_\text{mix}^{E,S}}[(\gamma Q(s',\pi)-Q(s,a))^2] + \beta\mathbb{E}_{s,a\sim d_{E}}[(\gamma Q(s',\pi)-Q(s,a))]\\ + (1-\beta)\mathbb{E}_{s,a\sim d_{S}}[(\gamma Q(s',\pi)-Q(s,a))] - (1-\beta)\mathbb{E}_{s,a\sim d_{S}}[(\gamma Q(s',\pi)-Q(s,a))]
\end{multline}

The last term of the ReCOIL objective cancels, simplifying to:
\begin{multline}
        \max_{\pi}\min_{Q}  \beta (1-\gamma)\mathbb{E}_{d_0,\pi}[Q(s,a)] +\reb{0.25} \mathbb{E}_{s,a\sim d_\text{mix}^{E,S}}[(\gamma Q(s',\pi)-Q(s,a))^2] + \beta\mathbb{E}_{s,a\sim d_{E}}[(\gamma Q(s',\pi)-Q(s,a))]
\end{multline}
Finally, rearranging terms we get:
\begin{multline}
    \label{eq:simplified_recoil_chi_square}
        \max_{\pi}\min_{Q}  \beta (1-\gamma)\mathbb{E}_{d_0}[Q(s,\pi)]+\gamma\beta\mathbb{E}_{s\sim d_{E}}[(Q(s',\pi)] -\beta\mathbb{E}_{s,a\sim d_{E}}[Q(s,a))] +\reb{0.25}\mathbb{E}_{s,a\sim d_\text{mix}^{E,S}}[(\gamma Q(s',\pi)-Q(s,a))^2] 
\end{multline}
Equivalently:
\reb{
\begin{multline}
    \label{eq:simplified_recoil_chi_square}
        \max_{\pi}\min_{Q}   (1-\gamma)\mathbb{E}_{d_0}[Q(s,\pi)]+\gamma\mathbb{E}_{s\sim d_{E}}[(Q(s',\pi)] -\mathbb{E}_{s,a\sim d_{E}}[Q(s,a))] + \frac{0.25}{\beta}\mathbb{E}_{s,a\sim d_\text{mix}^{E,S}}[(\gamma Q(s',\pi)-Q(s,a))^2] 
\end{multline}
}

Substituting the initial distribution as the dataset distribution \reb{$d_0=d^S$} (similar to~\citep{kostrikov2019imitation} and common practice in off-policy RL), and combining the first two terms which decrease $Q$ values at a mixture of dataset states(offline and expert) under the current policy we obtain the intuitive definition of ReCOIL from the paper which indicates to increase Q at expert-state actions and decrease Q at dataset states under current policy. But this can lead to unbounded Q functions. Finally, Eq.~\ref{eq:simplified_recoil_chi_square} in the practical algorithm (Algorithm~\ref{algo:recoil_algorithm_final}) implements $\max_\pi$ by performing an implicit maximization.

\section{Taking dual-RL from offline to online setting}
\label{ap:offline_to_online}
\textbf{Imitation Learning } Problem~\eqref{eq:dual_q_il_coverage} naturally extends to online IL, as the suboptimal data does not need to be static---it can be the replay buffer during online training. The corresponding algorithms generalize as well, since their key component is estimating the $Q^\pi$ function using \emph{off-policy} data. It is worth noting that $d^S$ is dynamically changing for online IL. In contrast, Eq.~\eqref{eq:dual-Q-imit} cannot be extended to online IL. \citet{garg2021iq} uses IQLearn in the online setting where they add additional regularization using bellman backups on $d^S$.
Our results suggest this to be unprincipled (also pointed out by \citet{al2023ls}), as only expert data samples can be leveraged in this formulation. \looseness=-1

\textbf{Reinforcement Learning} Again, all the above-discussed offline methods naturally extend to online settings~\citep{kostrikov2021offline, garg2023extreme, nakamoto2023cal}, as their off-policy nature extends beyond the offline setup. Our analysis still holds, where the regularization distribution $d^O$ becomes the visitation distribution of the replay buffer $d^R$. It is worth noting that $d^R$ is dynamically changing over the course of training.

\section{Implementation and Experiment Details}
\label{ap:experiment_details}

\subsection{Offline IL: ReCOIL algorithm and implementation details}

The algorithm for ReCOIL can be found in  Algorithm~\ref{algo:recoil_algorithm_final}.  We base the ReCOIL implementation on the official implementation of XQL~\citep{garg2023extreme} and IQL~\citep{kostrikov2021offline}. Our network architecture mimics theirs and uses the same data preprocessing techniques. 

In our set of environments, we keep the same hyper-parameters (except $\lambda$ - parameter that intuitively controls the pessimism or the upper expectile of $Q$ function) across tasks - locomotion, adroit manipulation, and kitchen manipulation.  For each environment, the values of $\lambda$ are searched between [2.5,5,10]. We keep a constant batch size of 256 across all environments. For all tasks we average mean returns over 10 evaluation trajectories and 7 random seeds. We add Layer Normalization~\citep{lei2016layer} to the value networks for all environments. Full hyper-parameters we used for experiments are given in Table \ref{tab:recoil-hp}. Although there might be better alternatives for implicit maximization, we found the implicit maximizer from~\citep{garg2023extreme} to be especially performant in the imitation setting. For policy update, using Advantage weighted regression, we use the temperature $\alpha$ to be 3 for MuJoCo locomotion environments and to be 0.5 for kitchen environments. The resembles prior work~\citep{kostrikov2021offline}.

\paragraph{Numerical Stability: } In practice a naive implementation of ReCOIL update for $Q_\phi$ in equation~\ref{eq:recoil_qphi_update} suffers from numerical instability to learning $Q$-functions that are unbounded and since our objective maximizes $Q$-values at expert distribution, the $Q$ values can be arbitrarily large without any grounding. The equation for $Q$-update from ReCOIL is given by:
\begin{equation}
    \resizebox{0.92\textwidth}{!}{$ \mathcal{L}(\phi) = \beta (\E{d^S,\pi(a|s)}{Q_\phi(s,a)}-\E{d^E(s,a)}{Q_\phi(s,a)}) + \E{s,a\sim \demix}{(\gamma V_\psi(s')-Q_\phi(s,a))^2}$},
\end{equation}
To avoid this numerical instability we make a small modification to the objective, that upper bounds the $Q$-function regression target as follows:

\begin{equation}
    \resizebox{0.92\textwidth}{!}{$ \mathcal{L}(\phi) = \beta (\E{d^S,\pi(a|s)}{Q_\phi(s,a)}-\E{d^E(s,a)}{(Q_\phi(s,a)-Q_{max})^2} + \E{s,a\sim \demix}{(\gamma V_\psi(s')-Q_\phi(s,a))^2}$},
\end{equation}
Such modifications are inspired by~\citep{sikchi2022ranking, al2023ls} which have found that bounding targets can make learning significantly more stable.


Hyperparameters for our proposed off-policy imitation learning method \texttt{ReCOIL} are shown in Table~\ref{tab:recoil-hp}. 
\begin{table}[h!]
  \begin{center}
    \begin{tabular}{l|c}
      \toprule 
      \textbf{Hyperparameter} & \textbf{Value}\\
      \midrule 
      Policy learning rate & 3e-4\\
      Value learning rate & 3e-4\\
      $f$-divergence &   $\chi^2$\\
      max-clip (loss clipping) &  7 \\     
      MLP layers &  (256,256)\\
      LR decay schedule & cosine\\
       $Q_{max}$ & 200\\
      \bottomrule 
    \end{tabular}
  \end{center}
  \caption{Hyperparameters for \texttt{ReCOIL}. }
      \label{tab:recoil-hp}
\end{table}

\subsection{Offline Imitation Learning Experiments}
\label{ap:offline_il_experiment_details}

\textbf{Environments:} For the offline imitation learning experiments we focus on 10 locomotion and manipulation environments from the MuJoCo physics engine \citep{todorov2012mujoco}. These environments include Hopper, Walker2d, HalfCheetah, Ant, Kitchen, Pen, Door, Hammer, and Relocate. The MuJoCo environments used in this work are \href{https://github.com/deepmind/mujoco/blob/main/LICENSE}{licensed under CC BY 4.0} and the datasets used from D4RL are also \href{https://github.com/Farama-Foundation/D4RL/blob/master/LICENSE}{licensed under Apache 2.0}.

\textbf{Suboptimal Datasets:} For the offline imitation learning task, we utilize offline datasets consisting of environment interactions from the D4RL framework \citep{fu2020d4rl}. Specifically, we construct suboptimal datasets following the composition approach introduced in SMODICE \citep{ma2022smodice}. The suboptimal datasets, denoted as 'random+expert', 'random+few-expert', 'medium+expert', and  'medium+few-expert' combine expert trajectories with low-quality trajectories obtained from the "random-v2" and "medium-v2" datasets, respectively. For locomotion tasks, the 'x+expert' dataset (where x is 'random' or 'medium') contains a mixture of some number of expert trajectories ($\le$ 200) and $\approx$1 million transitions from the "x" dataset. The 'x+few-expert' dataset is similar to `x+expert,' but with only 30 expert trajectories included. For manipulation environments we consider only 30 expert trajectories mixed with the complete 'x' dataset of transitions obtained from D4RL. 

\textbf{Expert Dataset:} To enable imitation learning, an offline expert dataset is required. In this work, we use 1 expert trajectory obtained from the "expert-v2" dataset for each respective environment.

\textbf{Baselines:} To benchmark and analyze the performance of our proposed methods for offline imitation learning with suboptimal data, we consider four representative baselines in this work: SMODICE \citep{ma2022smodice}, RCE \citep{eysenbach2021replacing}, ORIL \citep{zolna2020offline}, and IQLearn \citep{garg2021iq}. We exclude DEMODICE \citep{kim2022demodice} from the comparison, as SMODICE has been shown to be competitive \citep{ma2022smodice}. SMODICE is an imitation learning method based on the dual framework, assuming a restrictive coverage. ORIL adapts the generative adversarial imitation learning (GAIL) \citep{ho2016generative} algorithm to the offline setting, employing an offline RL algorithm for policy optimization. The RCE baseline combines RCE, an online example-based RL method proposed by~\citet{eysenbach2021replacing}.
RCE also uses a recursive discriminator to test the proximity of the policy visitations to successful examples. \citep{eysenbach2021replacing}, with TD3-BC \citep{fujimoto2021minimalist}. Both ORIL and RCE utilize a state-action based discriminator similar to SMODICE, and TD3-BC serves as the offline RL algorithm. All the compared approaches only have access to the expert state-action trajectory.

The open-source implementations of the baselines SMODICE, RCE, and ORIL provided by the authors \citep{ma2022smodice} are employed in our experiments. We use the hyperparameters provided by the authors, which are consistent with those used in the original SMODICE paper \citep{ma2022smodice}, for all the MuJoCo locomotion and manipulation environments.

\subsection{Calculation of $f^*_p$ for $f$-DVL under practical considerations}
\label{ap:f_star_p_practical}
The main practical consideration when optimizing $f^*_p$ is that the function is not well defined in $\mathbb{R}$. We extend the domain of $f^*_p$ from a semi-closed interval $[l, \infty)$ for certain $l \in \R$ to  the set of real numbers $\R$. 
Such extension and the behavior of $f^*_p$ is described in the proof of proposition~\ref{thm:implicit_maximizer}, but we will discuss it in more detail in this section.

Let us start from Eq.\ref{eq:f_star_p_def}. Here, the domain of $f^*_p$ is the same as $f^{'-1}$.
Recall that $f$ only admits a domain of $\mathbb{R}^+ = [0, \infty)$. As a consequence, the function ${f'}^{-1}$ 
has a limited domain $[l,\infty)$ for certain $l \in \R$ (To see this, first note that $f'$ is non-decreasing as $f$ is convex; further,
since the domain of $f$ is bounded from below, the range of $f'$ is also bounded from below.).
The behavior of $f^*_p: [l, \infty) \mapsto \R_+$ is then described as:
\[
   f^*_p(x)= 
\begin{cases}
    f^*(x) ,& \text{if } {f'}^{-1}(x)>0\\
    C=-f(0),              & \text{otherwise}
\end{cases}
\]
Next, we extend the domain of $f^*_p$ to $\R$. We use $\bar{f^*_p}$ to denote the extended function.
A natural choice is to take Eq~\ref{eq:f_star_p_extension} and extend it to $\R$. We also note that similar extensions can be found in prior work~\citep{goldfieldinfotheory,wiebelfdivergence}. 

\paragraph{$\chi^2$ divergence}
In practice, using the definition of $f^*$ for $\chi^2$, we use a smoother surrogate objective that still maintains the property of implicit maximization for Equation~\ref{eq:ap_implicit_maximization_general} and concisely write:
\begin{equation}
    \bar{f^*_p} = \max(C, x^2/4+x).
\end{equation}


\paragraph{Total Variation}
For the special case of total variation divergence, note that the convex conjugate $f^*(y)$ exists and is given by $f^*(y)$=$y$ if $y \in [-\frac{1}{2}, \frac{1}{2}]$ otherwise $\infty$ , even if $f'^{-1}$ does not exist. A concise proof can be found in Example 8.1 from~\citep{goldfieldinfotheory}. The reason is basically that a closed-form solution for convex conjugate does not exist as equation~\ref{eq:sol_w_star} in our paper no longer follows ($f'$ is not invertible). We recover $f^*_p$ for total-variation divergence using the definition in Eq~\ref{eq:f_star_p_extension} similar to $\chi^2$, as follows:
\begin{equation}
     f^*_p(x)= 
\begin{cases}
    x ,& \text{if } 0.5>x>0\\
    \infty ,& \text{if } x>0.5\\
    C=-f(0),              & \text{otherwise}
\end{cases}
\end{equation}
\reb{In practice we use a smooth extension of $f^*_p$ for TV divergence given by $\max (-f(0),x)$}
\paragraph{Practical Choice of $C$}
While Eq~\ref{eq:f_star_p_extension} suggests setting $C$ dependent on the corresponding $f$-divergence, we found a single value of $C=0$ to be a robust choice across our experiments. A comparison between using $f^*_p=\{-f(0)~\text{if}~ x<-4,x^2/4+x ~\text{otherwise}\})$ and $f^*_p=\max(0,x^2/4+x)$ for $\chi^2$ divergence can be found in Table~\ref{tab:d4rl_fstarp_ablation} below. Choosing $C=0$ instead of $-f(0)$ led to performance improvements.

\begin{table}[H]
\centering
\setlength\tabcolsep{3pt}
\renewcommand{\arraystretch}{1.0}
\vspace{5pt}
\resizebox{0.95\textwidth}{!}{
\begin{tabular}{l||ccc}
\multicolumn{1}{c||}{Dataset} &  \bf{$f$-DVL} ($\chi^2$, $f^*_p=\max(0,x^2/4+x)$) & \bf{$f$-DVL} (TV) & \bf{$f$-DVL} ($\chi^2$, $f^*_p=\{-f(0)~\text{if}~ x<-4,x^2/4+x ~\text{otherwise}\})$ )\\\hline
halfcheetah-medium-v2 &  \highlight{47.7} & 47.5 & 46.19\\ 
hopper-medium-v2 & 63.0& 64.1 & \highlight{78.66}\\ 
walker2d-medium-v2 &  80.0 & \highlight{81.5} & 76.85\\ 
halfcheetah-medium-replay-v2  & 42.9 & \highlight{44.7} & 42.91\\ 
hopper-medium-replay-v2  & 90.7&\highlight{98.0} & 97.73\\ 
walker2d-medium-replay-v2  &52.1&68.7&\highlight{73.5}  \\ 
halfcheetah-medium-expert-v2 &89.3 &\highlight{91.2} & 89.3 \\ 
hopper-medium-expert-v2 &\highlight{105.8} & 93.3 & 94.5\\ 
walker2d-medium-expert-v2  &\highlight{110.1} & 109.6 &  106.54\\ \hline 

kitchen-complete-v0 & \highlight{67.5} &65.71 & 67.14 \\ 
kitchen-partial-v0 &58.8 & \highlight{70.0} & 48.2\\ 
kitchen-mixed-v0  & \highlight{53.75}&52.5 &  52.4 \\ \hline \hline
\end{tabular}
}
\caption{The normalized return of offline RL methods on D4RL tasks. Shows comparison of setting the cutoff constant for $f^*_p$ to be $C=-f(0)$ vs $C=0$}
\label{tab:d4rl_fstarp_ablation}
\end{table}

\subsection{Online and Offline RL: $f$-\texttt{DVL} Algorithm and implementation details}

\paragraph{Rewriting of \dualV using temperature parameter $\lambda$ instead of $\alpha$:} We found rewriting dualV using temperature parameter $\lambda$ instead of $\alpha$ to be particularly useful  in reducing the number of hyperparameters to tune in order to obtain strong learning performance. We replace the temperature parameter from $\alpha$ to $\lambda$. Notice that our initial \dualV formulation used the temperature parameter $\alpha$ as follows:
\begin{align}
\colorbox{Goldenrod!30}{\dualV} \;
\min_{V} {(1-\gamma)}\E{s \sim d_0}{V(s)} +{\alpha}\E{(s,a)\sim d^O}{f^*_p\left(\left[\mathcal{T}V(s,a)-V(s))\right]/
\alpha\right)},
\end{align}
The temperature parameter $\alpha$ captures the tradeoff between the first term which seeks to minimize V vs the second term which seeks to maximize V and set it to the maximum value possible when taking various actions from that state onwards. Depending on different $f$ generator functions we would require tuning this parameter as it has a non-linear dependence on the entire optimization problem through the function $f$. Instead we consider a simpler objective, that we observe to empirically reduce hyperparameter tuning significantly by trading off linear between the first term and the second term using parameter $\lambda$. This modification is used in all of our experiments for RL and IL. 
\begin{align}
\colorbox{Goldenrod!30}{\dualV (\text{rewritten})} \;
\min_{V} {(1-\lambda)}\E{s \sim d_0}{V(s)} +{\lambda}\E{(s,a)\sim d^O}{f^*_p\left(\left[\mathcal{T}V(s,a)-V(s))\right]\right)},
\end{align}

\label{ap:xql++algorithm_implementation}
\begin{algorithm}[t]
\algsetup{linenosize=\tiny}
\caption{\fdvl (Under Stochastic Dynamics)}
\label{algo:meta_algorithm}
\begin{algorithmic}[1]
    \STATE Initialize $Q_\phi$, $V_\theta$, and $\pi_\psi$, temperature $\alpha$, weight $\lambda$
    \STATE Let $\mathcal{D} =( s, a, r, s')$ be data from $\pi_{\mathcal{D}}$ (offline) or replay buffer (online)
    \FOR{$t=1..T$ iterations}
        \STATE  Train $Q_\phi$ using $\min_{\phi} \mathcal{L}(\phi)$:\\
        \begin{equation*}
            \mathcal{L}(\phi) =  \E{s,a,s'\sim \mathcal{D}}{(Q_\phi(s,a) - (r(s,a) + V(s')))^2}
        \end{equation*}
        \STATE Train $V_\theta$ using $\min_{\theta} \mathcal{J}(\theta)$\\
          
\begin{equation*}
\resizebox{0.91\hsize}{!}{$\mathcal{J}(\theta)= \begin{cases}
 (1-\lambda)\E{s,a\sim D}{V_{\theta}(s)}+ \lambda\E{s,a\sim D}{\max(\bar{Q}_\phi(s,a)-V_{\theta}(s),0)} &\text{TV}\\
 (1-\lambda)\E{s,a\sim D}{V_{\theta}(s)}+ \lambda\E{s,a\sim D}{\max((\bar{Q}_\phi(s,a)-V_{\theta}(s))+ 0.5(\bar{Q}_\phi(s,a)-V_{\theta}(s))^2,0)} &\text{$\chi^2$}\\
(1-\lambda)\E{s,a\sim D}{V_\theta(s)} +\lambda \E{s,a\sim D}{\text{exp}(\left(\left[\bar{Q}_\phi(s,a)-V_\theta(s))\right]-1\right)} &\text{RKL/XQL}
\end{cases}$}
\end{equation*}
         \STATE Update $\pi_\psi$ via $\max_\psi \mathcal{M}(\psi)$:
        \begin{equation}
             \mathcal{M}(\psi) =  \mathbb{E}_{s,a\sim\mathcal{D}}[e^{\alpha(Q_\phi(s, a) - V_\theta(s))} \log \pi_\psi(s|a)].
        \end{equation}
        \vspace{0.025in}
    \ENDFOR
\end{algorithmic}
\end{algorithm}

\textbf{Offline RL}: Algorithm~\ref{ap:xql++algorithm_implementation} gives the algorithm for \fdvl. This section provides additional offline RL experimences along with complete hyper-parameter and implementation details. Figure \ref{fig:offline_rl_plots} shows learning curves for all the environments. $f$-\texttt{DVL} exhibits as fast convergence as XQL but avoids the numerical instability of XQL with one hyperparameter across each set of environments.
We base our implementation of $f$-\texttt{DVL} off the official implementation of XQL~\citep{garg2023extreme} and IQL from \citet{kostrikov2021offline}. Our network architecture mimics theirs and uses the same data preprocessing techniques. 

In our set of environments, we keep the same hyper-parameter across sets of tasks - locomotion, adroit manipulation, kitchen-manipulation, and antmaze. Contrary to XQL, we find no need to use tricks like gradient clipping to stabilize learning. For each set of environment, the values of $\lambda$ were tuned via hyper-parameter sweeps over a fixed set of values $[0.65, 0.7, 0.75, 0.8, 0.9]$. We keep a constant batch size of 256 across all environments. For MuJoCo locomotion tasks we average mean returns over 10 evaluation trajectories and 7 random seeds. For the AntMaze tasks, we average over 1000 evaluation trajectories. We add Layer Normalization \citep{lei2016layer} to the value networks for all environments. For policy update, using Advantage weighted regression, we use the temperature $\alpha$ to be 3 for MuJoCo locomotion environments and to be 0.5 for kitchen environments. The resembles prior work~\citep{kostrikov2021offline}. Full hyper-parameters we used for experiments are given in Table \ref{tab:offline_hyperparameters}.

\subsection{Online RL Experiments with \texttt{$f$-DVL}}

\textbf{Online RL}: We base the implementation of SAC on \href{https://github.com/denisyarats/pytorch_sac}{pytorch\_sac} and XQL~\citep{garg2023extreme}.  Like in offline experiments, hyper-parameters were left as default except for $\lambda$, which we tuned between $[0.6, 0.7, 0.8]$ and found a single value to work best across all environments. This was in contrast to XQL's finding which required per environment different hyperparameter. Also, as opposed to XQL we required no clipping of the loss function. We test our method on 7 random seeds for each environment.

\begin{table}[]
\centering
\scriptsize
\begin{tabular}{l|lll}
Env                          & Lambda $\lambda$ &   Batch Size & v\_updates \\ \midrule
halfcheetah-medium-v2        & 0.7       & 256   & 1 \\
hopper-medium-v2             & 0.7       & 256    & 1 \\
walker2d-medium-v2           & 0.7       & 256    & 1 \\
halfcheetah-medium-replay-v2 & 0.7        & 256    & 1 \\
hopper-medium-replay-v2      & 0.7         & 256    & 1 \\
walker2d-medium-replay-v2    & 0.7       & 256    & 1 \\
halfcheetah-medium-expert-v2 & 0.7        & 256   & 1 \\
hopper-medium-expert-v2      & 0.7       & 256  & 1 \\
walker2d-medium-expert-v2    & 0.7       & 256   & 1 \\
antmaze-umaze-v0             & 0.8        & 256   & 1 \\ 
antmaze-umaze-diverse-v0     & 0.8     & 256   & 1 \\
antmaze-medium-play-v0       & 0.8    & 256   & 1 \\
antmaze-medium-diverse-v0    & 0.8    & 256    & 1\\
antmaze-large-play-v0        &0.8     & 256  & 1 \\
antmaze-large-diverse-v0     & 0.8    & 256   & 1 \\ 
kitchen-complete-v0          & 0.8    & 256    & 1 \\
kitchen-partial-v0           & 0.8    & 256   & 1 \\
kitchen-mixed-v0             & 0.8     & 256 & 1 \\

pen-human-v0                 & 0.8      & 256    & 1  \\
hammer-human-v0              & 0.8     & 256   & 1  \\
door-human-v0                & 0.8       & 256     & 1 \\
relocate-human-v0            & 0.8    & 256    & 1  \\
pen-cloned-v0                & 0.8     & 256    & 1  \\
hammer-cloned-v0             &0.8      & 256     & 1  \\
door-human-v0                & 0.8       & 256    & 1  \\
relocate-human-v0            & 0.8       & 256     & 1  \\
\bottomrule
\end{tabular}
\caption{Offline RL Hyperparameters used for $f$-\texttt{DVL}. Lambda $\lambda$ is the value that controls the strength of the implicit maximizer. V-updates gives the number of value updates per Q updates.}
\label{tab:offline_hyperparameters}
\end{table}

\begin{table}[h!]
  \begin{center}
    \begin{tabular}{l|c}
      \toprule 
      \textbf{Hyperparameter} & \textbf{Value}\\
      \midrule 
      Policy updates $n_{pol}$ & 1\\
      Policy learning rate & 3e-4\\
      Value learning rate & 3e-4\\     
      MLP layers &  (256,256)\\
      LR decay schedule & cosine\\
      \bottomrule 
    \end{tabular}
  \end{center}
  \caption{Common hyperparameters for $f$-\texttt{DVL}. }
      \label{tab:fDVL-hp}

\vspace{10pt}
%
\centering
\begin{tabular}{l|l}
\toprule
\textbf{Hyperparameter} & \textbf{Value}\\
\midrule
Batch Size            & 1024                     \\
Learning Rate         & 0.0001                   \\
Critic Freq           & 1                            \\
Actor Freq            & 1                         \\
Actor and Critic Arch & 1024, 1024          \\
Buffer Size           & 1,000,000           \\
Actor Noise           & Auto-tuned  \\
Target Noise          & --             \\
\bottomrule
\end{tabular}
\caption{Hyperparameters for SAC.}
\label{tab:online_hparams}
\end{table}

\paragraph{Compute}

We ran all our experiments on a machine with AMD EPYC 7J13 64-Core Processor and NVIDIA A100 with a GPU memory consumption of <1000 MB per experiment. Our offline RL and IL experiments for locomotion tasks take 10-20 min and the online IL experiments took around 5-6 hours for 1 million timesteps.

\section{Additional Experimental Results}
\label{ap:additional_experiments}

\subsection{Why Dual-RL Methods are a Better Alternative to Traditional Off-Policy Algorithms}
\label{ap:dual_rl_are_better}

Our experimental evaluation aims to illustrate the benefits of the dual RL framework and analyze our proposed method for off-policy imitation learning. In the RL setting, we first present a case study on the failure of ADP-based methods like SAC~\citep{haarnoja2018soft} to make the most when bootstrapped with additional (helpful) data. This setting is what motivates the use of off-policy algorithms in the first place and is invaluable in domains like robotics~\cite {uchendu2022jump,nair2020awac}. Our results validate the benefit of utilizing the dual RL framework for off-policy learning. 

\textbf{The limitations of classical off-policy algorithms:} \reb{In this section, we test the sensitivity of an ADP method (SAC~\citep{haarnoja2018soft}) vs dual-RL methods in the case when we initialize the replay buffer of both styles of off-policy algorithms with expert or human demonstrated trajectories.}  At the beginning of training, each learning agent is provided with expert or human-demonstrated trajectories for completing the task. We add 1000 transitions from this dataset to the replay buffer for the off-policy algorithm to bootstrap from. SAC is able to leverage this helpful data and shows improved performance in Hopper-v2, where the action dimension is small. As the action dimension increases, \reb{the instability of SAC} becomes more apparent (see SAC+off policy data and SACfD plots in Figure~\ref{fig:off-policy-main}). We hypothesize that this failure in the online RL setting is primarily due to the training instabilities caused by TD-backups resulting in overestimation in regions where the agent's current policy does not visit. In Figure~\ref{fig:off-policy-reason}, we observe that overestimation indeed happens in environments with larger action dimensions and these overestimations take longer to get corrected and in the process destabilize the training. 
\begin{figure}[h]
\begin{center}
          \includegraphics[width=0.9\linewidth]{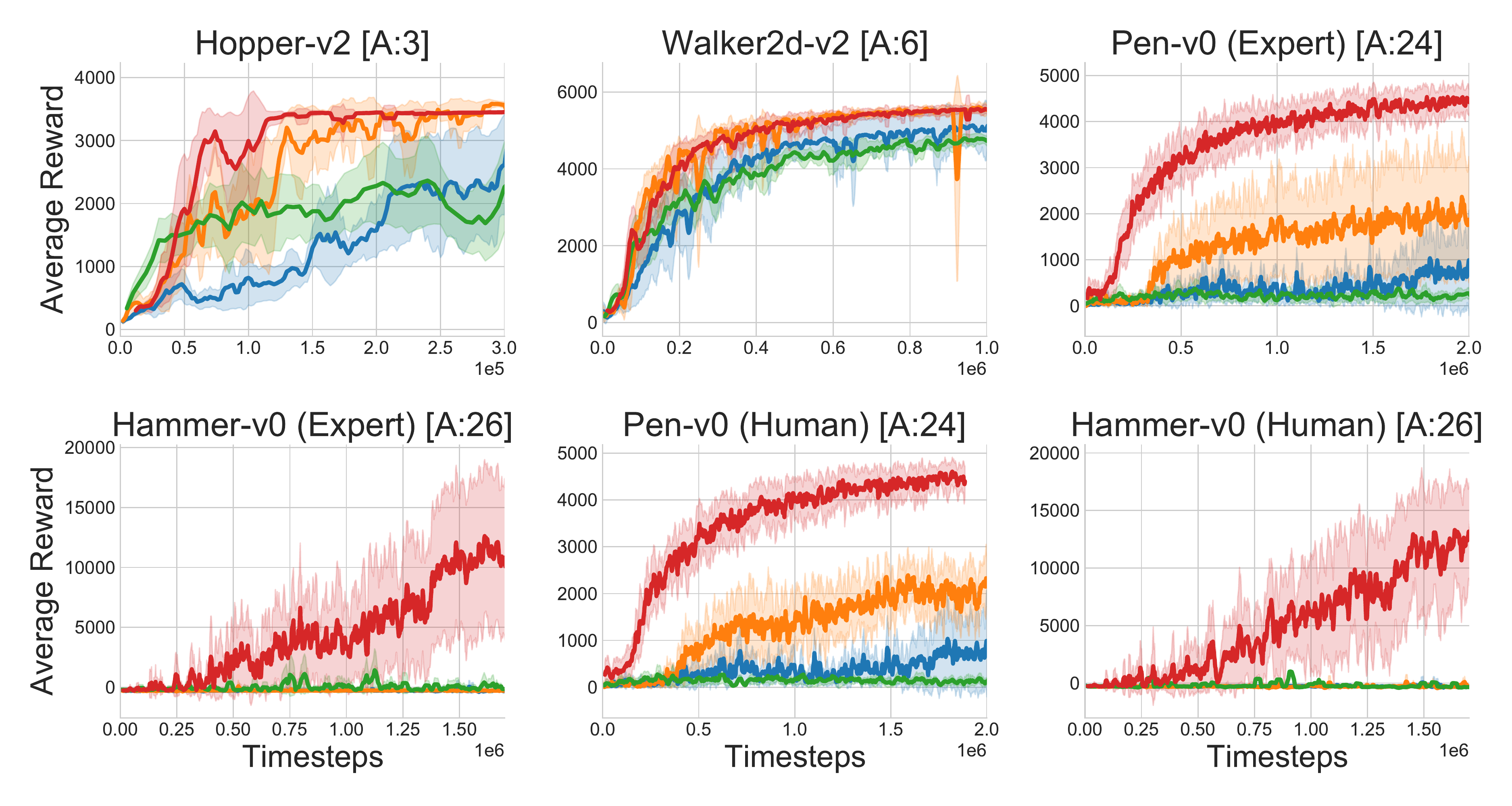}
      \\
      \vspace{-2.0mm}
    \includegraphics[width=1.0\linewidth]{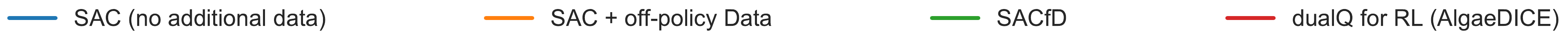}  
\end{center}
\caption{Despite the promise of off-policy methods, current methods based on ADP such as SAC fail when the dimension of action space, denoted by A, increases even when helpful data is added to their replay buffer. On other hand, dual-Q methods are able to leverage off-policy data to increase their learning performance}
\label{fig:off-policy-main}
\end{figure}

\begin{figure}[h]
\begin{center}
\includegraphics[width=0.6\linewidth]{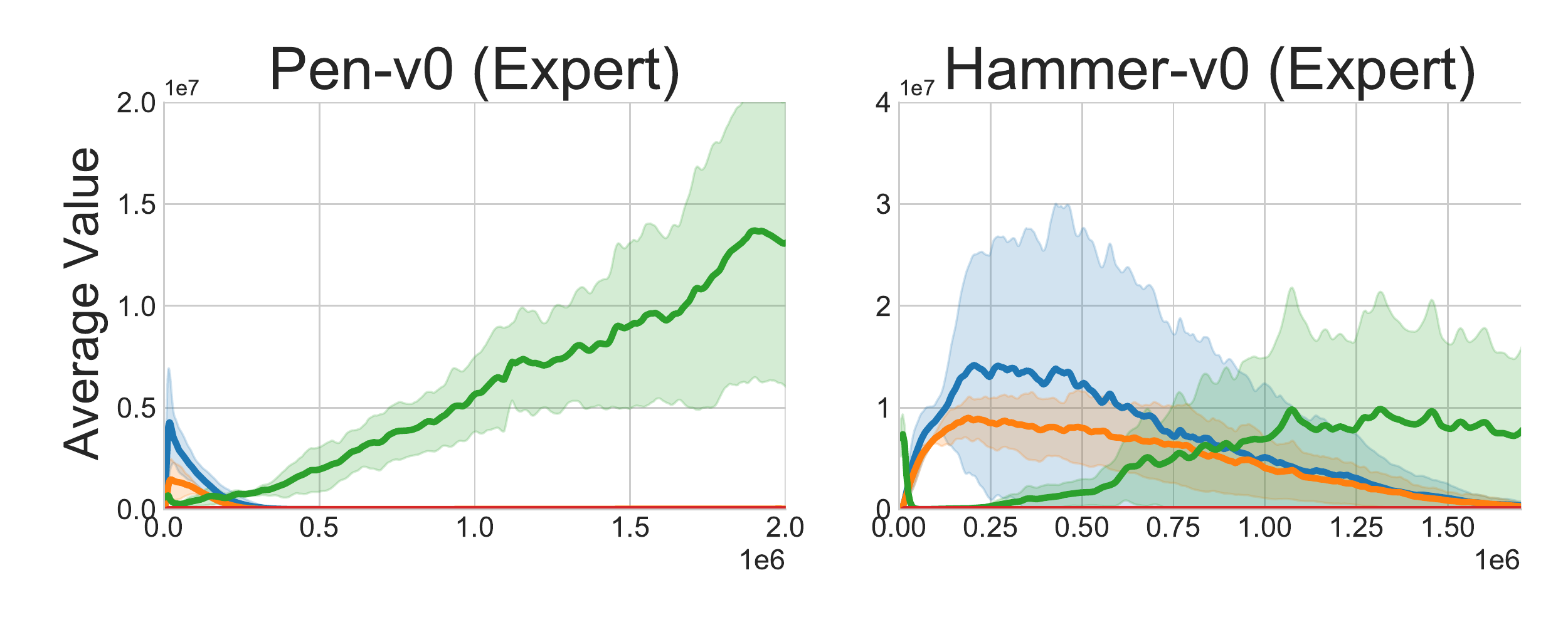}
      \\
\includegraphics[width=1.0\linewidth]{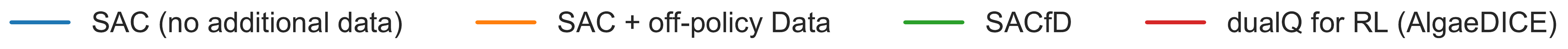}  
\end{center}
\caption{SAC and SACfD suffer from overestimation when off-policy data is added to the replay buffer. We hypothesize this to cause instabilities during training while dualQ has no overestimation.}
\label{fig:off-policy-reason}
\end{figure}
Figure~\ref{fig:off-policy-main} shows that the dual-RL method (AlgaeDICE) is able to leverage off-policy data to increase learning performance without any signs of destabilization. This can be attributed to the distribution correction estimation property of dual RL methods which updates the current policy using the corrected on-policy policy visitation~\citep{nachum2019algaedice}. Note, that we set the temperature $\alpha$ to a low value (0.001) to disentangle the effect of pessimism which is an alternate way to avoid overestimation.

\subsection{Training Curves for \texttt{ReCOIL} on MuJoCo tasks}
We show learning curves for \texttt{ReCOIL} in Figure~\ref{fig:offline_il} for locomotion tasks and Figure~\ref{fig:offline_il_adroit} for manipulation tasks below. \texttt{ReCOIL} training curves are reasonably stable while also being performant, especially in the manipulation setting where other methods completely fail.

\begin{figure}
\begin{center}
    \includegraphics[width=1.0\linewidth]{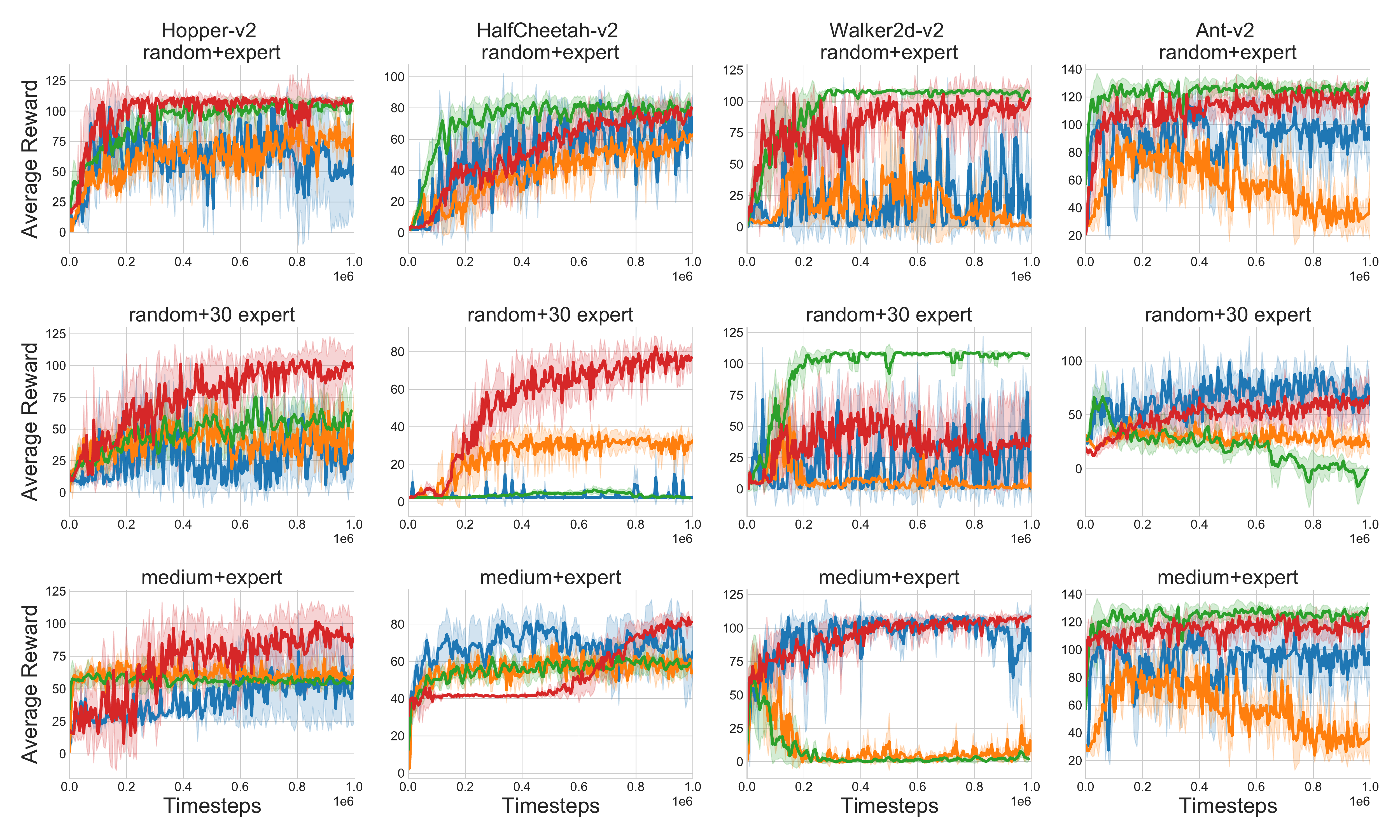}
      \\
       \includegraphics[width=0.9\linewidth]{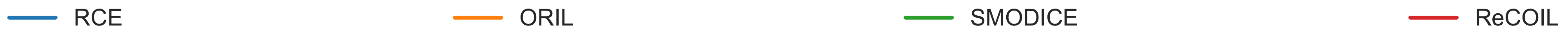}
      \vspace{-2.0mm}
\end{center}
\caption{Learning curves for ReCOIL showing that it outperforms baselines in the setting of learning to imitate from diverse offline data. The results are averaged over 7 seeds}
\label{fig:offline_il}
\end{figure}
\begin{figure}[h]
\begin{center}
    \includegraphics[width=1.0\linewidth]{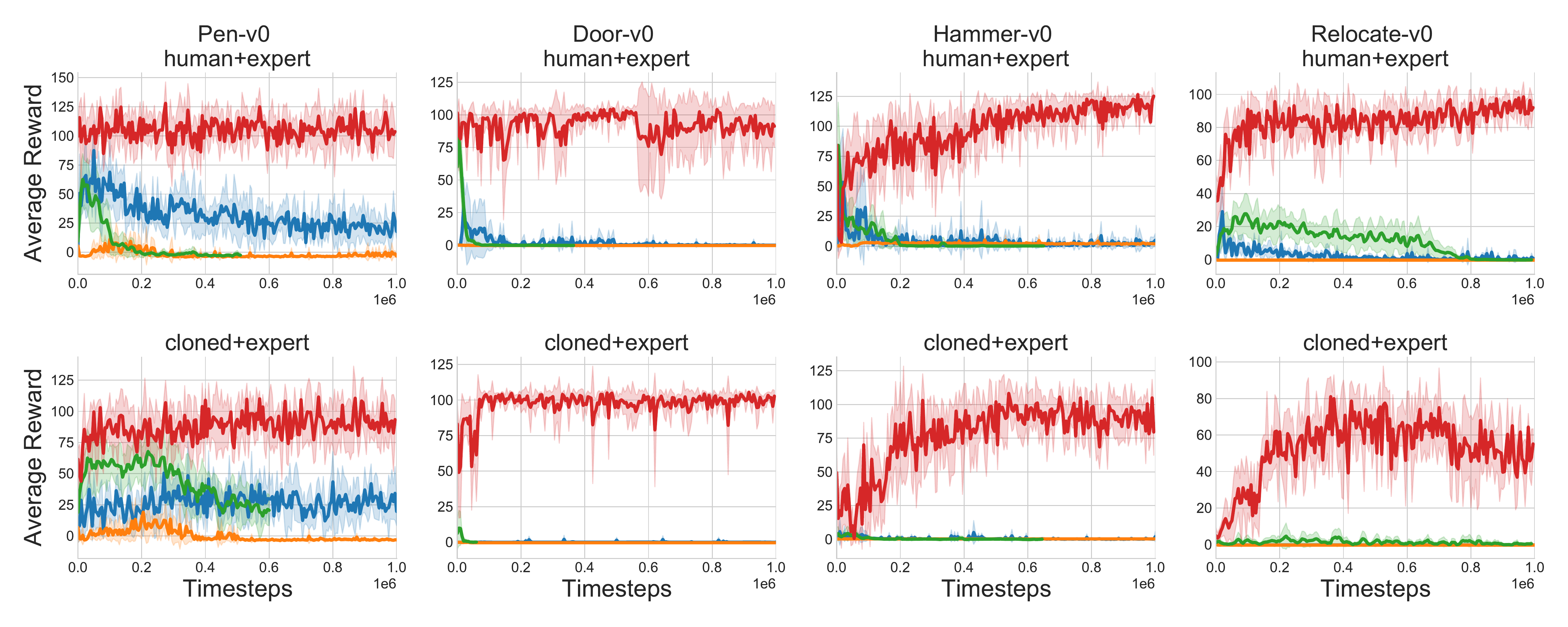}
      \\
       \includegraphics[width=0.9\linewidth]{figures/offline_il/offline-il-legend.pdf}
      \vspace{-2.0mm}
\end{center}
\caption{Learning curves for ReCOIL showing that it outperforms baselines in the setting of learning to imitate from diverse offline data. The results are averaged over 7 seeds}
\label{fig:offline_il_adroit}
\end{figure}
\subsection{Does \texttt{ReCOIL} Allow for Better Estimation of Agent Visitation Distribution?}
\label{ap:density_ratio_estimation}
\reb{
In this section, we consider the experiment of visitation estimation for a prespecified policy given expert data and suboptimal/replay data. Our experiments are tabular, so we can have an accurate estimate of the visitation of the policy by running rollouts in the MDP. We call this estimate the ground-truth policy visitation. We will estimate the accuracy of dual-RL methods to estimate visitation density by measuing MSE error against the ground truth agent/policy visitation. The property of Dual-RL methods to implicitly estimate density ratios (eg. Eq~\ref{ap:optimal_distribution_ratio}) has been studied before~\citep{nachum2020reinforcement}. We explain below how to extract visitation density ratios for any policy $\pi$ with \texttt{ReCOIL}. First, we discuss the setup for the experiment.}

\reb{We consider two settings in our experiments: (1) a 2-timestep MDP where agent states from state $s_0$ and transitions to one of the states $\{s_1,s_2,s_3,s_4,s_5\}$ (from left to right in Figure~\ref{fig:distribution_ratio_estimation1}) which are absorbing. In this setting, the replay buffer perfectly covers the unknown ground truth agent visitation. (2) a 2-D gridworld (Figure~\ref{fig:distribution_ratio_estimation2}) where the agent can move cardinally and the replay buffer distribution does not cover the unknown ground truth agent visitation. In both environments we have access to a policy at training time -- the task is to estimate this policy's visitation using the offline dataset of expert and suboptimal quality. Both figures also demonstrate the ground truth policy visitation that is used to compute the mean-squared evaluation loss and test the quality of our predictions.}

\reb{We consider the proposed \texttt{ReCOIL} method and investigate its ability to estimate distribution ratios correctly. We consider the inner optimization for \texttt{ReCOIL-Q}:}
\begin{align}
&\min_{Q(s,a)}  \beta (1-\gamma)\E{d_0(s),\pi(a|s)}{Q(s,a)} +\E{s,a\sim d_{mix}^{E,S}}{f^*(\gamma \sum_{s'} p(s'|s,a)\pi(a'|s')Q(s',a')-Q(s,a))}\nonumber \\
& - (1-\beta) \E{s,a\sim d^S}{\gamma \sum_{s'} p(s'|s,a)\pi(a'|s')Q(s',a')-Q(s,a)}
\end{align}

\reb{The following holds for the inner optimization for \texttt{ReCOIL-Q} when $Q$ is optimized:}
\begin{equation}
    f^{*'}(\gamma \sum_{s'} p(s'|s,a)\pi(a'|s')Q(s',a')-Q(s,a))=\frac{\beta d^\pi(s,a) + (1-\beta)d^S(s,a)}{\beta d^E(s,a)+(1-\beta)d^S(s,a)}
    \label{eq:mixture_density_ratio_1}
\end{equation}
\reb{Thus, given the visitation distribution of the replay buffer $d^R$, expert $d^E$ and the policy $\pi$, the inner optimization implicitly learns the distribution ratio in Eq~\ref{eq:mixture_density_ratio_1}, allowing us to infer agent visitation $d^\pi$.}

\reb{Our results (Figure~\ref{fig:distribution_ratio_estimation1} and \ref{fig:distribution_ratio_estimation2}) demonstrate that in the perfect coverage setting, \texttt{ReCOIL} is able to infer the agent policy visitation perfectly, and in the case of imperfect coverage is able to significantly outperform other methods (IQLearn and SMODICE). Note that we modify SMODICE with a $Q$ objective instead of $V$ objective to incorporate state-action expert data rather than relying on state-only expert data. IQLearn does not leverage replay data information, relying only on expert data to infer agent visitation. SMODICE's reward function $-\log \frac{d^S(s,a)}{d^E(s,a)}$, arising from its coverage assumption is ill-defined in parts of state space where the expert has no support leading to poor downstream density ratio estimation.}

\reb{Our results on the} 2-D gridworld environment that demonstrate the failures of a method that either do not utilize all available suboptimal data (IQ-Learn) or relies on a coverage assumption (SMODICE). We saw that \texttt{ReCOIL} is able to perfectly infer the agent's visitation when the replay buffer covers agent ground truth visitation perfectly (Fig~\ref{fig:distribution_ratio_estimation1}) and here we see that \texttt{ReCOIL}  is able to outperform baselines when the replay buffer has imperfect coverage over the agent's ground truth visitation (Fig~\ref{fig:distribution_ratio_estimation2}). In this task, the agent starts at (0,0) which is the top-left corner. The agent can only move in cardinal directions with deterministic dynamics. The agent has access to two sources of off-policy data - expert visitation and replay visitation. The problem is to estimate the agent's visitation distribution given access to the agent's policy using all the available transition data. IQLearn and SMODICE predict an agent's visitation that wildly differs from Agent's ground truth visitation distribution. While \texttt{ReCOIL} is not perfect as the coverage of the offline data is limited, we can estimate some visitation which is qualitatively very similar to the agent's ground truth visitation.

\begin{figure}[h]
\begin{center}
    \includegraphics[width=0.9\linewidth]{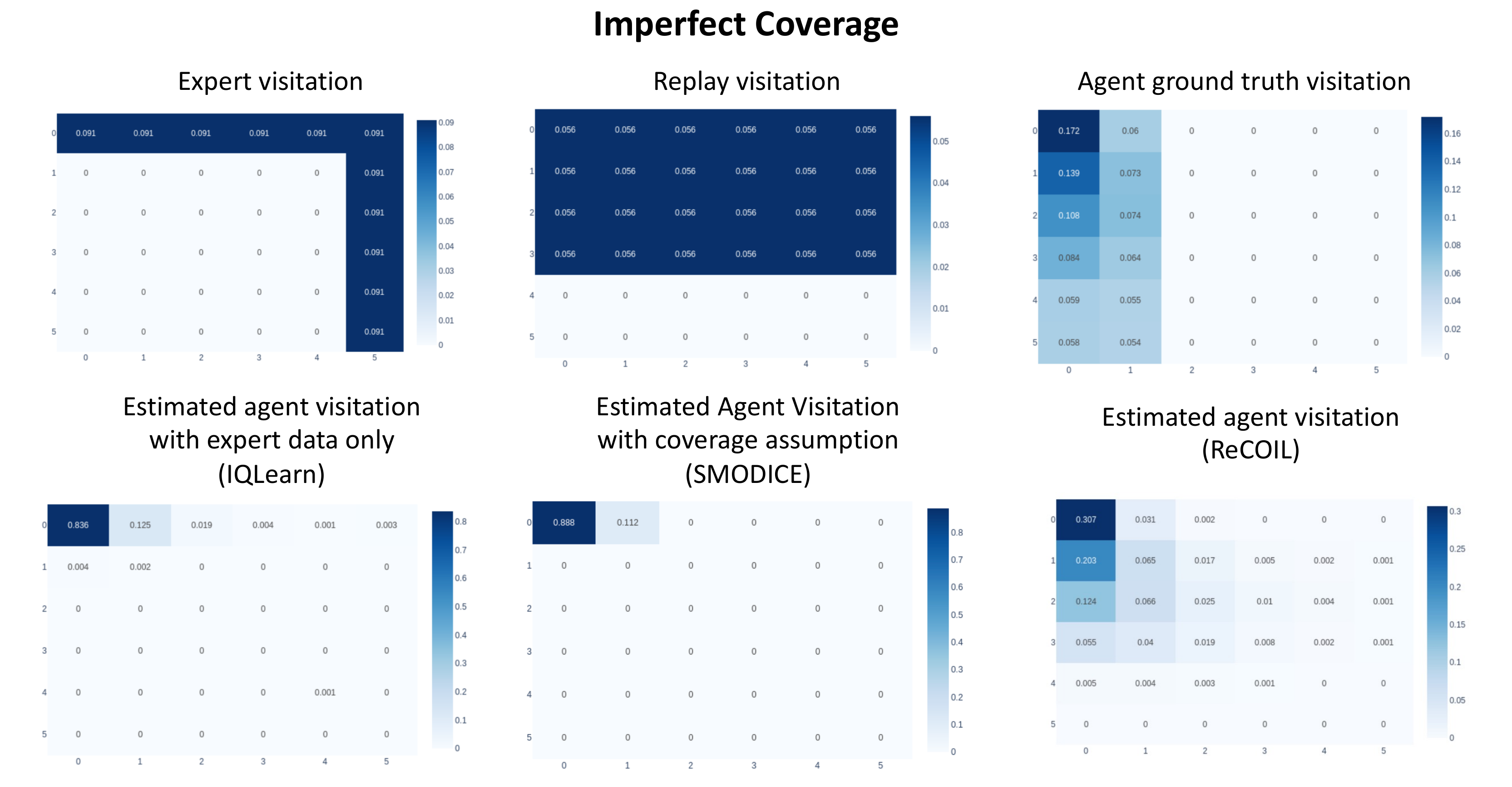}
      \\
      \vspace{-2.0mm}
\end{center}
\caption{Replay buffer consists of data that visits near the initial state (0,0), a setting commonly observed when training RL agents. We estimate the agent's policy visitation and observe \texttt{ReCOIL} to outperform both methods which rely on expert data only or use the replay data with coverage assumption}
\label{fig:distribution_ratio_estimation2}
\end{figure}

\subsection{\texttt{ReCOIL}: Qualitative Comparison with a Baseline}
\label{ap:qualitative_comparison}
In Figure~\ref{fig:imitation_errors}, we investigate qualitatively why other baselines fail where \texttt{ReCOIL} succeeds in high-dimensional tasks. A surprising finding is that the baseline we consider 'SMODICE' almost learns to imitate. It follows nearly the same actions as an expert but makes small mistakes along the way - eg. 'gripping the hammer too loose' or 'picking up the ball at a slightly wrong location'. SMODICE is unable to recover from such mistakes and ends up having low performance. \texttt{ReCOIL}, on the other hand, learns a performant task-solving policy from the same data.

\begin{figure}[h]
\begin{center}
      \includegraphics[width=0.8\linewidth]{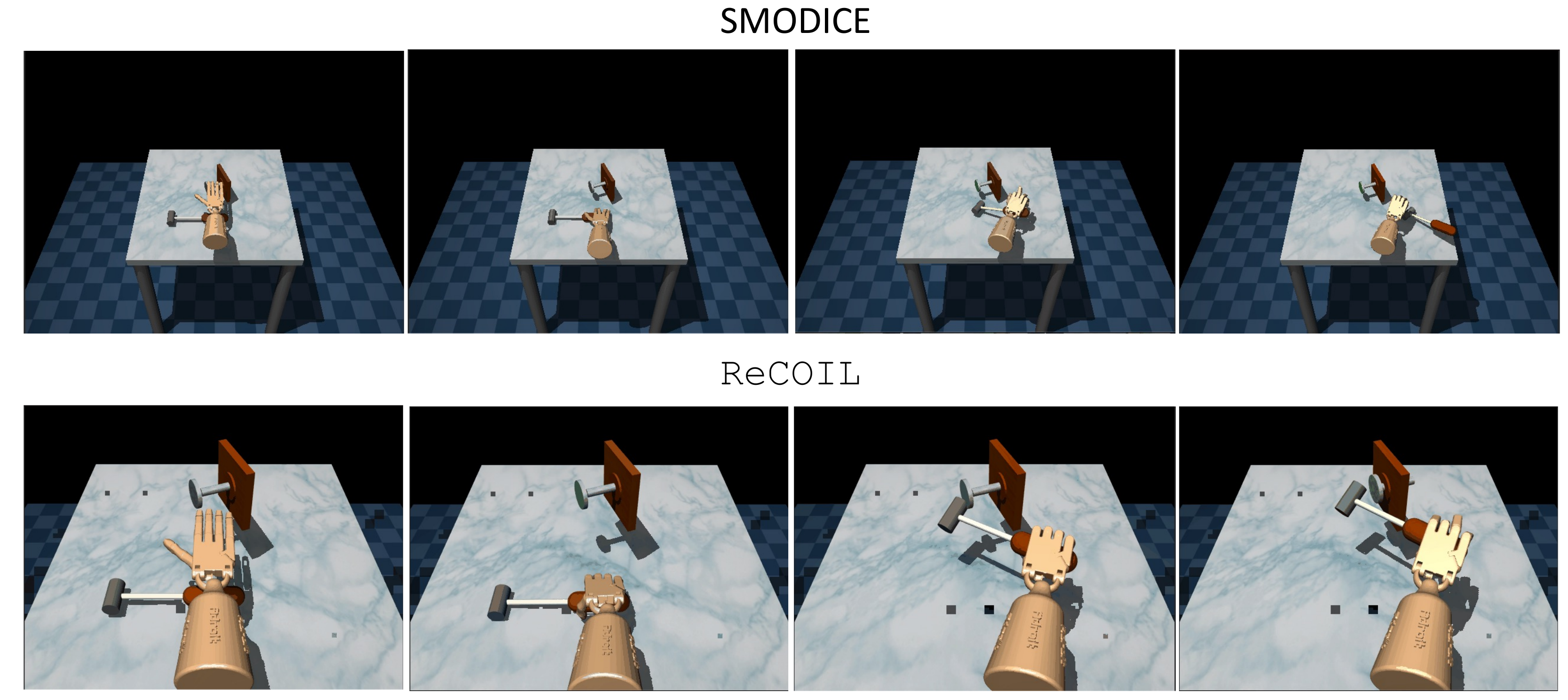}
      \\
      \includegraphics[width=0.8\linewidth]{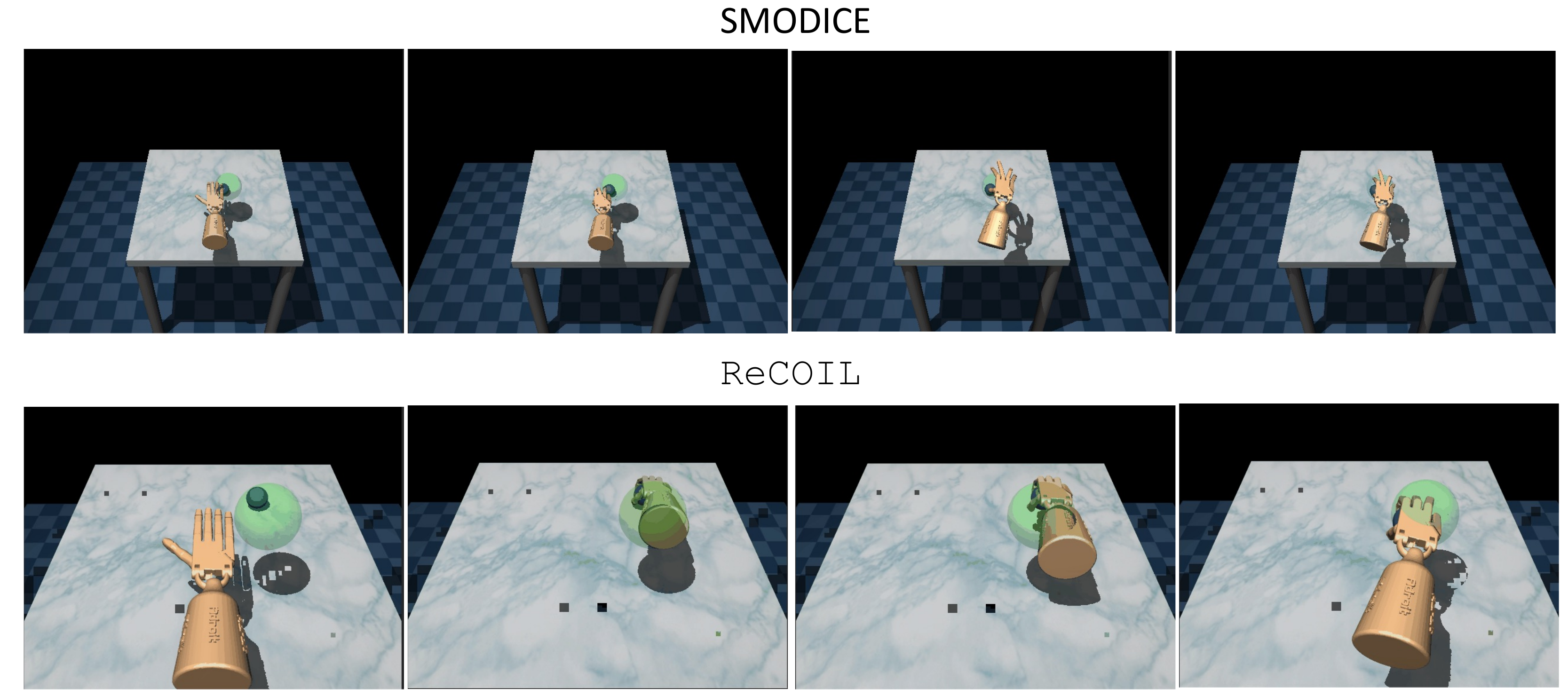}
      \vspace{-2.0mm}
\end{center}
\caption{Errors compound in imitation learning and recovery is of crucial importance. Figure demonstrate how SMODICE 'almost' imitates, figures out roughly what actions to take but does not realise once it has made a mistake. In Hammer environment, it grips the hammer too loose causing it to get thrown away and for relocate picks up just beside the ball missing the original task the expert intended to solve.}
\label{fig:imitation_errors}
\end{figure}

\subsection{Evaluation of \fdvl for Online RL}

Fig~\ref{fig:online_rl} shows that \fdvl \reb{is competitive to performant off-policy RL methods} in the online RL benchmarks.
\begin{figure*}[t]
\begin{center}
          \includegraphics[width=0.85\linewidth]{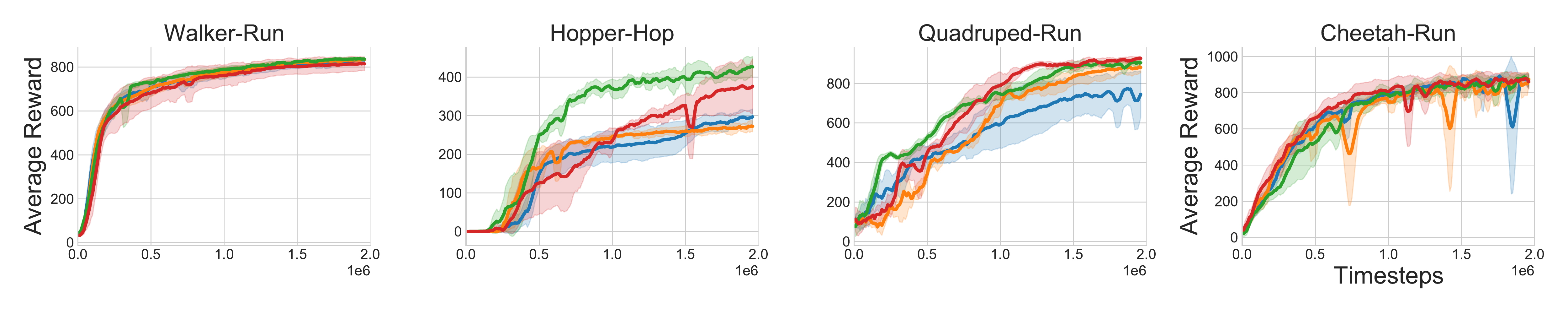}
      \\
      \vspace{-2.0mm}
    \includegraphics[width=0.8\linewidth]{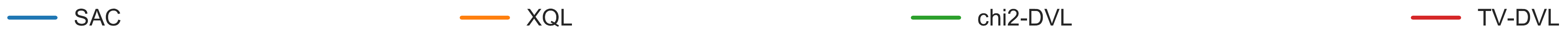}  
\end{center}
\vspace{-3.0mm}
\caption{Online RL: $f$-\texttt{DVL} \reb{is competitive to} \texttt{SAC} and \texttt{XQL}, particularly for \texttt{Hopper-Hop} and \texttt{Quadruped-Run} tasks. \looseness=-1}
\label{fig:online_rl}
\end{figure*}

\subsection{Training Curves for \fdvl on MuJoCo Tasks (Offline)}
Figure~\ref{fig:offline_rl_plots} shows the learning curves during training for \fdvl. \fdvl is able to leverage low-order conjugate $f$-divergences to give offline RL algorithms that more stable compared to XQL. XQL frequently crashes in the antmaze environment.

\begin{figure}[h]
\begin{center}
    \includegraphics[width=1.0\linewidth]{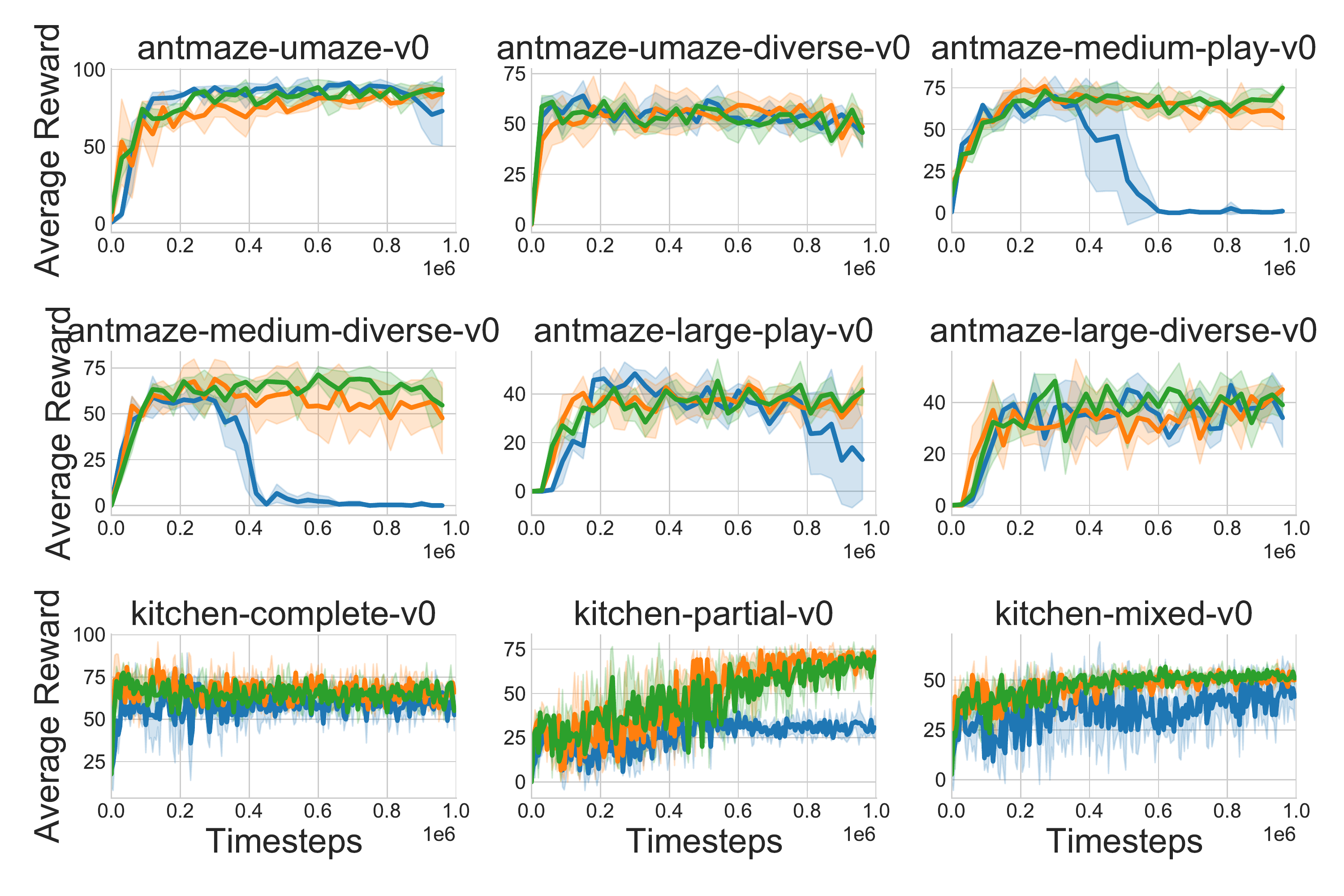}
      \\
       \includegraphics[width=0.9\linewidth]{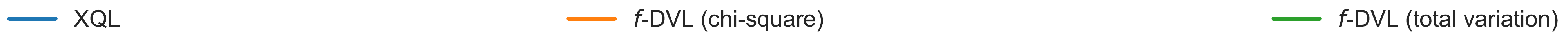}
      \vspace{-2.0mm}
\end{center}
\caption{Learning curves for \fdvl showing that it is able to leverage low-order conjugate $f$-divergences to give offline RL algorithms that more stable compared to XQL. The results are averaged over 7 seeds}
\label{fig:offline_rl_plots}
\end{figure}

\subsection{$f$-\texttt{DVL}: Complete Offline RL Results}

Table~\ref{tab:d4rl_complete} and Table~\ref{tab:franca_adroit} show complete results for benchmarking \fdvl on MuJoCo D4RL environments. Here we also show the author-reported results for XQL and the reproduced results (XQL(r)) using the metric of taking the average of the last iterate performance across seeds.

\begin{table}[h]
\centering
\scriptsize
\setlength\tabcolsep{3pt}
\renewcommand{\arraystretch}{1.0}
\caption{\small Averaged normalized scores on MuJoCo locomotion and Ant Maze tasks. XQL(r) denotes the reproduced results with author's implementation. We do not highlight XQL due to an incorrect evaluation strategy used in the work. Standard deviations for our proposed offline RL methods can be found below. }
\begin{tabular}{c|l||rrrrrrrr|cccc}
& \multicolumn{1}{c||}{Dataset} & BC & 10\%BC & DT  & TD3+BC & CQL & IQL & {XQL} & {XQL}(r) & \fdvl $\chi^2$ & \fdvl TV\\\hline
\parbox[t]{2mm}{\multirow{9}{*}{\rotatebox[origin=c]{90}{\textbf{Gym}}}} & ~~ 
halfcheetah-medium-v2 & 42.6 & 42.5 & 42.6  & \textbf{48.3} &44.0 & \textbf{47.4} & 47.7 & 47.4 & \textbf{47.7}$\pm${\scriptsize0.23} & 47.5$\pm${\scriptsize0.20}\\ & ~~ 
hopper-medium-v2 & 52.9 & 56.9 & 67.6 &  59.3 & 58.5 & 66.3 & 71.1 & \textbf{68.5} & 63.0$\pm${\scriptsize4.64}& 64.1$\pm${\scriptsize3.46}\\ &~~ 
walker2d-medium-v2 & 75.3 &75.0 &74.0 & \textbf{83.7} & 72.5 & 78.3 & 81.5 & 81.4 & 80.0$\pm${\scriptsize6.75} & {81.5}$\pm${\scriptsize3.47}\\ & ~~ 
halfcheetah-medium-replay-v2 & 36.6 & 40.6 & 36.6 &  \textbf{44.6} & \textbf{45.5} &{ 44.2} & 44.8 &44.1  & 42.9$\pm${\scriptsize1.79} & \textbf{44.7}$\pm${\scriptsize0.44}\\ & ~~ 
hopper-medium-replay-v2 & 18.1 & 75.9 & 82.7 & 60.9 & 95.0 &  94.7 & 97.3 &95.1  & 90.7$\pm${\scriptsize6.13}&\textbf{98.0}$\pm${\scriptsize4.62}\\ & ~~ 
walker2d-medium-replay-v2 & 26.0 & 62.5 & 66.6 &  \textbf{81.8} & 77.2 & 73.9 & 75.9 &58.0 &52.1$\pm${\scriptsize12.15}&68.7$\pm${\scriptsize7.20}  \\ & ~~ 
halfcheetah-medium-expert-v2 & 55.2 & \textbf{92.9} & 86.8 &   90.7 & {91.6} & 86.7 & 89.8 & 90.8 &89.3$\pm${\scriptsize2.42} &{91.2}$\pm${\scriptsize2.27} \\ & ~~ 
hopper-medium-expert-v2 &52.5 & \textbf{110.9} & {107.6}  & 98.0 & 105.4 & 91.5 & 107.1 & 94.0 &\textbf{105.8}$\pm${\scriptsize5.79} & 93.3$\pm${\scriptsize14.04} \\ & ~~ 
walker2d-medium-expert-v2 & 107.5 & 109.0 & 108.1  & \textbf{110.1} & 108.8 & \textbf{109.6} & 110.1 & \textbf{110.1} &\textbf{110.1}$\pm${\scriptsize0.29} & 109.6$\pm${\scriptsize1.46}\\ \hline 
 \parbox[t]{2mm}{\multirow{6}{*}{\rotatebox[origin=c]{90}{\textbf{AntMaze}}}} & ~~ 
antmaze-umaze-v0 & 54.6 & 62.8 & 59.2 & 78.6 & 74.0 & \textbf{87.5} & 87.2 & 47.7&{83.7}$\pm${\scriptsize5.90} &\textbf{87.7}$\pm${\scriptsize3.07} \\  & ~~ 
antmaze-umaze-diverse-v0 & 45.6 & 50.2 & 53.0 &  71.4 & \textbf{84.0} & 62.2 & 69.17 & 51.7   &  50.4$\pm${\scriptsize2.44}&48.4$\pm${\scriptsize9.95}\\  & ~~ 
antmaze-medium-play-v0 & 0.0 & 5.4 & 0.0 &  10.6 & 61.2 & \textbf{71.2} & 73.5 &  31.2 & 56.7$\pm${\scriptsize13.82}&\textbf{71.0}$\pm${\scriptsize5.90} \\  & ~~
antmaze-medium-diverse-v0 & 0.0 & 9.8 & 0.0 &  3.0 & 53.7 & \textbf{70.0} &67.8 &  0.0& 48.2$\pm${\scriptsize8.85}& 60.2$\pm${\scriptsize7.99}\\  & ~~ 
antmaze-large-play-v0 &0.0 &0.0 &0.0 &0.2 &15.8 & 39.6 & 41 &10.7 &36.0$\pm${\scriptsize5.82} & \textbf{41.7}$\pm${\scriptsize9.43}\\  & ~~ 
antmaze-large-diverse-v0 & 0.0 &6.0 & 0.0 & 0.0 & 14.9 & \textbf{47.5} & 47.3&31.28 &{44.5}$\pm${\scriptsize7.66} & 39.3$\pm${\scriptsize11.84}\\ \hline 
\parbox[t]{2mm}{\multirow{3}{*}{\rotatebox[origin=c]{90}{\textbf{Franka}}}} & ~~ 
kitchen-complete-v0 & 65.0 & -  & -  & - &43.8 & 62.5 & 72.5 & 56.7 &\textbf{67.5}$\pm${\scriptsize6.68} &61.3$\pm${\scriptsize7.95} \\ & ~~ 
kitchen-partial-v0 &38.0 & - & - & - & 49.8 & 46.3 & 73.8  & 48.6 &58.8$\pm${\scriptsize9.60} & \textbf{70.0}$\pm${\scriptsize1.82} \\ & ~~ 
kitchen-mixed-v0 & 51.5 & - & - &  - & 51.0 & 51.0 & 54.6 & 40.4  & \textbf{53.75}$\pm${\scriptsize5.32}&52.5$\pm${\scriptsize5.15} \\ \hline \hline
\end{tabular}
\label{tab:d4rl_complete}
\end{table}

\begin{table}[h]
\centering
\caption{\small Evaluation on Adroit tasks from D4RL.XQL-C (r) denotes the reproduced results with author's implementation. }
\scriptsize
\begin{tabular}{l||rrrrrrrr|cc}
Dataset &BC &BRAC-p &BEAR &Onestep RL &CQL & IQL & {XQL} & XQL(r) & \fdvl ($\chi^2$) & \fdvl (TV) \\ \hline
pen-human-v0 &63.9 &8.1 &-1.0 &- &37.5 & 71.5 & 85.5 & 63.5&67.1&64.1\\
hammer-human-v0 &1.2 &0.3 &0.3 &- & 4.4 &1.4 & 2.2 &1.4&2.6& 1.8 \\
door-human-v0 &2 &-0.3 &-0.3 &- & 9.9 &4.3 & 11.5 & 6.63 & 5.7&6.77 \\
relocate-human-v0 &0.1 &-0.3 &-0.3 &- &{0.2} &0.1 & 0.17 & 0.2 &{0.37} &0.12  \\
pen-cloned-v0 &37 &1.6 &26.5 & {60.0} &39.2 &37.3 & 38.6 & 25.25 &36.1&38.1 \\
hammer-cloned-v0 &0.6 &0.3 &0.3 & 2.1 & 2.1 & 2.1  & 4.3 &1.58 &1.64& 1.65 \\
door-cloned-v0 &0.0 &-0.1 &-0.1 &0.4 &0.4 & 1.6 & 5.9 & 0.69 &0.45&0.87 \\
relocate-cloned-v0 &-0.3 &-0.3 &-0.3 &-0.1 &-0.1 &-0.2 & -0.2 & -0.24 &-0.24 &-0.24\\ 
\end{tabular}
\label{tab:franca_adroit}
\end{table}

\subsection{Sensitivity of $f$-\texttt{DVL} (offline) with varying $\lambda$  on MuJoCo tasks}
\label{ap:dvl_sensitivity}

We ablate the temperature parameter, $\lambda$ for offline RL experiments using \fdvl in Figure~\ref{fig:offline_rl_chi_sensitivity_plots} and Figure~\ref{fig:offline_rl_tv_sensitivity_plots}. The temperature $\lambda$ controls the strength of KL penalization between the learned policy and the dataset behavior policy, and a small $\lambda$ is beneficial for datasets with lots of random noisy actions. In contrast, a high $\lambda$ favors more expert-like datasets. We observe that significantly less hyperparameter tuning is required compared to XQL as a single temperature value works well across a broad range of experiments.
 
\begin{figure}[h]
\begin{center}
    \includegraphics[width=1.0\linewidth]{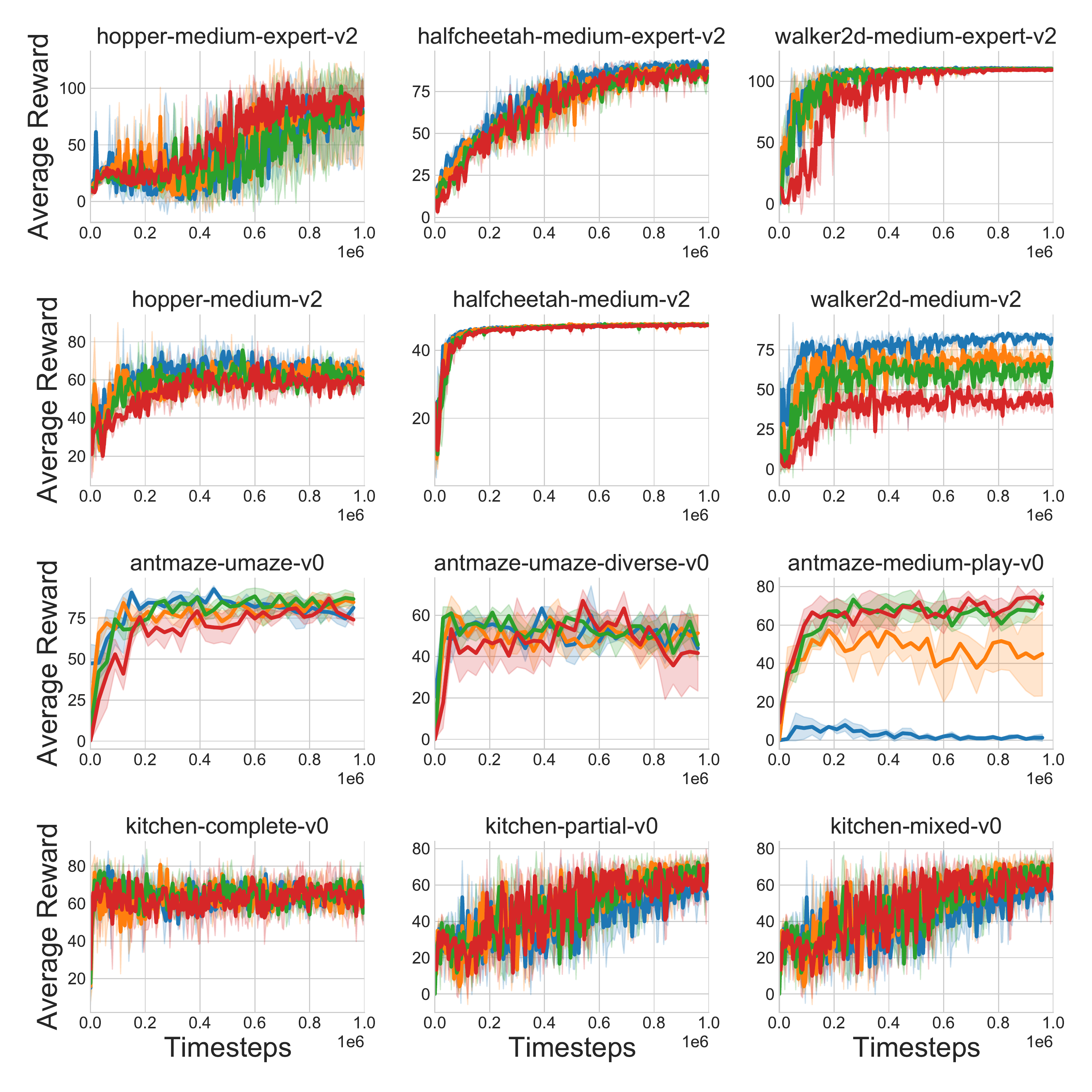}
      \\
       \includegraphics[width=0.9\linewidth]{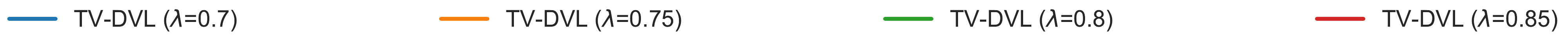}
      \vspace{-2.0mm}
\end{center}
\caption{Offline RL: Ablating the temperature parameter for \fdvl (Total variation). The plot shows the effect of temperature parameters on learning performance.}
\label{fig:offline_rl_tv_sensitivity_plots}
\end{figure}

\begin{figure}[h]
\begin{center}
    \includegraphics[width=1.0\linewidth]{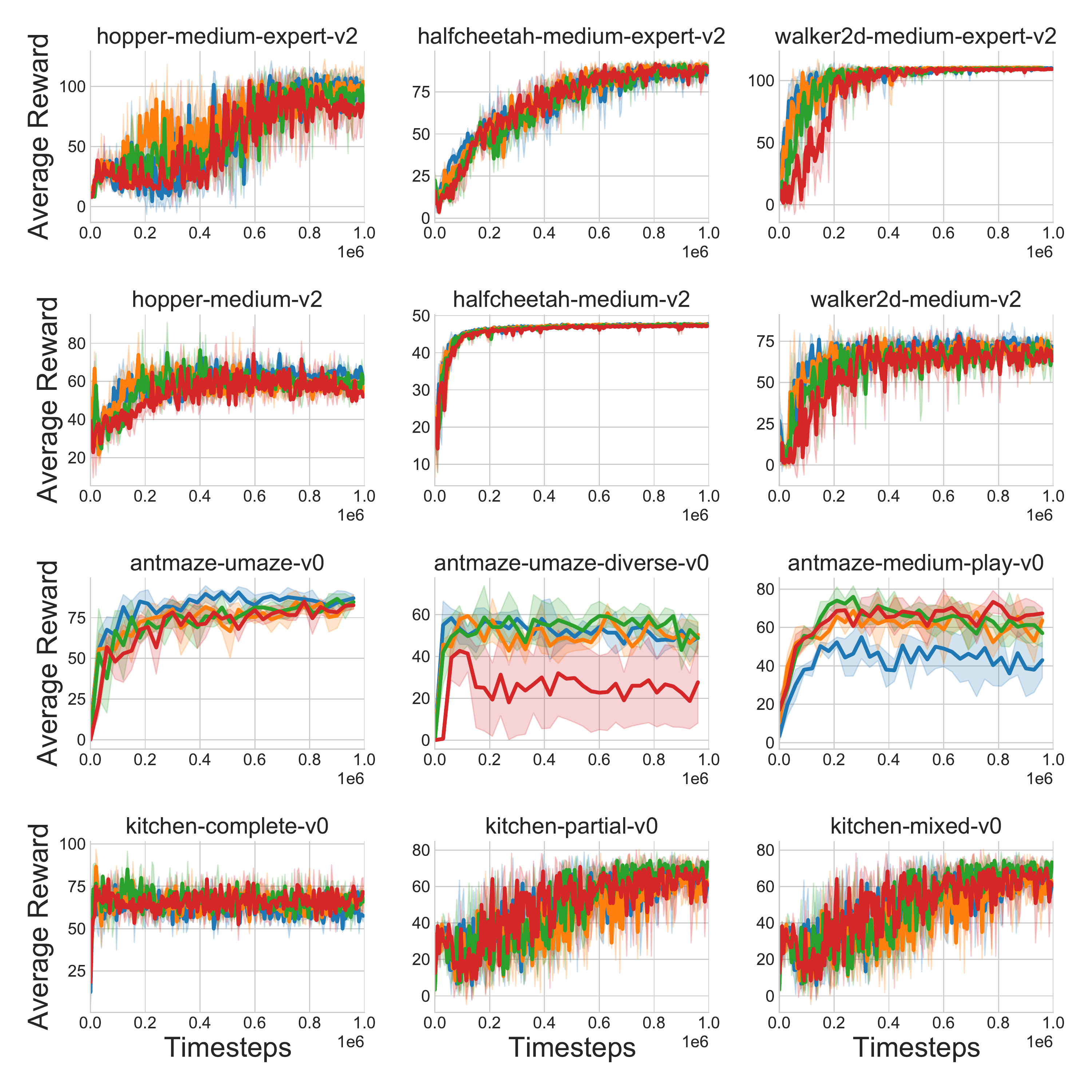}
      \\
       \includegraphics[width=0.9\linewidth]{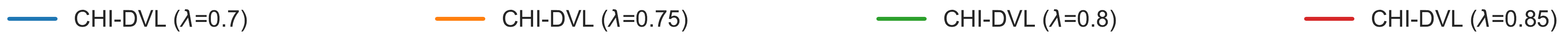}
      \vspace{-2.0mm}
\end{center}
\caption{Offline RL: Ablating the temperature parameter for \fdvl (Chi-square). The plot shows the effect of temperature parameters on learning performance.}
\label{fig:offline_rl_chi_sensitivity_plots}
\end{figure}
\subsection{Sensitivity of $f$-\texttt{DVL} (online) with varying $\lambda$  on MuJoCo tasks}
We ablate the temperature parameter $\lambda$ for online RL experiments using \fdvl in Figure~\ref{fig:online_rl_chi_sensitivity_plots} (chi-square) and Figure~\ref{fig:online_rl_tv_sensitivity_plots} (TV). We observe that significantly less hyperparameter tuning is required compared to XQL as a single temperature value works well across a broad range of experiments.
\begin{figure}[h]
\begin{center}
    \includegraphics[width=1.0\linewidth]{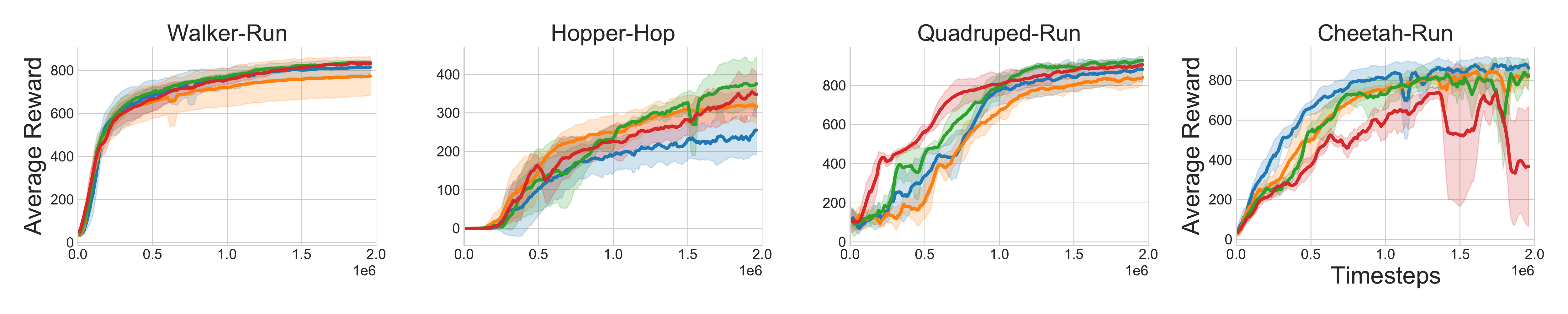}
      \\
       \includegraphics[width=0.9\linewidth]{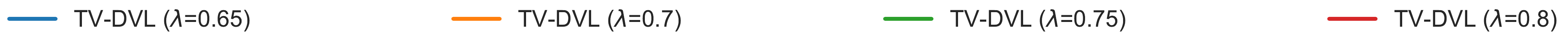}
      \vspace{-2.0mm}
\end{center}
\caption{Online RL: Ablating the temperature parameter for \fdvl (Total variation). The plot shows the effect of temperature parameters on learning performance.}
\label{fig:online_rl_tv_sensitivity_plots}
\end{figure}

\begin{figure}[h]
\begin{center}
    \includegraphics[width=1.0\linewidth]{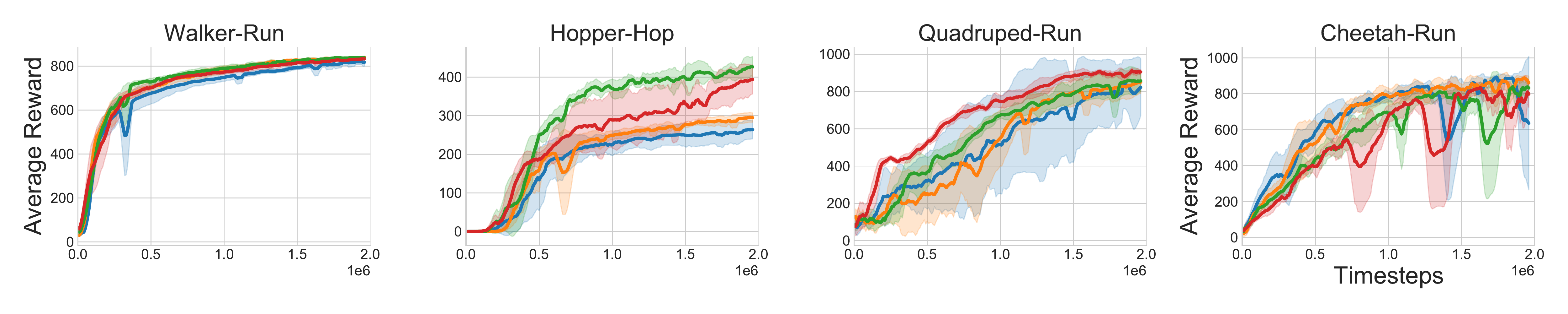}
      \\
       \includegraphics[width=0.9\linewidth]{figures/online_rl_sensitivity/online-rl-tv-sensitivity-legend.pdf}
      \vspace{-2.0mm}
\end{center}
\caption{Online RL: Ablating the temperature parameter for \fdvl (Chi-square). The plot shows the effect of temperature parameters on learning performance.}
\label{fig:online_rl_chi_sensitivity_plots}
\end{figure}

\subsection{Recovering Reward functions from \texttt{ReCOIL}}
\label{ap:recovering_rewards}
We study the quality of reward functions recovered from \texttt{ReCOIL} using the hopper-medium-expert and Walker2d-medium-expert datasets and the setup described in Section~\ref{sec:expr_offline_il}. For all trajectories in this dataset, we calculate the ground truth return (sum of rewards) and the predicted cumulative reward using \texttt{ReCOIL}. The scatter plot in figure~\ref{fig:recovering_mujoco_rewards} shows the correlation between predicted rewards. We note that \texttt{ReCOIL} is an IRL method and suffers from the reward ambiguity problems as rest of the IRL methods--- we can only expect a reward function that induces an optimal policy whose visitation is close to an expert and cannot guarantee that we recover the expert's exact reward function. To test the quality of rewards functions output by IRL methods, Pearson correlation is not the accurate metric and metrics like EPIC~\citep{gleave2020quantifying} might be used instead.

\begin{figure}[H]
\begin{center}
    \includegraphics[width=0.8\linewidth]{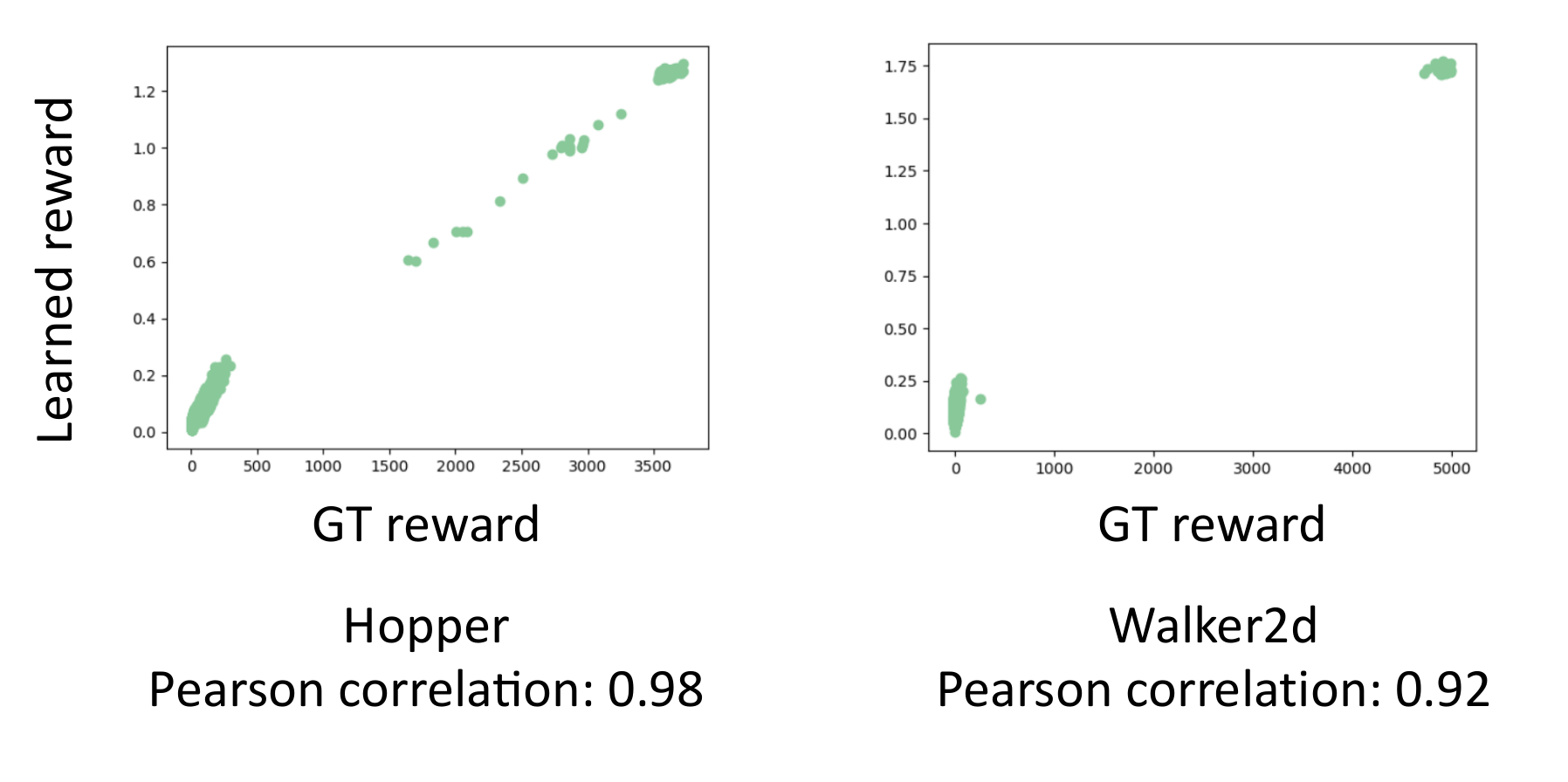}
\end{center}
\caption{Correlation of the rewards inferred by \texttt{ReCOIL} with respect to the ground truth reward function of the expert.}
\label{fig:recovering_mujoco_rewards}
\end{figure}

\end{document}